\pretocmd{\chapter}{}{}{}
\providecommand{\citet}[1]{\citeauthor{#1}\,[\citeyear{#1}]}
\providecommand{\citep}[1]{\cite{#1}}
\newcommand{\axx}[1]{{\color{black}{#1}}}
\newcommand{\ph}[1]{\textbf{#1:}}
\newtheorem{proposition}{Proposition}
\newtheorem{proof}{Proof}
\newcommand{\flag}{Nocomparison} 
\newcommand{\ifComparisonThenElse}[2]
{\ifthenelse{\equal{\flag}{comparison}}{{\color{red}#1}}{#2}}
\newcommand{\gapBeforeSection}{-12pt}
\newcommand{\gapBeforeCaption}{-12pt}
\newcommand{\gapBeforeFigure}{-7pt}
\title{\LARGE SLAP: Simultaneous Localization and Planning Under Uncertainty via Dynamic Replanning in Belief Space}
\author{Ali-akbar Agha-mohammadi, Saurav Agarwal, Sung-Kyun Kim, Suman Chakravorty, and Nancy M. Amato
\thanks{Agha is with NASA-JPL, Caltech. Agarwal and Chakravorty are with the Dept. of Aerospace Eng. and Amato is with the Dept. of Computer Science \& Eng. at Texas A\&M University. Kim is with the Robotics Institute at Carnegie Mellon University, {Emails: \tt\small aliagha@jpl.nasa.gov, } {\{\tt\small sauravag,schakrav,amato\}@tamu.edu, kimsk@cs.cmu.edu}}
}
\begin{document}\fontsize{9.883}{11.8596}\rm 

\maketitle

\begin{abstract}
Simultaneous localization and Planning (SLAP) is a crucial ability for an autonomous robot operating under uncertainty. In its most general form, SLAP induces a continuous POMDP (partially-observable Markov decision process), which needs to be repeatedly solved online. This paper addresses this problem and proposes a dynamic replanning scheme in belief space. The underlying POMDP, which is continuous in state, action, and observation space, is approximated offline via sampling-based methods, but operates in a replanning loop online to admit local improvements to the coarse offline policy. This construct enables the proposed method to combat changing environments and large localization errors, even when the change alters the homotopy class of the optimal trajectory. It further outperforms the state-of-the-art FIRM (Feedback-based Information RoadMap) method by eliminating unnecessary stabilization steps. Applying belief space planning to physical systems brings with it a plethora of challenges. A key focus of this paper is to implement the proposed planner on a physical robot and show the SLAP solution performance under uncertainty, in changing environments and in the presence of large disturbances,	 such as a kidnapped robot situation.

\begin{IEEEkeywords}
Motion planning, belief space, robust, POMDP, uncertainty, mobile robots, rollout 
\end{IEEEkeywords}

\end{abstract}

\section{Introduction} \label{sec:intro}
\axx{Simultaneous Localization and Planning (SLAP) refers to the problem of (re)planning dynamically every time the localization module updates the probability distribution on the robot's state. For autonomous navigation, solving SLAP and enabling a robot to perform online (re)planning under uncertainty is a key step towards reliable operation in changing real-world environments with uncertainties.} For example, consider a low-cost mobile robot operating in an office-like environment with a changing obstacle map (e.g., office doors switch state between open and closed), and responding to changing goals (tasks) assigned online based on user requests. Such changes in the obstacle map or in the goal location often call for \textit{global replanning} to accommodate changes in the homotopy class of the optimal solution. What makes the problem more challenging is that such replanning has to happen online and fast in partially observable environments with motion and sensing uncertainties.

\begin{figure}[ht]
	\centering
	\subfigure[\axx{Belief tree: forward construction} ] {\includegraphics[width=0.76\linewidth]{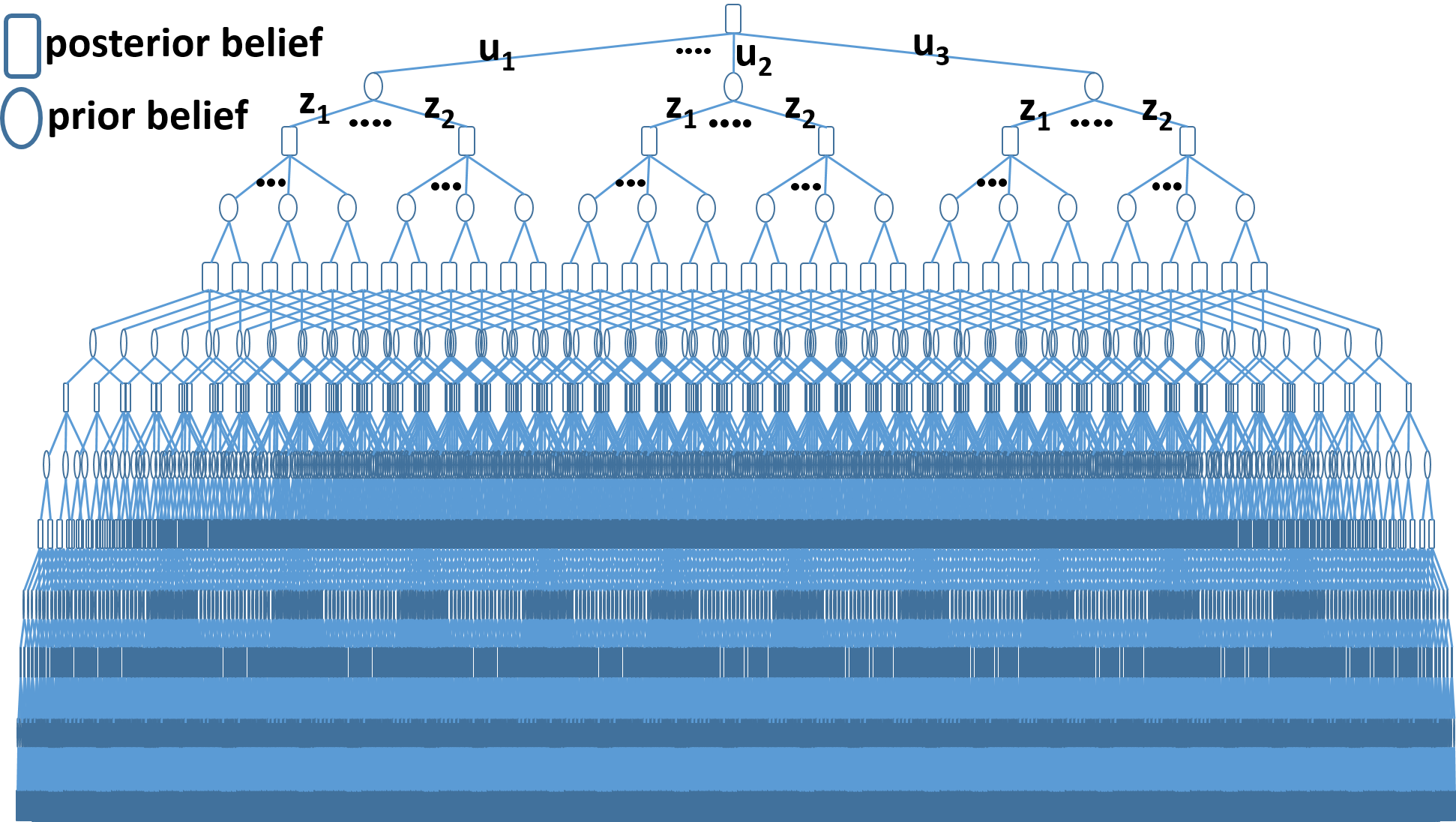}
	\label{fig:curseOfHistory}}
	\subfigure[\axx{Belief graph: backward construction} ] {\includegraphics[width=0.67\linewidth]{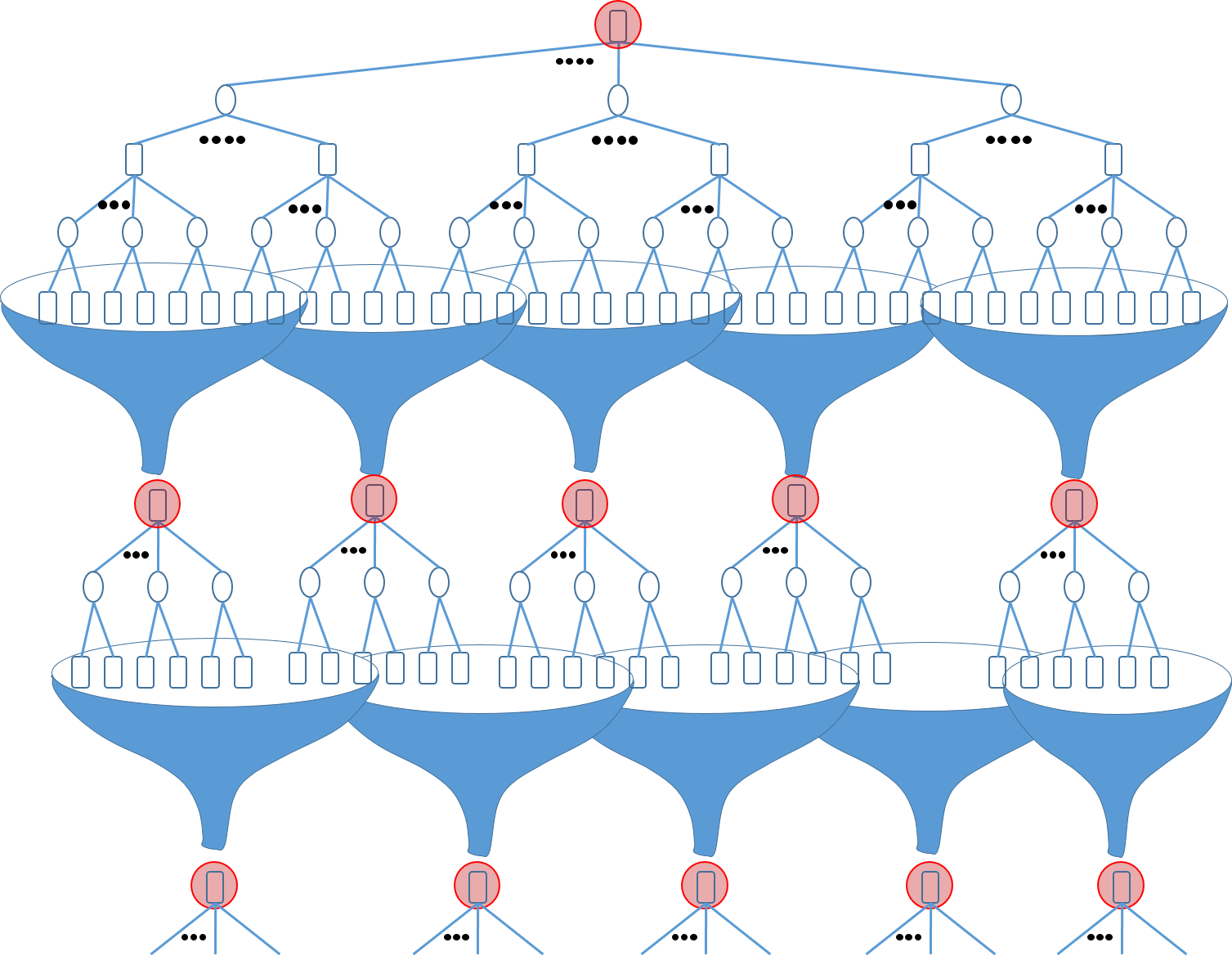} \label{fig:tree-with-funnels}}
	\caption{\axx{(a) This figure depicts a typical search tree in belief space, corresponding to a very small problem with 3 actions $ \mathbb{U}=\{u_{1},u_{2},u_{3}\} $ and two observations $ \mathbb{Z}=\{z_{1},z_{2}\} $. Each posterior belief (probability distribution) branches into $ |\mathbb{U}| $ number of priors and each prior belief branches into $ |\mathbb{Z}| $ posteriors, and thus the tree grows exponentially in the the search depth (referred to as the \textit{curse of history}). (b) This figure depicts the idea of using funnels (local feedback controllers) in belief space that can break this exponential growth by funneling a large set of posteriors into a pre-computed beliefs (in red circles). Thus a graph is formed in belief space with funnels as edges and the pre-computed beliefs as nodes. This graph grows linearly with the search depth.}}
\end{figure}

In general, decision making and control under uncertainty are ubiquitous challenges in many robotic applications. For an autonomous robot to operate reliably, it needs to perceive sensory measurements, infer its situation (state) in the environment, plan, and take actions accordingly. In partially-observable environments, where the state of the system cannot be determined exactly (due to imperfect and noisy measurements), a filtering module (e.g., Kalman filter) can provide an estimate of the state, i.e., a probability distribution function (pdf) over all possible system states. This pdf describing the localization uncertainty is referred to as the \textit{belief} or \textit{information-state}. \axx{At each time-step, actions are chosen based on the robot's belief}. To find the optimal policy that maps beliefs to actions, we cast the problem in its principled form as a Partially Observable Markov Decision Process (POMDP) problem \cite{Smallwood73,Kaelbling98}.

There are a number of challenges in dealing with POMDPs:
\begin{itemize}
	\item \textit{curse of dimensionality} \cite{Pineau03}, which refers to the high dimensions of the belief space. If the underlying robotic system evolves in a discrete grid world with $ n $ cells, the corresponding belief space is an $ n $-dimensional continuous space.
	\item \textit{curse of history} \cite{Pineau03,Ali13-IJRR}, which refers to the exponential growth of the number of future outcomes in the search depth (see Fig. \ref{fig:curseOfHistory}).
\end{itemize}
Methods such as \cite{Pineau03, Ali13-IJRR, Smith05-HSVI, Spaan05, Kurniawati08-SARSOP, kurniawati11-Migs, Grady2013AMA, littlefield2015importance, bai2015intention, patil2015scaling,silver2010monte} alleviate these issues and take POMDPs to more challenging and realistic domains. In this paper, we consider a class of POMDPs that commonly arise in modeling the SLAP problem. \axx{The settings are similar to the ones used in KF-based localization literature \cite{Thrun2005,dissanayake2001solution}, such as} \textit{(i)} the system model is given as differentiable nonlinear equations, \textit{(ii)} the state/action/observation spaces are continuous, and \textit{(iii)} belief is unimodal and well-approximated by a Gaussian.

In addition to the above-mentioned challenges, when dealing with physical systems, POMDPs need to cope with discrepancies between real models and the models used for computation, e.g., discrepancies due to changes in the environment map or due to intermittent sensor/actuator failures. Dynamic replanning under uncertainty is a plausible solution to compensate for deviations caused by such discrepancies.

To enable an online replanning scheme for SLAP, we rely on multi-query methods in belief space and specifically the Feedback-based Information Roadmap (FIRM) method, discussed below. The main body of POMDP literature (sampling-based methods in particular) propose single-query solvers, i.e., the computed solution depends on the system's initial belief \cite{Prentice09,Berg11-IJRR, kurniawati11-Migs}. Therefore, in replanning (from a new initial belief) almost all the computations need to be reproduced, which limits their usage in solving SLAP where dynamic replanning is required. However, multi-query methods such as FIRM provide a construction mechanism, independent of the initial belief of the system (Fig. \ref{fig:tree-with-funnels} and \ref{fig:funnel-FIRM}), making them suitable methods for SLAP. The original FIRM framework provides a reliable methodology for solving the problem of motion planning under uncertainty by reducing the intractable dynamic programming (DP) to a tractable DP over the nodes of the FIRM graph. In this paper, we extend our previous work on FIRM by proposing a dynamic replanning scheme in belief space that enables online replanning for real world applications in mobile robotics. This extension leads to intelligent robot behaviors that provably takes the solution closer to the optimal solution and guarantees that success probability only increases. 

In addition to the proposed algorithms, an emphasis of this paper is on the implementation of the proposed SLAP solution on a physical robot. We investigate the performance of the proposed method and demonstrate its ability to cope with model discrepancies, such as changes in the environment and large deviations which can globally change the plan by changing the homotopy class of the optimal solution. We believe these results lay the ground work for advancing the theoretical POMDP framework towards practical SLAP applications and achieving long-term robotic autonomy. 

\vspace{-8pt}
\subsection{Related Work}\label{subsec:RelatedWork}
Online replanning in belief space is an important capability to solve the SLAP problem for two main reasons. First, belief dynamics are usually more random than state dynamics because the belief is directly affected by the measurement noise. Therefore, a noisy measurement or spurious data association can cause large changes in the belief. Second, in practical applications, discrepancies between real and computational models or changes in the environment can cause the belief to occasionally show off-nominal behaviors. Online replanning while localizing equips the system with the ability to recover from such situations.

\ph{Active localization}
Solving the planning problem alongside localization and mapping has been the topic of several recent works (e.g, \cite{hcarrillo-icra14-optim-active-slam}, \cite{hcarrillo-icra12-active-slam}, \cite{lcarlone-2014-jirs}, \cite{kim-eustice-pdn}). The method in \cite{GBS-IJRR-2015} presents an approach to uncertainty-constrained simultaneous planning, localization, and mapping for an unknown environment. In \cite{lcarlone-icra14}, the authors propose an approach to actively explore unknown maps while imposing a hard constraint on the localization uncertainty at the end of the planning horizon. Our work assumes the environment map is known, formulates the problem in its most general form (POMDP), and focuses on online replanning in the presence of obstacles which may possibly change over time.

\ph{General-purpose POMDP solvers}
There is a strong body of literature on general purpose POMDP solvers (e.g., \cite{kurniawati2012global}, \cite{Bai10-contstatePOMDP}, \cite{chaudhari-ACC13}, \cite{smith2004heuristic}, \cite{ong2010planning}), divided into offline and online solvers. Offline solvers \cite{shani2013survey} compute a policy over the belief space (e.g., \cite{Pineau03,Spaan05,Smith05-HSVI,Kurniawati08-SARSOP}) and online solvers \cite{Ross_2008online_survey} aim to compute the best action for the current belief by creating a forward search tree rooted in the current belief. In recent years, general-purpose online solvers have become faster and more efficient (e.g., AEMS \cite{Ross_2007_AEMS}, DESPOT \cite{somani2013despot}, ABT \cite{kurniawati-isrr13}, and POMCP \cite{silver2010monte}). However, direct application of these methods to SLAP-like problems is a challenge due to \textit{(i)} expensive simulation steps, \textit{(ii)} continuous, high-dimensional spaces, and \textit{(iii)} difficult tree pruning steps. We discuss these three challenges in the following paragraphs.

Forward search methods rely on simulating the POMDP model forward in time and creating a tree of possible scenarios in future. At each simulation step $ (x',z,c)\sim\mathcal{G}(x,u) $, the simulator $ \mathcal{G} $, simulates taking action $ u $ at state $ x $ and computes the next state $ x' $, observation $ z $, and the cost $c$ and constraints of this transition. When dealing with Games (e.g., Go) or traditional POMDP problems (e.g., Rock Sample \cite{Ross_2008online_survey}), the forward simulation step and cost computation is computationally very inexpensive. However, in SLAP-like problems, computing costs are typically orders of magnitude more expensive (e.g., computing collisions between the robot and obstacles).

The second challenge in applying tree-based methods to SLAP is that the tree-based methods require the simulator to revisit the same belief many times to learn its value. However, in SLAP-like problems with continuous state/action/observation spaces, the chances of visiting the same belief is almost zero. Even in the discretized version, the number of simulation steps $ (x',z,c)\sim\mathcal{G}(x,u) $ along the tree is of the order $ n_{coll} = O(n_{b}(|\mathbb{U}||\mathbb{Z}|)^{d}) $. For the problem in this paper, typical values are in the order of: $ n_{b}>10 $ for the number of particles used to represent the belief for accurate collision checking. $ d = 10^4 $ steps for path length (search tree depth), $|\mathbb{U}| = 50^2$ and $ |\mathbb{Z}|= 100^{10}$ for discretization of the 2-dimensional and 10-dimensional control and observation spaces. Thus, the chances of revisiting the same belief in the discretized version of the problem are quite low.

Finally, unlike many traditional POMDP domains where the domain structure (e.g., game rules) prunes a lot of tree branches, pruning the search tree is much more challenging (if at all possible) in SLAP-like problems, where there is a terminal belief to achieve, no discount factor, and strong dependence between future costs and the past costs.

\begin{figure}[t!]
	\vspace{-10pt}
	\centering
	\includegraphics[width=2.8in]{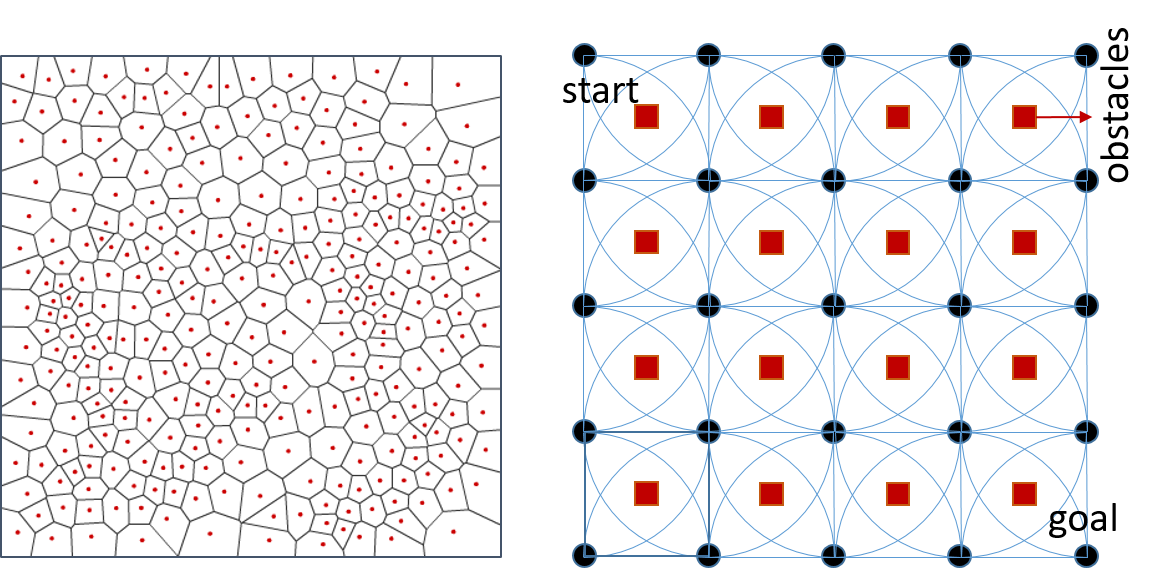}
	\caption{\axx{Example environments with many homotopy classes. Left figure is a Voronoi graph around point obstacles. Right figure is a simple lattice around obstacles. The number of paths from start to goal is in the order of $ g^d $, where $ g $ is the branching factor of the graph, and $ d $ is the search depth. In the right figure, even if we assume the robot can only go right and down directions in the lattice, the number of paths would be in the order of $ 4^8 $.}}
	\label{fig:homotoypy}
\end{figure}

To handle these challenges, different methods limit the scope of POMDPs to smaller classes, such as POMDPs with differentiable models and Gaussian noise. The following include several example of such methods.

\ph{Local optimization-based methods}
In these methods, the optimization variable is typically an open-loop sequence of actions and the optimization converges to the local optimum around the initial solution. The challenge with these methods (e.g., \cite{Platt10,Berg12AAAI}) is that they require an initial trajectory. However, finding a good initial solution could be as difficult as the original problem depending on the environment and observation model. For example, Fig. \ref{fig:homotoypy} shows environments with thousands of homotopy classes and local minima. In contrast, the proposed method in this paper does not require an initial solution, and is not sensitive to local minima. Further, typically the planning horizon in the local optimization-based methods is short since the computational complexity of the optimization grows (often super-linearly) in the planning horizon. Local optimization-based methods can be used in a Receding Horizon Control (RHC) scheme as follows: At every step a sequence of optimal actions is computed within a limited horizon of $ T $ steps. Then, only the first action is executed and the rest are discarded. The executed action takes the system to a new point, from which a new sequence of optimal actions is recomputed within horizon $ T $. This process is repeated until the system reaches the goal region. The RHC framework was originally designed for deterministic systems and its extension to stochastic systems and belief space planning is still an open problem. A direct approach is to replace the uncertain quantities (such as future observations) with their nominal values (e.g., most likely observations), and then treat the stochastic system as a deterministic one and use it in an RHC framework (e.g., \cite{Erez2010}, \cite{Platt10}, \cite{Chakrav11-IRHC}, \cite{He11JAIR}, \cite{Toit10}). Due to this approximation and the limited optimization horizon, the system may myopically choose good local actions but after a while may find itself in a high-cost region. 
	
\ph{Global Sampling-based methods}
To handle local minima, methods like \cite{Prentice09,Berg11-IJRR,Bry11} extend the traditional deterministic motion planning methods (e.g., PRM and RRT) to belief space. The main challenge is that these belief space planners are single query (the solution is valid for a given initial belief). Thus, when replanning from a new belief, most of the computation needs to be reproduced. In particular, when the planner needs to switch the plan from one homotopy class to another, replanning and finding the right homotopy class becomes challenging (Fig. \ref{fig:homotoypy}). The proposed method in this paper can inherently deal with changes in the homotopy class due to its multi-query graph structure.

\ph{Application of POMDPs to physical robots}
From an experimental point of view, a few recent works have focused on applying belief space planning to real-world robots. \cite{arne-brock-icra2014} implements a belief planner on a mobile manipulator with time of traversal as a cost metric. \cite{kaelbling2012integrated} is an integrated task and motion planner, utilizing symbolic abstraction, whose performance is demonstrated on a PR2 robot tasked with picking and placing household objects. In \cite{alterovitz-iros14}, the authors develop a stochastic motion planner and show its performance on a physical manipulation task where unknown obstacles are placed in the robot's operating space and the task-relevant objects are disturbed by external agents. \cite{bai2015intention} extends the application of POMDP methods to autonomous driving in a crowd by predicting pedestrian intentions. Authors in \cite{Marthi_RSS12_PR2} apply a POMDP-based planner to navigate a PR2 robot in an office-like environment. The paper proposes an elegant way of incorporating environment changes into the planning framework and can cope with changes in the homotopy class. The main difference with our method is that in \cite{Marthi_RSS12_PR2} the authors address the uncertainty in obstacles and assume that the robot's position in the map is perfectly known. Compared to above methods, the work in this paper extends the application of POMDPs to continuous state/action/observation spaces with very long planning horizons. It further demonstrates the real-time replanning capability in belief space with changing environment while incorporating accurate risk and collision probabilities in the planning framework.

\vspace{-10pt}
\subsection{Highlights and Contributions}\label{subsec:contributions}
This paper extends our previous work in \cite{Ali-FIRM-ICRA14}. Compared to  \cite{Ali-FIRM-ICRA14}, we discuss in detail the concept of rollout-based belief space planning, policy execution, and present extensive simulation and experimental results to demonstrate the performance improvements made by using the proposed method. We also present analyses that guarantee a lower execution cost and failure probability as compared to the nominal FIRM policy. The main highlights and contributions of this work are as follows.

\ph{Online belief planner to enable SLAP}
We propose an online planning method in belief space. The method lends itself to the class of rollout-based methods \cite{Bertsekas07} and extends them to the belief space. Compared to belief space RHC methods, this method is not limited to a horizon, does not get stuck in local minima, and does not assume deterministic future observations.

\ph{Online switching between homotopy classes}
In motion planning, a homotopy class of paths \cite{Bhattacharya11} refers to a set of paths that can be deformed into each other by continuous transformation (bending and stretching) without
passing through obstacles (Fig.~\ref{fig:homotoypy}). A unique feature of the presented method is that it is capable of updating the plan globally and online, even when the homotopy class of the optimal solution has changed. This feature allows the proposed planner to work in changing environments and cope with large deviations.

\newcommand{\ScenarioFigSize}{1.8in}
\begin{figure*}[ht!]
	\centering
	\subfigure[A simple scenario with a FIRM roadmap, robot and environment as depicted.]{\includegraphics[width=\ScenarioFigSize]{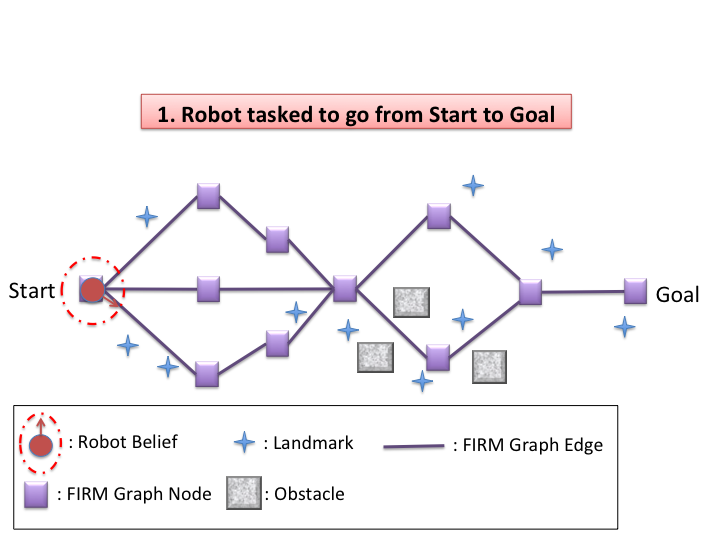}}
	\hspace{0.1in}
	\subfigure[The rollout policy is computed periodically. Four candidate edges (dashed lines), inluding the current edge, are compared.]{\includegraphics[width=\ScenarioFigSize]{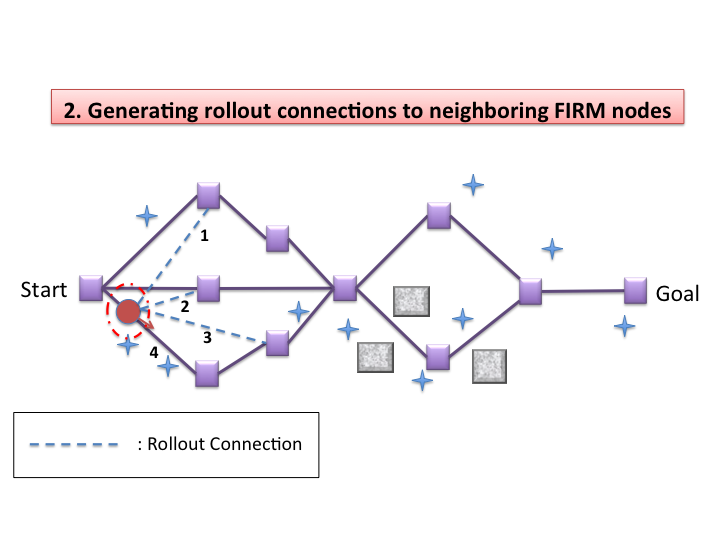}}
	\hspace{0.1in}
	\subfigure[In clutter-free regions, rollout takes a shorter route (edge 3), increasing performance and speed while loosing certainty (i.e., skipping node stabilization).]{\includegraphics[width=\ScenarioFigSize]{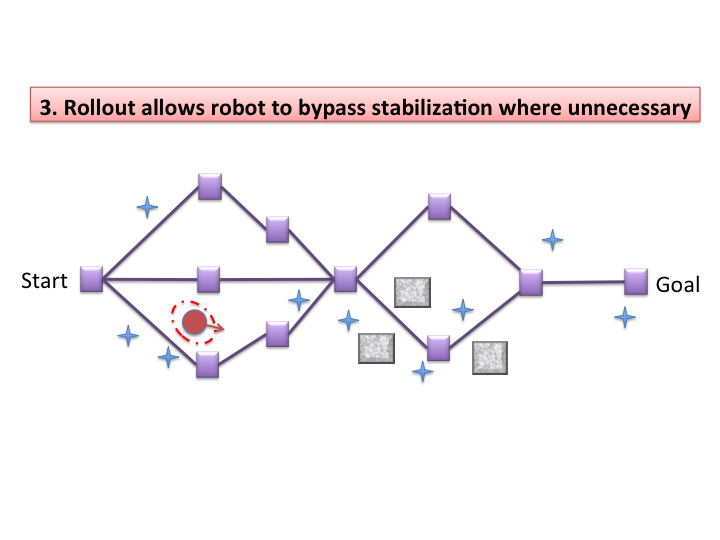}}
	\hspace{0.1in}
	\subfigure[While completing edge 3, the new rollout further cuts down task execution time by taking shorter route through a newly computed rollout edge 2.]{\includegraphics[width=\ScenarioFigSize]{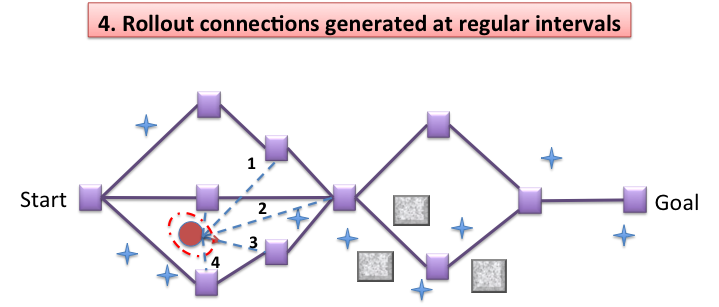}} 
	\hspace{0.1in}
	\subfigure[The robot is approaching the cluttered region. As needed the planner will slow the robot down to trade performance with certainty.]{\includegraphics[width=\ScenarioFigSize]{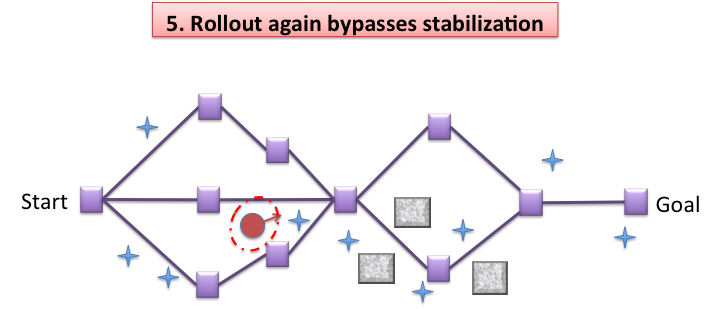}}
	\hspace{0.1in}
	\subfigure[\axx{Stabilization is performed because the reduced localization uncertainty (smaller covariance) leads to higher success probability in this case.}]{\includegraphics[width=\ScenarioFigSize]{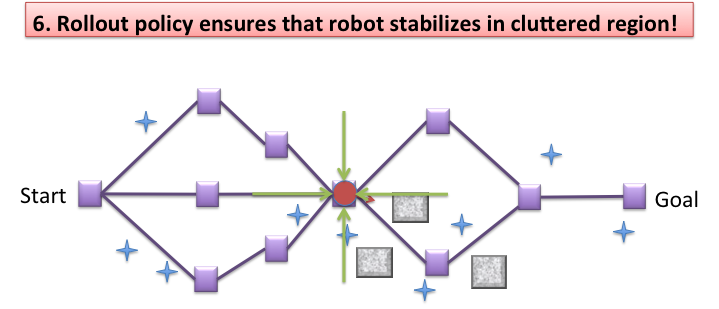}} 
	\hspace{0.1in}
	\subfigure[Stabilization occurs again as robot is still passing through the narrow passage.]{\includegraphics[width=\ScenarioFigSize]{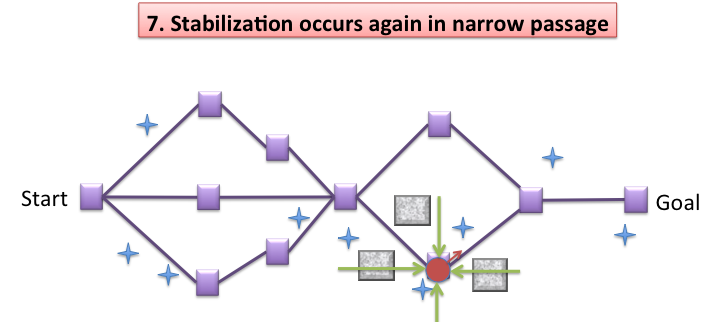}}
	\hspace{0.1in}
	\subfigure[New rollout connections allow bypassing stabilization.]{\includegraphics[width=\ScenarioFigSize]{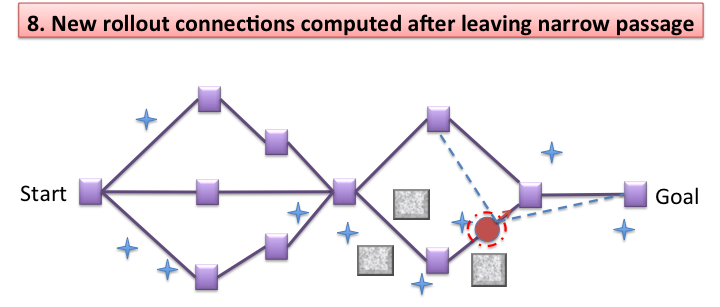}}
	\hspace{0.1in}
	\subfigure[The robot is approaching the goal.]{\includegraphics[width=\ScenarioFigSize]{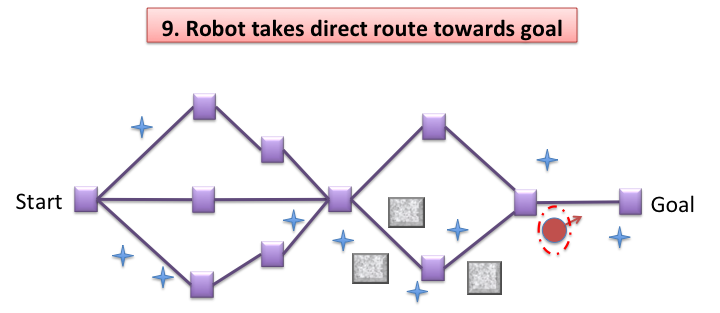}}
	\caption{A representative scenario depicting how rollout-based planner achieves higher performance compared to the standard FIRM algorithm while guaranteeing robustness. The 9 scenes depict different stages of task execution as the robot moves from the start to goal location.}
	\label{fig:rollout-cartoon}
\end{figure*}

\ph{Selective stabilization policy}
The proposed method supercedes a state-of-the-art method, FIRM \cite{Ali13-IJRR}, in performance, success probability, and ability to cope with changing environments. It builds upon a FIRM and inherits the desired features of the FIRM framework such as robustness, scalability, and the feedback nature of the solution. In addition, it significantly reduces the number of belief node stabilization (required in the original FIRM method) by eliminating the unnecessary ones. Thus the proposed method can be viewed as FIRM with a selective stabilization policy. In the original FIRM framework, at every node the system needs to steer its sensory information to reach the belief node (each graph node is a belief, i.e., a particular localization uncertainty). In this work, however, using an online local planning method, we achieve a locally optimal tradeoff between stabilization to a node (i.e., exploring the information space to reach the exact belief node) and moving forward towards the goal (exploiting the gradient of local cost function). As a result of this optimal exploration-exploitation tradeoff, interesting behaviors emerge out of the algorithm without encoding any heuristic. These behaviors (locally) optimally \textit{trade-off information and energy}. For example, consider a case when the objective is to ``reach a goal while minimizing the probability of colliding with obstacles''. In that case, in the open areas where there are no narrow passages, the system bypasses the belief node stabilizations. It speeds up and does not waste any time gathering information because reducing its uncertainty in obstacle-free regions does not have much benefit. However, once it is faced with obstacles and narrow passages, it automatically decides to perform stabilization until the uncertainty is small enough to safely traverse the narrow passage. Fig. \ref{fig:rollout-cartoon}, shows this phenomenon pictorially.

\ph{Performance guarantees}
We provide lower bounds on the performance of the method and show that in stationary environments, the performance and success probability of the proposed method always exceed (or in the worst case are equivalent to) those of the FIRM method.

\ph{Applications to physical systems} Among the set of methods that cope with continuous state/action/observation POMDP, only a very small number of methods have been applied to physical systems due to their computational complexity when dealing with real-world robotics problems. An important contribution of this work is to implement a continuous state/action/observation POMDP solver on a physical robot in a real-world office-like environment.

\section{Problem formulation} \label{sec:planning-under-incertainty}
In motion planning under uncertainty, we are looking for a policy $\pi$ that maps the available system information to an optimal action. Let $x_k\in\mathbb{X}$, $u_k\in\mathbb{U}$, and $w_k$ denote the system's state, control action, and motion noise at the $k$-th time step. Let us denote the state evolution model by $x_{k+1} = f(x_k,u_k,w_k)$. In a partially observable system, the system state is not perfectly known. Rather, the state needs to be inferred from noisy measurements. Let us denote the sensor measurement (or observation) vector by $z_k\in\mathbb{Z}$ at the $k$-th time step and the measurement model by $z_k = h(x_k,v_k)$, where $v_k$ denotes sensing noise. All the data that is available for decision making at the $ k $-th time step is the history of observations and controls: $\mathcal{H}_{k}=\{z_{0:k},u_{0:k-1}\}=\{z_{0},z_{1},\cdots,z_{k},u_{0},\cdots,u_{k-1}\} $. 

System \textit{belief} or \textit{information-state} $ b_k\in\mathbb{B} $ compresses the data history $ \mathcal{H}_{k} $ into a conditional probability distribution over all possible system states $b_{k}:=p(x_{k}|\mathcal{H}_{k}) $. In Bayesian filtering, belief can be computed recursively based on the last action and current observation $b_{k+1}=\tau(b_k,u_k,z_{k+1})$ \cite{Bertsekas07},\cite{Thrun2005}:\vspace{-3pt}
\begin{align}
\nonumber
b_{k+1}&=\alpha{p(z_{k+1}|x_{k+1})\int_{\mathbb{X}}p(x_{k+1}|x_{k},u_{k})b_{k}dx_{k}}=:\tau(b_k,u_k,z_{k+1}),
\end{align}
where, $ \alpha={p(z_{k+1}|\mathcal{H}_{k},u_{k})}^{-1} $ is the normalization constant. Once the belief is formed, a policy $ \pi_{k} $ generates the next action, i.e., $ u_{k}=\pi_{k}(b_{k}) $. The optimal policy $ \pi_{k} $ is the solution of a POMDP, which is intractable over continuous state/action/observation spaces. 

SLAP is the problem of online planning under uncertain robot poses in a known environment with changing obstacles and goal locations. SLAP entails dynamic risk assessment and planning every time the localization module updates the probability distribution on the robot's state, or every time the environment map changes. We refer to the POMDP problem induced by SLAP, as SLAP-POMDP.

\ph{SLAP-POMDP} In SLAP-POMDP, the system state is the robot pose (e.g., location). In SLAP-POMDP, the state, action, and observation spaces are continuous, and motion model $ f $ and sensor model $h$ are locally differentiable nonlinear functions. The risk is critical throughout the entire plan, and hence there is no discount factor in SLAP-POMDP, as opposed to traditional POMDP problems. Instead, there exists a termination set $B^{goal}\subset\mathbb{B}$ such that $c(b,u)=0$ for all $b\in B^{goal}$. Risk typically represents probability of violating constraints $ X^{obst}\!\!,U^{const}\!\!\! $ on state (e.g., collision with obstacles) and actions (e.g., actuator saturation).

\ph{SLAP problem} At \textit{each time-step}, re-solve the SLAP-POMDP from a new belief $ b_0 $ based on the updated constraint set $ X^{obst} $ and updated goal regions $ B^{goal} $.\vspace{-7pt}
\begin{align}\label{eq:SLAP-definition}
\nonumber &~~~~~~~~~~\pi(\cdot) = \arg\min_{\Pi}\mathbb{E}\left[\sum\limits_{k=0}^{\infty}c(b_k,\pi_{k}(b_k))~|~X^{obst},B^{goal}\right]\\
\nonumber &~~s.t.~~b_{k+1}=\tau(b_k,\pi_{k}(b_{k}),z_{k}),~~~z_{k}\sim p(z_{k}|x_{k})\\
\nonumber &~~~~~~~~x_{k+1}=f(x_k,\pi_{k}(b_{k}),w_{k}),~~~w_{k}\sim p(w_{k}|x_{k},\pi_{k}(b_{k}))\\
\nonumber &~~~~~~~~x_{k}\notin X^{obst},~~~u_k\notin U^{const}\\
&~~~~~~~~c(b,\cdot)=0, \forall b\in B^{goal},~~~~c(b,\cdot)>0, \forall b\notin B^{goal}
\end{align}
Following Kalman filter-based robot localization literature (which are widely applied to physical robots), in this paper we restrict our scope to Gaussian beliefs. We also assume there is an upper bound on the magnitude of environment changes at each step. More precisely, when $ X^{obst} $ is updated, there is an upper bound on the number of affected graph edges (see Fig. \ref{fig:tree-with-funnels}).

\section{\axx{FIRM overview}}\label{subsec:FIRM-overview}
In this section, we briefly describe the abstract framework of Feedback-based Information RoadMap (FIRM) followed by a description of its concrete implementation in our experiments. We refer the reader to \cite{Ali13-IJRR,Ali11-FIRM-IROS} for more in-depth descriptions.

FIRM is a framework designed to reduce a class of intractable continuous POMDPs to tractable problems by generating a representative graph (PRM: Probabilistic Roadmap Method) in the belief space. Similar to PRMs \cite{Kavraki96} where the \textit{solution path} is a concatenation of local paths, in FIRM, the \textit{solution policy} is a concatenation of local policies. Every node in a FIRM graph is a small region $B=\{b : \|b-\grave{b}\|\leq \epsilon\}$ around a sampled belief $ \grave{b} $. We denote the $ i $-th node by $ B^{i} $ and the set of nodes by $ \mathbb{V}=\{B^{i} \} $. Each edge in a FIRM graph is a closed-loop local controller whose goal is to steer the belief into the target node of the edge. An edge controller connecting nodes $ i $ and $ j $ is denoted by $ \mu^{ij} $ and the set of edges by $ \mathbb{M}=\{\mu^{ij} \} $. \axx{An analogy for each local controller is a ``funnel in belief space". As depicted in Fig. \ref{fig:funnel-FIRM}, each funnel steers the set of beliefs to a milestone belief. Further, using the slide-funnel composition, we can create sparse graphs of funnels as shown in Fig. \ref{fig:funnel-FIRM}. A basic instantiation of the funnel in belief space is a stationary Linear Quadratic Gaussian (SLQG) controller (see Appendix C in \cite{Ali13-IJRR}). A basic instantiation of the slide in belief space is a Time-Varying Linear Quadratic Gaussian (TV-LQG) controller (see Appendix B in \cite{Ali13-IJRR}).}

\begin{figure}[ht]
	\centering
	\subfigure[\axx{Belief funnel}]{\includegraphics[width=1in]{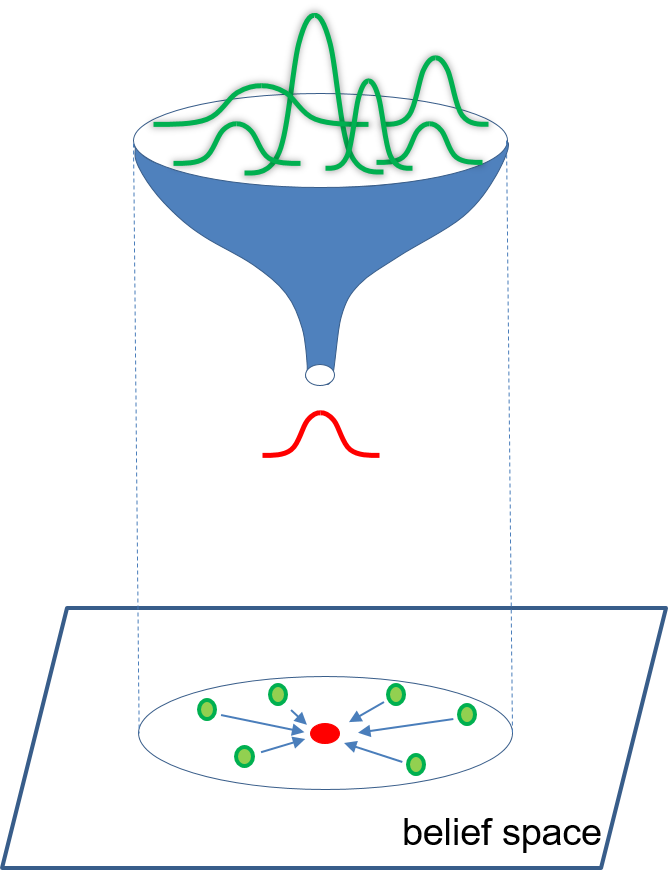}\label{fig:belief-funnel}}
	\subfigure[\axx{Funnel chain}]{\includegraphics[width=0.8in]{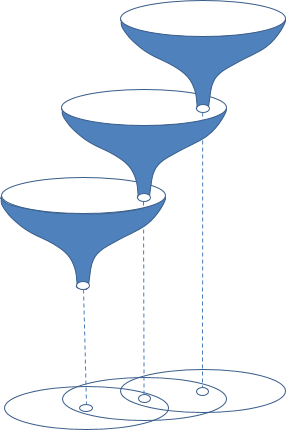}}
	\subfigure[\axx{Funnel graph}]{\includegraphics[width=1.2in]{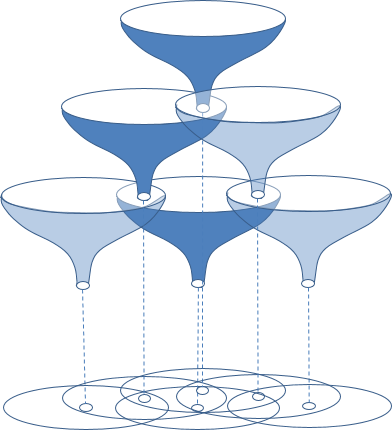}}
	\subfigure[\axx{Funnel-slide-funnel}]{\includegraphics[width=1.2in]{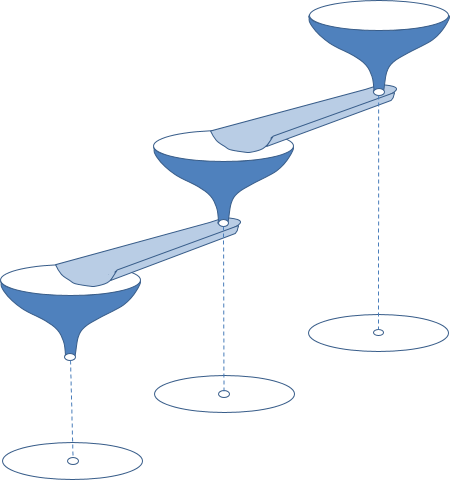}}
	\subfigure[\axx{FIRM: graph of funnel-slide-funnel}]{\includegraphics[width=1.7in]{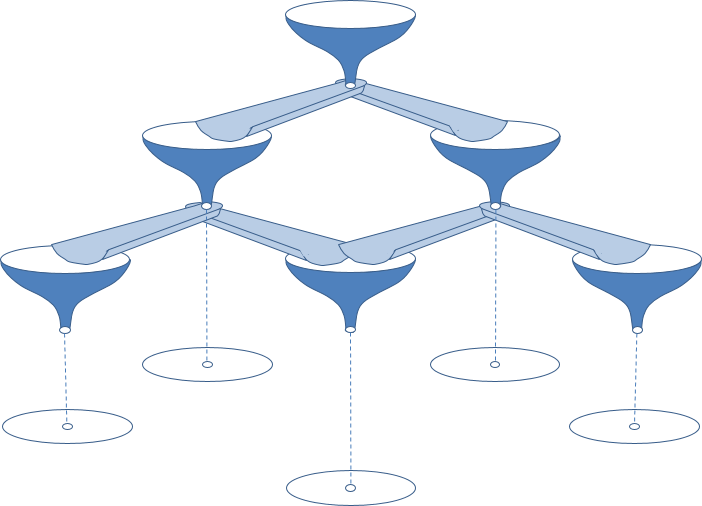}\label{fig:FIRM-slide-funnel-graph}}
	\caption{An extension of sequential composition methods \cite{burridge1999sequential} to belief space. (a) A funnel in belief space that collapses a set of Gaussian distribution to a particular Gaussian distribution, referred to as the graph node or milestone. The 2D projection denotes the belief space, where each point represents a full Gaussian distribution. The projection of the mouth of funnel is analogous to its region of attraction in belief space. (b) A chain of funnels to guide the belief towards a goal. (c) A graph of funnels, where the tip of multiple funnels can fall into the region of attraction of a single funnel. (d) For a sparse set of funnels, one can use tracking controllers (slides) to create the funnel-slide-funnel structure. (e) Graph of funnel-slide-funnel: a FIRM graph.}
	\label{fig:funnel-FIRM}
\end{figure}

Given a graph of these local controllers (Fig. \ref{fig:FIRM-slide-funnel-graph}), we can define policy $ \pi^{g} $ on the graph as a mapping from graph nodes to edges; i.e., $ \pi^{g}:\mathbb{V}\rightarrow\mathbb{M} $. $ \Pi^{g} $ denotes the set of all such graph policies. Having such a graph in belief space, we can form a tractable POMDP on the FIRM graph (so-called FIRM MDP):
\vspace{-2pt}
\begin{align}\label{eq:FIRM-MDP}
\pi^{g^{*}}=\arg\min_{\Pi^{g}} \mathbb{E}\sum_{n=0}^{\infty}C^{g}(B_{n},\pi^{g}(B_{n}))
\vspace*{-3pt}
\end{align}
where, $ B_{n} $ is the $ n $-th visited node, and $ \mu_{n} $ is the edge taken at $ B_{n} $. $ C^{g}(B,\mu):=\sum_{k=0}^{\mathcal{T}}c(b_{k},\mu(b_{k})) $ is the generalized cost of taking local controller $ \mu $ at node $ B $ centered at $ b_{0} $. 
We incorporate the failure (collision) set in planning by adding a hypothetical FIRM node $ B^{0} $ to the list of FIRM nodes. Since the FIRM MDP in Eq.~\eqref{eq:FIRM-MDP} is defined over a finite set of nodes, it can be solved by computing the cost-to-go for all graph nodes through the following dynamic programming problem:
\begin{align}\label{eq:FIRM-DP}
\!\!\!\!J^{g}(B^{i}) \!= \!\min_{\mu} \{C^{g}(B^{i},\mu)+\!\sum\limits_{\gamma=0}^{N}\mathbb{P}^{g}(B^{\gamma}|B^{i},\mu)J^{g}(B^{\gamma})\}
\end{align}
and $ \pi^{g}(B^{i}) =\mu^{*}$, where $ \mu^{*} $ is the argument of above minimization. $ \mathbb{P}^{g}(B^{\gamma}|B^{i},\mu) $ is the probability of reaching $ B^{\gamma} $ from $ B^{i} $ under $ \mu $. The failure and goal cost-to-go (i.e., $ J^{g}(B^{0}) $ and $ J^{g}(B^{goal}) $) are set to a suitably high positive value and zero, respectively.

Collision (failure) probability of FIRM starting from a given node $ B^{i} $ can be computed \cite{Ali12-ProbComp-ICRA} as:
\begin{align}\label{eq:prob-failure}
\mathbb{P}(Failure|B^i,\pi^{g}) = 1 - \Gamma_{i}^{T}(I-Q)^{-1}R_{goal},
\end{align}
where, $ \Gamma_{i} $ is a column vector of zeros with only the $ i $-th element set to one. $ Q $ is a matrix, whose $ (i,j) $-th element is $ Q[i,j]=\mathbb{P}(B^{i}|B^{j},\pi^{g}(B^{j})) $ and $ R_{goal} $ is a column vector, whose $ j $-th entry is $ R_{goal}[j]= \mathbb{P}(B^{goal}|B^{j},\pi^{g}(B^{j}))$. It can be shown that FIRM is an anytime algorithm \cite{Ali12-ProbComp-ICRA}, i.e., in a given static environment, by increasing the number of nodes, the cost (e.g., the failure probability) will go down. As will be discussed in the next section, FIRM's failure probability will be an upper bound for the failure probability of the proposed planner.

\subsection{Concrete FIRM instance in our implementation} \label{subsec:FIRM-elements}
Here we discuss the concrete realization of the FIRM graph constructed for conducting the experiments in this paper.

\ph{One-step-cost} 
Although the objective function can be general, the cost function we use in our experiments includes the localization uncertainty, control effort, and elapsed time.
\begin{align}\label{eq:one-step-cost}
c(b_{k},u_{k})=\zeta_{p}\text{tr}(P_{k})+\zeta_{u}\|u_{k}\|+\zeta_{T},
\end{align}
where, $ \mathrm{tr}(P_{k}) $ is the trace of estimation covariance as a measure of localization uncertainty. The norm of the control signal $ \|u_{k}\| $ denotes the control effort, and $\zeta_{T}$ is present in the cost to penalize each time lapse. Coefficients $ \zeta_{p} $, $ \zeta_{u} $, and $ \zeta_{T} $ are user-defined task-dependent positive scalars to combine these costs to achieve a desirable behavior. In the presence of constraints (such as obstacles in the environment), we assume the task fails if the robot violates these constraints (e.g., collides with obstacles). Therefore, collision and goal belief are terminal states such that $ J^{g}(B^{0})=J^{g}(Fail)$ is set to a suitably high cost-to-go and $ J^{g}(B^{goal})=0 $. Note that typically the one-step-cost in Eq.~\eqref{eq:one-step-cost} is defined in the state space (i.e., cost of taking action $ u $ at state $ x $). While our cost can be written as a state space cost, writing it directly in belief space better demonstrates the “active” localization aspect of the work (in the sense of minimizing the uncertainty in the localization belief) along the plan.

\ph{Steering localization covariance} To construct a FIRM graph, we first need to sample a set of stabilizers (belief steering functions). Each stabilizer is a closed-loop controller, whose role is to drive the localization uncertainty (or belief) to a FIRM node. A stabilizer consists of a filter and a separated controller \cite{Kumar-book-86}. The filter governs the belief evolution and the separated-controller generates control signals based on the available belief at each time step \cite{Kumar-book-86}. To design these steering laws, we first sample a set of points $ \mathcal{V}=\{\mathbf{v}^{i}\} $ in the robot's state space. Then, for with each point, we construct a stabilizer (i.e., funnel) \cite{Ali13-IJRR}. In the vicinity of each node $ \mathbf{v}^{j} $, we rely on the Stationary Kalman Filter (SKF) as the steering filter (which is constructed by linearizing the system about the target point $ \mathbf{v}^{j} $). It can be shown that for an observable system, the covariance under the $ j $-th SKF approaches to covariance $ P^{+^{j}}_{s} $, which can be efficiently computed by solving a corresponding Discrete Algebraic Riccati Equation \cite{Arnold84DARE}.

\ph{Steering localization mean} While steering belief toward node $ B^{j} $, separated-controller $ \mu^{ij} $ is responsible for generating the control signals based on the available belief, i.e., $ u_{k}=\mu^{ij}(b_{k}) $. The iRobot Create (used in our experiments) is a nonholonomic robot and is modeled as a unicycle (see Section \ref{subsec:robotModel}). Thus, to steer the estimation mean toward the target node $ \mathbf{v}^{j} $, one needs to use controllers designed for stabilizing nonholonomic systems (e.g., \cite{Oriolo02-DFL}, \cite{murray1993nonholonomic}, \cite{samson1991feedback}). However, the randomness of the estimation mean (resulting from randomness of observations) calls for a controller that can perform such stabilization under uncertainty. To this end, we implemented different controllers including polar coordinate-based controller \cite{deLuca2001control} and Dynamic Feedback Linearization-based controller \cite{Ali12-DFL-IROS}. Observing the behavior of different controllers, we adopted a variant of the Open-Loop Feedback Control (OLFC) scheme \cite{Bertsekas07} for stabilization purposes. In this variant of OLFC, for a given $ \mathbf{v}^{j} $, we compute an open-loop control sequence from the current estimation mean to $ \mathbf{v}^{j} $. Then, we apply a truncated sequence of the first $ l $ controls ($ l=5 $ in our experiments)\footnote{Only one control (i.e., $ l=1 $) is not enough due to the nonholonomicity of the system.}. This process repeats every $ l $ steps until we reach the vicinity of $ \mathbf{v}^{j} $.

\ph{FIRM graph} Associated with each sample $ \mathbf{v}^{j} $, we can define the belief node $\grave{b}^{j}\equiv(\mathbf{v}^{j},P^{+^{j}}_{s})$. Defining a FIRM node as a ball around this point $ B^{j}=\{b : \|b-\grave{b}^{j}\|\leq \epsilon\} $, we can steer the Gaussian localization uncertainty to this ball with combination of OLFC and SKF. Accordingly, we sample $ N $ FIRM nodes $ \{B^{j}\}_{j=1}^{N} $. 

The SKF/OLFC combination between nodes $ i $ and $ j $ forms the FIRM edge (local controller) and is denoted by $ \mu^{ij} $. We connect each node to its $ k $-nearest neighbors. The set of constructed edges is denoted by $ \mathbb{M}=\{\mu^{ij} \} $. 

Then, we compute and store costs and transition probabilities associated with each edge using offline simulations. Finally, we solve the DP in Eq. \eqref{eq:FIRM-DP} to get the optimal graph cost-to-go values $ J^{g}(B^{i}) $ and policy $ \pi^{g}(B^{i}) $ for all $ i $.

\section{SLAP via Rollout-based Dynamic Replanning in Belief Space}\label{sec:Rollout-policy-for-replanning}
As discussed in Section \ref{sec:planning-under-incertainty}, SLAP in this paper refers to the problem of (re)planning dynamically every time the localization module updates the probability distribution on the robot's state. In this section, we develop an online replanning method in belief space by extending \textit{rollout policy methods} \cite{Bertsekas07} to the stochastic partially-observable setting. In particular, we discuss replanning in the presence of changes in the environment and goal location, large deviations in the robot's location, and discrepancies between real and computational models.  We show that the proposed method increases the performance of FIRM by enabling selective stabilization.

To make the connection with the rollout policy, we re-state the POMDP problem in a more general setting of the time-varying policy.
\vspace{-7pt} 
\begin{align}\label{eq:belief-MDP-timeVarying}
\nonumber &\pi_{0:\infty}(\cdot) = \arg\min_{\Pi_{0:\infty}}\sum\limits_{k=0}^{\infty}\mathbb{E}\left[c(b_k,\pi_{k}(b_k))\right]\\
\nonumber &~s.t.~~~b_{k+1}=\tau(b_k,\pi_{k}(b_{k}),z_{k}),~~~z_{k}\sim p(z_{k}|x_{k})\\
&~~~~~~~~x_{k+1}=f(x_k,\pi_{k}(b_{k}),w_{k}),~~~w_{k}\sim p(w_{k}|x_{k},\pi_{k}(b_{k}))
\end{align}
In Eq. \eqref{eq:belief-MDP-timeVarying}, we seek a sequence of policies $\pi_{0:\infty}=\{\pi_{0}(\cdot),\pi_{1}(\cdot),\pi_{2}(\cdot),\cdots \} $, where $ \pi_{k} $ maps belief $ b_{k} $ to the optimal action $ u_{k} $. $ \Pi_{k} $ is the space of all possible policies at time step $ k $, i.e., $ \pi_{k}\in\Pi_{k} $. In the infinite horizon case, it can be shown that the solution is a stationary policy $ \pi_{s} $, i.e., $\pi_{1}=\pi_{2}=\cdots=\pi_{s} $ and the problem is reduced to Eq. \eqref{eq:SLAP-definition}. However, we keep the time-varying format for reasons that will be made clear below.

As discussed earlier, in the SLAP problem, one needs to re-solve this POMDP ``online'' every time the localization pdf is updated. To handle the computational intractability of the continuous POMDP in Eq. \eqref{eq:belief-MDP-timeVarying}, we re-use computations in an efficient way as will be explained in the next subsection. Here, we first start by discussing the general form of repeated online solutions as an RHC scheme.

\ph{RHC in belief space} Receding horizon control (also referred to as rolling horizon or model predictive control) was originally designed for deterministic systems \cite{garcia1989model} to cope with model discrepancies. For stochastic systems, where the closed-loop (feedback) control law is needed, formulation of the RHC scheme is up for debate \cite{Li02,Hessem03,Shah12,Chakrav11-IRHC}. 
In the most common form of RHC for stochastic systems \cite{Bertsekas07}, the system is approximated with a deterministic one by replacing the uncertain quantities with their typical values (e.g., maximum likelihood value.) In belief space planning, the quantity that injects randomness in belief dynamics is the observation. Thus, one can replace the random observations $ z_{k} $ with their deterministic maximum likelihood value $ z^{ml} $, where $ z_{k}^{ml}:=\arg\max_{z} p(z_{k}|x^{d}_{k}) $ in which $ x^{d} $ is the nominal deterministic value for the state that results from replacing the motion noise $ w $ by zero, i.e., $ x^{d}_{k+1}=f(x^{d}_{k},\pi_{k}(b^{d}_{k}),0) $. The deterministic belief $ b^{d} $ is then used for planning in the receding horizon window. At every time step, the RHC scheme performs a two-stage computation. At the first stage, the RHC scheme for deterministic systems solves an open-loop control problem (i.e., returns a sequence of actions $ u_{0:T} $) over a fixed finite horizon $ T $ as follows:
\vspace{-13pt}
\begin{align}\label{eq:RHC-BeliefSpace}
\nonumber &u_{0:T} = \arg\min_{\mathbb{U}_{0:T}}\sum\limits_{k=0}^{T}c(b^{d}_k,u_{k})\\
\nonumber &~~s.t.~~~~b^{d}_{k+1}=\tau(b^{d}_k,u_{k},z^{ml}_{k+1})\\
\nonumber &~~~~~~~~~~z^{ml}_{k+1}=\arg\max_{z} p(z|x^{d}_{k+1}) \\
&~~~~~~~~~~x^{d}_{k+1}=f(x^{d}_{k},u_{k},0) 
\end{align}
In the second stage, RHC executes only the first action $ u_{0} $ and discards the remaining actions in the sequence $ u_{0:T} $. However, since the actual observation is noisy and is not equal to the $ z^{ml} $, the the belief $ b_{k+1} $ will be different than $ b^{d}_{k+1} $. Subsequently, RHC performs these two stages from the new belief $ b_{k+1} $. In other words, RHC computes an open loop sequence $ u_{0:T} $ from this new belief, and this process continues until the belief reaches the desired belief location. Algorithm \ref{alg:RHC-beliefSpace} recaps this procedure. State-of-the-art methods such as \cite{platt-wafr12-RHC} and \cite{Toit10} utilize RHC in belief space. \cite{Toit10} refers to the method as partially-closed loop RHC (PCLRHC) as it exploits partial information about future observations (i.e., $ z^{ml} $) and does not ignore them.
\begin{algorithm}[h!]
\caption{RHC with most likely observations for partially-observable stochastic systems}\label{alg:RHC-beliefSpace}
\textbf{input}  :  Initial belief $ b_{current}\in\mathbb{X} $, $ B^{goal}\subset\mathbb{B} $\\
{
\While{$ b_{current}\notin B^{goal} $}{
$ u_{0:T} = $ Solve the optimization in Eq.\eqref{eq:RHC-BeliefSpace} starting from $ b^{d}_{0}=b_{current} $;\\
Apply the action $ u_{0} $ to the system;\\
Observe the actual $ z $;\\
Compute the belief $ b_{current} \leftarrow \tau(b_{current},u_{0},z) $;\\
}
}
\end{algorithm}

A known shortcoming of the stated RHC formulation is its limited horizon, which might lead the system to local minima by choosing actions that guide the robot toward ``favorable'' states (with low cost) in the near future followed by a set of ``unfavorable'' states (with a high cost) in the long run. To improve the basic RHC, different variants have been proposed including the ``rollout policy'' \cite{Bertsekas07}. Here, we discuss how they can be extended to and realized in belief space.

\ph{Rollout policy in belief space} Another class of methods that aims to reduce the complexity of the stochastic planning problem in Eq.~\eqref{eq:belief-MDP-timeVarying} is the class of rollout policies \cite{Bertsekas07}, which are more powerful than the described version of RHC in the following sense: First, they 
do not approximate the system with a deterministic one. Second, they avoid local minima using a suboptimal policy that approximates the true cost-to-go beyond the horizon. This policy is referred to as the ``base policy'' and denoted by $ \widetilde{J} $. Formally, at each step of the rollout policy scheme, the following closed-loop optimization is solved:
\begin{align}\label{eq:Rollout-BeliefSpace}
\nonumber &\pi_{0:T}(\cdot) = \arg\min_{\Pi_{0:T}}\mathbb{E}\left[\sum\limits_{k=0}^{T}c(b_k,\pi_{k}(b_k))+\widetilde{J}(b_{T+1})\right]\\
\nonumber &~~s.t.~~b_{k+1}=\tau(b_k,\pi_{k}(b_{k}),z_{k}),~~~z_{k}\sim p(z_{k}|x_{k})\\
&~~~~~~~~x_{k+1}=f(x_k,\pi_{k}(b_{k}),w_{k}),~~~w_{k}\sim p(w_{k}|x_{k},\pi_{k}(b_{k}))
\end{align}

Then, only the first control law $ \pi_{0} $ is used to generate the control signal $ u_{0} $ and the remaining policies are discarded. Similar to the RHC, after applying the first control, a new sequence of policies is computed from the new point. The rollout algorithm is described in Algorithm \ref{alg:Rollout-BeliefSpace}.

\begin{algorithm}[h!]
\caption{Rollout algorithm in belief space}\label{alg:Rollout-BeliefSpace}
\textbf{input}  :  Initial belief $ b_{current}\in\mathbb{B} $, $ B^{goal}\subset\mathbb{B} $\\
{
\While{$ b_{current}\notin B^{goal} $}{
$ \pi_{0:T} = $ Solve optimization in Eq.\eqref{eq:Rollout-BeliefSpace} starting from $ b_{0} = b_{current} $;\\
Apply the action $ u_{0} = \pi(b_{0}) $ to the system;\\
Observe the actual $ z $;\\
Compute the belief $ b_{current} \leftarrow \tau(b_{current},u_{0},z) $;\\
}
}
\end{algorithm}

Although the rollout policy in belief space efficiently reduces the computational cost compared to the original POMDP problem, it is still formidable to solve since the optimization is carried out over the policy space. Moreover, there should be a base policy that provides a reasonable cost-to-go $ \widetilde{J} $. We now proceed to propose a rollout policy in belief space that exploits the FIRM-based cost-to-go.

\subsection{Enabling SLAP via FIRM-based rollout in belief space}\label{subsec:firm-based-rollout}
In this section, we discuss how a rollout policy in belief space (and hence SLAP) can be realized using the FIRM framework. As explained earlier, in FIRM, the system transitions between two nodes\footnote{\axx{In the cartoon in Fig. \ref{fig:funnel-FIRM}, it looks like $ B^{j} $ is the sole destination for $ \mu^{ij} $. However, in dense graphs the belief under $ \mu^{ij} $ might be absorbed by a different funnel before reaching $ B^j $. The summation over $ \gamma $ in the following equations takes this consideration into account.}} $B^i$ and $B^j$ (centered at sampled beliefs $b^i$ and $b^j$) using a \textit{local} controller $\mu^{ij}$. Global-level decision-making only occurs when the system is in a FIRM node (e.g., region $B^i$) and for the rest of the time, local controls are executed (e.g., $\mu^{ij}$). In FIRM-based rollout, we raise this limitation by forcing the system to globally replan at every time step to enable SLAP. Specifically, denoting the system belief at time step $t$ by $b_t$, we rely on the following procedure. At each time step $ t $:

\begin{enumerate}[leftmargin=0cm,itemindent=.5cm,labelwidth=0.4cm,labelsep=0cm,align=left]
\item We connect $b_t$ to all its neighboring FIRM nodes (in radius $R$) using suitable local controllers $\mu^{tj}$. These local controllers are designed in the same way as FIRM edges.

\item We evaluate the transition costs $C(b_t, \mu^{tj})$ and the probability of landing in nodes $ B^{\gamma} $ under the influence of the controller $ \mu^{tj} $, i.e., $\mathbb{P}(B^{\gamma} | b_t, \mu^{tj})$.

\item We evaluate the best edge outgoing from $ b_{t} $ by solving:
\begin{align}\label{eq:rollout-minimization}
\!\!\!\!j^{*} \!= \!\arg\min_{j} \{C(b_t,\mu^{tj})+\!\sum\limits_{\gamma=0}^{N}\mathbb{P}(B^{\gamma}|b_t,\mu^{tj})J^{g}(B^{\gamma})\}
\end{align}
where, $J^{g}(B^{\gamma})$ is the nominal cost-to-go under the FIRM policy from node $B^{\gamma}$ and $J^g(B^0)$ is the failure cost-to-go as discussed in Section \ref{subsec:FIRM-overview}.

\item We choose $\mu^{tj^{*}}$ as the local controller at $ b_t $ if the expected success probability exceeds the current one. In other words, we only switch from the current local controller (i.e., $ \mu^{ij} $), to $ \mu^{tj^{*}} $ if the following condition holds:
\begin{align}\label{eq:check-success-prob}
\mathbb{E}[success|b_t,\mu^{tj^{*}}]>\mathbb{E}[success|b_t,\mu^{tj}]
\end{align}
where expected success probability is
\begin{align}\label{eq:define-expected-success-prob}
\mathbb{E}[success|b_t,\mu^{t\alpha}]=\sum_{\gamma=1}^{N}\mathbb{P}(B^{\gamma}|b_t,\mu^{t\alpha})P^{success}(B^{\gamma})
\end{align}
and $P^{success}(B^{\gamma})=\Gamma_{\gamma}^{T}(I-Q)^{-1}R_{goal}$ is the probability of success for reaching the goal from FIRM node $B^{\gamma}$ under the nominal FIRM policy (see Eq. \eqref{eq:prob-failure}).

\end{enumerate}

Algorithm \ref{alg:Rollout-FIRM} describes planning with the proposed rollout process. We split the computation to offline and online phases. In the offline phase, we carry out the expensive computation of graph edge costs and transition probabilities. Then, we handle pose deviations and the changes in the start/goal location by repeated online replanning, while reusing offline computations.

\begin{algorithm}[h!]
	\caption{\!Rollout algorithm with FIRM as base policy}\label{alg:Rollout-FIRM}
	\textbf{input}  :  Initial belief $ b_{t} $ and goal belief region $ B^{goal} $\\
	Construct a FIRM graph and store nodes $ \mathbb{V}=\{B^{i} \} $, edges $  \mathbb{M}=\{\mu^{ij}\} $, Cost-to-go $ J^{g}(\cdot) $, and Success probabilities $ P^{success}(\cdot)$;$ ~~~ $\tcp{offine phase}
	{
		\While{$ b_t\notin B^{goal} ~~~~~~~~~~~~~~~~$\tcp{online phase}}{
			Find neighboring nodes $\mathbb{V}_{R}=\{B^{i}\}_{i=1}^{r} $ to $b_{t}$;\\
			Set $ B_{t}=\{b_{t}\} $, $ J(B_{t}) = \infty $, and $ S = 0 $;\\
			\ForAll{$B^j \in \mathbb{V}_{R}$}
			{
				$\mu^{tj}$ = Generate controller from $b_t$ to $B^j$ ; \\
				$C(b_t, \mu^{tj}), \mathbb{P}(B^\gamma | b_t, \mu^{tj})$ = Simulate $\mu^{tj}$ to compute transition probability and expected cost; \\
				Compute the expected success $ \mathbb{E}[success|b_t,\mu^{tj}] $;\\
				\If{$ \mathbb{E}[success|b_t,\mu^{tj}]\geq S $}
				{
				Compute the candidate cost-to-go as $J^{cand} = C(b_t,\mu^{tj})+\!\sum_{\gamma=0}^{N}\mathbb{P}(B^{\gamma}|b_t,\mu^{tj})J^{g}(B^{\gamma})$;\\
				\If{$ J^{cand} < J(B_{t})$ \label{line:fromAlg:condition}}
				{$ J(B_{t}) = J^{cand}$ and $ S = \mathbb{E}[success|b_t,\mu^{tj}] $;\label{line:J-update}\\
					$ \mu^{tj^{*}}=\mu^{tj} $;\label{line:mu-update}\\}
			    } 
			}
			Apply the action $ u_{t} = \mu^{tj^{*}}(b_{t}) $ to the system;\\
			Get the actual measurement $ z_{t+1} $;\\
			Compute the next belief $ b_t \leftarrow \tau(b_{t},u_{t},z_{t+1}) $;\\
			\If{user submits a new goal state $ \mathbf{v}^{goal} $}
			{
				$ B^{goal} \leftarrow$ Sample the corresponding FIRM node;\\
				Add $ B^{goal}$ to the FIRM graph; $ \mathbb{V}\leftarrow\mathbb{V}\cup\{B^{goal} \} $;\\
				Connect $ B^{goal} $ to its $ r $ nearest neighbors using edges $ \{\mu^{(i,goal)} \} $. Also, $ \mathbb{M}\leftarrow\mathbb{M}\cup\{\mu^{(i,goal)} \} $;\\
				$ [J^{g}(\cdot),P^{success}(\cdot)] $ = DynamicProgramming($ \mathbb{V},\mathbb{M} $);
			}
		}
	}
\end{algorithm}

In the following, we discuss how Algorithm \ref{alg:Rollout-FIRM} provides a tractable variant of Eq. \eqref{eq:Rollout-BeliefSpace}. Following the concepts and terminology in \cite{Bertsekas07}, here, the FIRM policy plays the role of the base policy; FIRM's cost-to-go values are used to approximate the cost-to-go beyond the rollout horizon. Given a dense FIRM graph, where nodes partition the belief space (i.e., $ \cup_{i} B^{i}=\mathbb{B} $), the belief at the end of rollout ($ b_{T+1} $ in Eq. \eqref{eq:Rollout-BeliefSpace}) will fall into a FIRM node with a known cost-to-go. With a sparse FIRM graph, where nodes do not cover the entire belief space, we design local policies that drive the belief into a FIRM node at the end of horizon. However, since the belief evolution is random, reaching a FIRM node at deterministic time horizon $ T $ may not be guaranteed. Therefore, we propose a new variant of rollout method with a random horizon $ \mathcal{T} $ as follows: 
\begin{align}\label{eq:IRM-rollout}
\nonumber \pi_{0:\infty}(\cdot) 
&= \arg\min_{\widetilde{\Pi}}\mathbb{E}\left[\sum\limits_{k=0}^{\mathcal{T}}c(b_k,\pi_{k}(b_k))+\widetilde{J}(b_{\mathcal{T}+1})\right]\\
\nonumber &~~s.t.~~~~b_{k+1}=\tau(b_k,\pi_{k}(b_{k}),z_{k}),~~~z_{k}\sim p(z_{k}|x_{k})\\
\nonumber &~~~~~~~~~~x_{k+1}=f(x_k,\pi_{k}(b_{k}),w_{k}),~~~w_{k}\sim p(w_{k}|x_{k},\pi_{k}(b_{k}))\\
&~~~~~~~~~~b_{\mathcal{T}+1}\in \cup_{j}B^{j},
\end{align}
where for $ b_{\mathcal{T}+1}\in B^{j} $ we have
\begin{align}\label{eq:base-CostToGo}
\widetilde{J}(b_{\mathcal{T}+1})=J^{g}(B^{j})
\end{align}

$ \widetilde{\Pi} $ is a restricted set of policies under which the belief will reach a FIRM node in finite time. In other words, if $ \pi\in\widetilde{\Pi} $ and $ \pi=\{\pi_{1},\pi_{2},\cdots \} $, then $ \mathbb{P}(b_{\mathcal{T}+1}\in\cup_{j} B^{j}|\pi)  = 1$ for finite $ \mathcal{T} $. Thus, the last constraint in Eq. \eqref{eq:IRM-rollout} is redundant and automatically satisfied for suitably constructed $\widetilde{\Pi}$. Also, the FIRM-based cost-to-go $ J^{g}(\cdot) $ plays the role of the cost-to-go beyond the horizon (see Eq. \eqref{eq:base-CostToGo}). 

Note that based on Algorithm \ref{alg:Rollout-FIRM}, we can provide guarantees on the performance of the proposed method. Before formally stating the results, recall that at each instance of rollout computation, the current belief $b_t$ is added as a virtual node $B^{virtual}_t$ to the FIRM graph to generate the augmented FIRM graph $ G^{a}_t $. A virtual node being defined as a temporary node with no incoming edges. Virtual nodes are removed from the graph as soon as the system departs their vicinity.

\begin{proposition}
For a given static map, the performance and success probability of the FIRM-Rollout policy is lower bounded by the nominal FIRM policy at any belief state during execution of the planner.
\end{proposition}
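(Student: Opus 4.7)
The plan is to prove simultaneously $J^R(b_t) \le J^{NF}(b_t)$ and $P^R_s(b_t) \ge P^{NF}_s(b_t)$, where the superscripts $R$ and $NF$ denote the rollout policy and the nominal FIRM policy respectively, $J$ is the expected cost-to-go, and $P_s$ the success probability. The argument has three ingredients: (i)~admissibility of the nominal FIRM action as a rollout candidate, (ii)~a one-step improvement guaranteed by Algorithm~\ref{alg:Rollout-FIRM}, and (iii)~a backward induction along the belief trajectory to a FIRM node.

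First, at any decision instant $t$, augment the FIRM graph with the virtual node $B^{virtual}_t = \{b_t\}$ and the candidate edges $\{\mu^{tj}\}$ produced in Algorithm~\ref{alg:Rollout-FIRM}. The candidate $\mu^{tj_0}$ that aims at the target $B^{j_0}$ of the currently executing nominal edge $\mu^{ij_0}$ reproduces, by construction, the continuation of the nominal FIRM policy from $b_t$. Consequently its evaluated cost $C(b_t,\mu^{tj_0}) + \sum_\gamma \mathbb{P}(B^\gamma|b_t,\mu^{tj_0}) J^g(B^\gamma)$ equals $J^{NF}(b_t)$, and its evaluated success probability equals $P^{NF}_s(b_t)$. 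Thus the nominal-FIRM continuation is always a feasible candidate in the rollout optimization.

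Second, the selection rule (Eq.~\eqref{eq:check-success-prob} together with line~\ref{line:fromAlg:condition} of Algorithm~\ref{alg:Rollout-FIRM}) commits to a new $\mu^{tj^*}$ only when its success probability is no smaller than that of the nominal continuation and its evaluated cost is strictly less; otherwise it retains the nominal action. Hence the chosen action always satisfies the one-step inequalities $C(b_t,\mu^{tj^*}) + \sum_\gamma \mathbb{P}(B^\gamma|b_t,\mu^{tj^*}) J^g(B^\gamma) \le J^{NF}(b_t)$ and $\sum_\gamma \mathbb{P}(B^\gamma|b_t,\mu^{tj^*}) P^{success}(B^\gamma) \ge P^{NF}_s(b_t)$. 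To promote this one-step bound to the full rollout (which replans at every belief update), I would use backward induction along the stopping time $\mathcal{T}$ at which the belief is absorbed into a FIRM node, guaranteed almost surely by the policy class $\widetilde{\Pi}$ underlying Eq.~\eqref{eq:IRM-rollout}. The base case is $J^R(b_\mathcal{T}) = J^g(B^\gamma) = J^{NF}(b_\mathcal{T})$ from the FIRM Bellman equation~\eqref{eq:FIRM-DP}. Given the inductive hypothesis $J^R(b_{t+1}) \le J^{NF}(b_{t+1})$ and $P^R_s(b_{t+1}) \ge P^{NF}_s(b_{t+1})$ pointwise for beliefs strictly closer to absorption, the recursive definition of $J^R$ and monotonicity in the expected continuation cost yield $J^R(b_t) \le C(b_t,\mu^{tj^*}) + \sum_\gamma \mathbb{P}(B^\gamma|b_t,\mu^{tj^*}) J^g(B^\gamma) \le J^{NF}(b_t)$, and symmetrically for the success probability.

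The main obstacle I anticipate is making the backward induction rigorous under the random horizon $\mathcal{T}$: different rollout candidates induce different absorption distributions $\mathbb{P}(B^\gamma|b_t,\cdot)$ and hence different trajectory lengths, so interchanging expectation with the recursion requires an optional-stopping-type justification. This should follow from non-negativity of the stage cost in Eq.~\eqref{eq:one-step-cost} and boundedness of $J^g$ over the finite FIRM graph, which together let the inductive inequalities propagate from absorption back to $b_t$ and, by iterating across consecutive rollout replans, yield the claimed bound at every belief encountered during execution.
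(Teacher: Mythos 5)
Your proposal is correct and rests on the same central observation as the paper's proof: the continuation of the currently executing nominal FIRM edge is itself one of the candidates in the rollout minimization \eqref{eq:rollout-minimization} (and, since $b_t$ enters the graph only as a virtual node with no incoming edges, the offline cost-to-go's $J^g(\cdot)$ are unaffected), so the selected action can never evaluate worse than the nominal continuation, while the explicit check \eqref{eq:check-success-prob} protects the success probability. Where you go beyond the paper is in step (iii): the paper stops at the one-step inequality and simply asserts that repeating it at every replanning instant yields actions that are ``never worse-off,'' whereas you correctly identify that promoting the one-step bound to a bound on the realized cost of the replanning policy requires the standard rollout cost-improvement induction (monotonicity of the Bellman operator for the rollout policy applied to $J^g$, propagated backward from the absorption time $\mathcal{T}$), together with the nonnegative-stage-cost/properness conditions you cite to justify passing to the limit under the random horizon. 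That addition makes your argument more rigorous than the printed proof rather than different in kind. Two minor caveats: the paper's success-probability claim is qualified to hold only \emph{in static environments} (since $P^{success}(\cdot)$ is computed offline), a qualifier you should carry along; and your identification of the candidate $\mu^{tj_0}$ with the nominal continuation is cleanest if, as the paper does, you take that candidate to be literally the remainder of the current edge $\mu^{ij_0}$ rather than a freshly designed controller to the same target node, which need not coincide with it exactly.
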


\begin{proof}
As discussed, to compute the rollout at time $t$, belief $b_t$ is added to the FIRM graph as a virtual node, with no incoming edges. Therefore, the dynamic programming solution remains unchanged. The optimal cost-to-go from the virtual node $B^{virtual}_t$ (centered at $ b_t $) is given by:
\begin{align}
\nonumber J(B^{virtual}_{t}) = \min_{j} \{C(b_t,\mu^{tj})+\!\sum\limits_{\gamma=0}^{N}\mathbb{P}(B^{\gamma}|b_t,\mu^{tj})J^{g}(B^{\gamma})\}
\end{align}
Since the current FIRM edge is one of edges over which the above minimization is carried out, the cost-to-go (performance) with rollout is strictly upper (lower) bounded by the nominal FIRM policy cost (performance). Furthermore, due to the check in Eq. \eqref{eq:check-success-prob}, it can be further assured that the probability of success of the rollout policy is strictly greater than that of the FIRM policy \axx{in static environments}. 

Once the rollout is computed and the target node is chosen, the robot starts executing the controller $\mu^{tj^*}$ and leaves the vicinity of node $B^t$. The virtual node $B^t$ then gets removed from the graph. Further, it should be noted that as the robot moves on the virtual edge (edge from node $B_{virtual}^t$ to $B^{j^{*}}$), the rollout process is repeated which leads the robot to skip the belief stabilization as needed. Consequently, as the robot moves, due to rollout, it chooses actions which are never worse-off than the nominal FIRM policy. We refer the reader to Fig.~\ref{fig:rollout-cartoon} for a visual explanation of the process.
$ \blacksquare $ \end{proof}

\textit{Remark:} If the desired factor was merely the success probability, one can ignore the cost-to-go comparison condition in Algorithm \ref{alg:Rollout-FIRM} and only maximize the success probability.

In addition to improving the performance while not compromising on the safety, the rollout procedure is particularly helpful in handling the changes in the environment map. We discuss this aspect in the Section \ref{sec:SLAPinChange}.

\subsection{Complexity Analysis}\label{sec:complexity}
In this section, we analyze the computational complexity of the offline and online phase of the proposed algorithm.

\ph{Offline phase}
We assume the environment is a hypercube $ [0,w]^{d} $. For constructing the offline policy on a $ k $-regular graph with $ N $ nodes, we have to simulate $ kN $ edges offline. Let us denote the number of particles describing belief by $ n^{off}_b $. Assuming a fixed velocity  1 m/s on edges, and assuming simulations steps occur at every $ \Delta t $ seconds, the number of simulation calls (including collision checks) is $ n_{coll} = \sum_{s = 1}^{kN} n_b^{off}\Delta t^{-1} l_{s} $, where $ l_s $ is the length of the $ s $-th edge. 

Assuming a uniform distribution of the sampled points (in the sense of infinity norm) in the configuration space, the density of points is $ \rho = Nw^{-d} $. Accordingly, the dispersion \cite{Lavalle04Grid,Hsu06_IJRR_sampling} of the sampled points is $ \delta = wN^{-d^{-1}} $. Assuming all edges have equal length (in the $ l^{\infty}$-norm sense), the edge length of the underlying PRM (over which FIRM has been built) is $ l_{s}=\delta=w\sqrt[d]{N}^{-1} $.
\begin{align}
n_{coll} &= (n_b^{off}\Delta t^{-1})wkN^{1-d^{-1}}
\end{align}

\ph{Online phase}
In the online phase, we connect each node to all nodes in the neighborhood of radius $ R $ (in infinity norm). Thus, the size of neighboring area for connection is $ R^{d} $, which encompasses $ R^{d}*\rho $ neighboring points. For $ R = r\delta $, it will encompass $ r^{d} $ points. Thus, we have $ r^{d} $ new edges in the online phase. It can be shown that the length of $ (i+1)^{d}-i^{d} $ of these edges is in the range $i\delta<edgeLength<(i+1)\delta$.

For all edge lengths between $i\delta<l_{s}=edgeLength<(i+1)\delta$, let's approximate $l_{s} $ by $ i^{+}\delta $ where $ i\leq i^{+}\leq i+1 $. Then, the sum of the length of all new edges is:
\begin{align}
\nonumber
L_{s} &= \sum_{s=1}^{r^{d}}l_{s} =  \sum_{i=1}^{r}\sum_{s = (i-1)^{d}+1}^{i^{d}}l_{s} =\delta\sum_{i=1}^{r}((i)^{d}-(i-1)^{d}-1)i^{+}
\end{align}

Let us denote the number of particles describing belief by $ n_b $. The number of simulation calls (including collision checks) is:
\begin{align}
\nonumber
n_{coll} &= n_b\Delta t^{-1}L_{s}= n_b\Delta t^{-1}\sqrt[d]{N^{-1}}w
\sum_{i=1}^{R\sqrt[d]{N}w^{-1}}\!\!\!\!\!((i)^{d}-(i-1)^{d}-1)i^{+}
\end{align}

Upper/lower bounds on the number of collision checks can be obtained by setting $ i^{+} $ to its upper and lower bounds, i.e., $ i+1 $ and $ i $. To gain further insight on the complexity, let's assume $ i^{+} $ is a constant (i.e., all edge lengths are the same) and set it to its maximum value $ i^{+}=R\sqrt[d]{N}w^{-1} $. Then, the upper bound on collision checks $ n^{+}_{coll} $ is:
\begin{align}
\nonumber
n^{+}_{coll} &= (n_b\Delta t^{-1}wN^{-d^{-1}})
(R\sqrt[d]{N}w^{-1})
[(R\sqrt[d]{N}w^{-1})^{d}-R\sqrt[d]{N}w^{-1}]\\
&=n_b\Delta t^{-1}w^{-d}R^{d+1}N
-n_b\Delta t^{-1}w^{-1}R^{2}\sqrt[d]{N}
\end{align}
Given this upper-bound on the computation time and given uniform grid sampling strategy, the online computation time grows sub-linearly with the number of underlying FIRM nodes $ N $ in the worst case. Also, for a given dimension the online computation time is polynomial in the connection radius $ R $. By removing the dimension from the equation and extending the results to random sampling, we can write the first term of the above equation as:
\begin{align}
\nonumber
n^{+}_{coll} &= (n_b\Delta t^{-1})R V\rho
\end{align}
where $ \rho $ is the density of samples in the environment, $ V $ is the volume of the connection neighborhood, and $ R $ is the radius of the connection neighborhood.

\subsection{Enabling SLAP in changing environments}\label{sec:SLAPinChange}
In this section, we discuss the ability of the proposed planner to handle changes in the obstacle map. We focus on a challenging case, where changes in the obstacle map are persistent and can possibly eliminate a homotopy class of solutions. Doors are an important example of this class. If the robot observes a door is closed (which was expected to be open), it might have to globally change the plan to get to the goal from a different passage. This poses a challenge to the state-of-the-art methods in the belief space planning literature. 

To handle such changes in the obstacle map and replan accordingly, we propose a method for lazy evaluation of the generated feedback tree, referred to as ``lazy feedback evaluation'' algorithm, inspired by the lazy evaluation methods for PRM frameworks \cite{bohlin00lazyPRM}. The basic idea is that at every node the robot re-evaluates \textit{only} the next edge (or the next few edges up to a fixed horizon) that the robot will most likely take. By re-evaluation, we mean it needs to re-compute the collision probabilities along these edges. If there is a significant change (measured by $ \alpha $ in Algorithm \ref{alg:lazy-feedback-eval}) in the collision probabilities, the dynamic programming problem is re-solved and new cost-to-go values are computed. Otherwise, the cost-to-go values remain unchanged and the robot keeps following its rollout-policy. Such lazy evaluation (computing the collision probabilities for a single edge or a small number of edges) can be performed online. The method is detailed in Algorithm \ref{alg:lazy-feedback-eval}.

\begin{algorithm}[h!]
\caption{Lazy Feedback Evaluation (Lazy Replanning)}\label{alg:lazy-feedback-eval}
\textbf{input}  :  FIRM graph\\
\textbf{output} :  Updated cost-to-go, $J^{g}(\cdot)$ and success probabilities $ P^{success}(\cdot) $\\
{
Perceive the obstacles map; \\
\If{there is a change in map} 
{
$ \mathcal{F}\leftarrow $ Retrieve the sequence of nominal edges returned by feedback up to horizon $ l $; Set $ ctr = 0 $;\\
\ForAll{edges $ \mu \in \mathcal{F}$}
{Re-compute collision probabilities $ \mathbb{P}_{new}(B,\mu) $ \axx{from starting node $ B $ of edge $ \mu $};\\
\If{$ |\mathbb{P}_{new}(B,\mu) - \mathbb{P}(B,\mu)| > \alpha $}
{$\mathbb{P}(B,\mu)\leftarrow\mathbb{P}_{new}(B,\mu)$;\\
$ ctr\leftarrow ctr+1 $;}
} 
\If{$ ctr>0 $}
{Update edge set $ \mathbb{M} $ based on new transition probabilities;\\
$ [J^{g}(\cdot),P^{success}(\cdot)] $ = DynamicProgramming($ \mathbb{V},\mathbb{M} $);}
}
\Return $J^{g}(\cdot)$ and $P^{success}(\cdot)$;
}
\end{algorithm}

\textit{Remark}: Another challenge with these persistent changes is that they stay in the memory. Imagine a case where the robot is in a room with two doors and the goal point is outside the room. Suppose that after checking both doors, the robot realizes they are closed. To solve such cases, the door state is reset to ``open'' after a specific amount of time to persuade the robot to recheck the state of doors. We further discuss the concept of the ``forgetting time" in the experiments section.

It is important to note that it is the particular structure of the proposed planner that makes such replanning feasible online. The graph structure of the underlying FIRM allows us to \textit{locally} change the collision probabilities in the environment without affecting the collision probability of the rest of the graph \axx{(i.e., properties of different edges on the graph are independent of each other; see Fig. \ref{fig:funnel-FIRM} and \ref{fig:tree-with-funnels}).} Such a property is not present in the state-of-the-art sampling-based belief space planners (e.g., \cite{Prentice09},\cite{Berg11-IJRR}), where the collision probabilities and costs on \textit{all} edges are dependent on each other and hence need to be re-computed. 

\section{Simulation Results}\label{sec:rollout-simulation}
In this section, we demonstrate the performance of the method in simulation. The robot is tasked to go from a start location to multiple goal locations sequentially in an obstacle-laden environment with narrow passages and asymmetrically placed landmarks.

\textit{Motion Model}: 
The state of the robot at time 
$k$ is denoted by $ x_k = (\mathsf{x}_k, \mathsf{y}_k, \mathsf{\theta}_k)^T $ (2D position and the heading angle). We denote the control as $ u_k = (v_{x,k}, v_{y,k},\omega_k)^T $ and the process noise by $ w_k=(n_{v_x}, n_{v_y},n_{\omega})^T\sim\mathcal{N}(0,\mathbf{Q}_k) $.
Let $f$ denote the kinematics of the robot such that $x_{k+1} = f(x_k, u_k, w_k)=x_{k}+u_k\delta t+w_k\sqrt{\delta t}$.

\textit{Observation Model}:
We use a range-bearing landmark based observation model. Each landmark has a unique fully-observable ID. Let ${^i}\mathbf{L}$ be the location of the $i$-th landmark. The displacement vector ${^i}\mathbf{d}$ from the robot to ${^i}\mathbf{L}$ is given by ${^i}\mathbf{d}=[{^i}d_{x}, {^i}d_{y}]^T:={^i}\mathbf{L}-\mathbf{p}$, where $\mathbf{p}=[\mathsf{x},\mathsf{y}]^T$ is the position of the robot. Therefore, the observation ${^i}z$ of the $i$-th landmark can be described as ${^i}z={^i}h(x,{^i}v)=[\|{^i}\mathbf{d}\|,\text{atan2}({^i}d_{y},{^i}d_{x})-\theta]^T+{^i}v$. The observation noise is assumed to be zero-mean Gaussian  such that $ {^i}v\sim\mathcal {N}(\mathbf{0},{^i}\mathbf{R}) $ where ${^i}\mathbf{R}=\text{diag}((\eta_r\|{^i}\mathbf{d}\|+\sigma^r_b)^2,(\eta_{\theta}\|{^i}\mathbf{d}\|+\sigma^{\theta}_b)^2)$. The measurement quality decreases as the robot gets farther from the landmarks and the parameters $\eta_r$ and $\eta_{\theta}$ determine this dependency. $\sigma_b^r$ and $\sigma_b^{\theta}$ are the bias standard deviations. 

\textit{Environment and Scenario}:
The environment in Fig. \ref{fig:environment} represents a 2D office space measuring $21$m $\times$ $21$m. 
The robot is a disk with diameter $1$m. There are two narrow passages $ P1 $ and $ P2 $ with high collision probability. The narrow passages are $1.25$m wide, thus offering a very small clearance for the robot to pass through. The robot is placed at starting location `A'  and tasked to visit 4 different locations (B, C, D, and E) in 4 sequential segments: 1) $ A\rightarrow B $,  2) $ B\rightarrow C $, 3) $ C\rightarrow D $, and  4) $ D\rightarrow E $. We compare the performance of the standard FIRM with the proposed rollout-based method. 

\begin{figure}
\centering
\subfigure[]
{\includegraphics[height=1.7in]{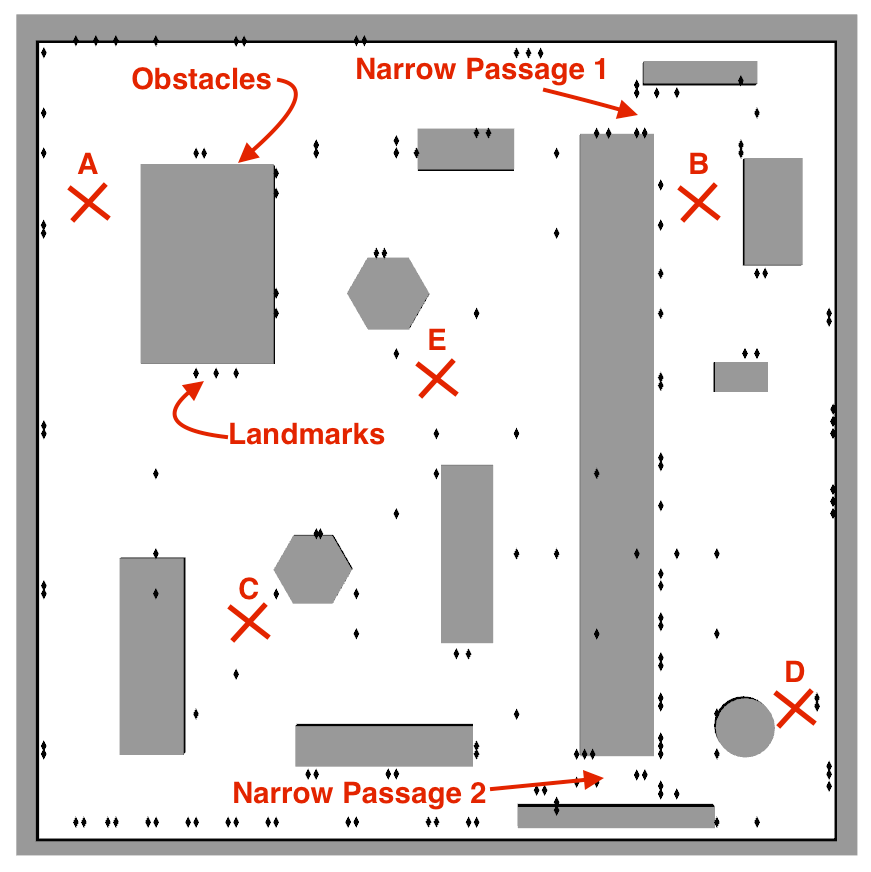}\label{fig:environment}}
\subfigure[]
{\includegraphics[height=1.7in]{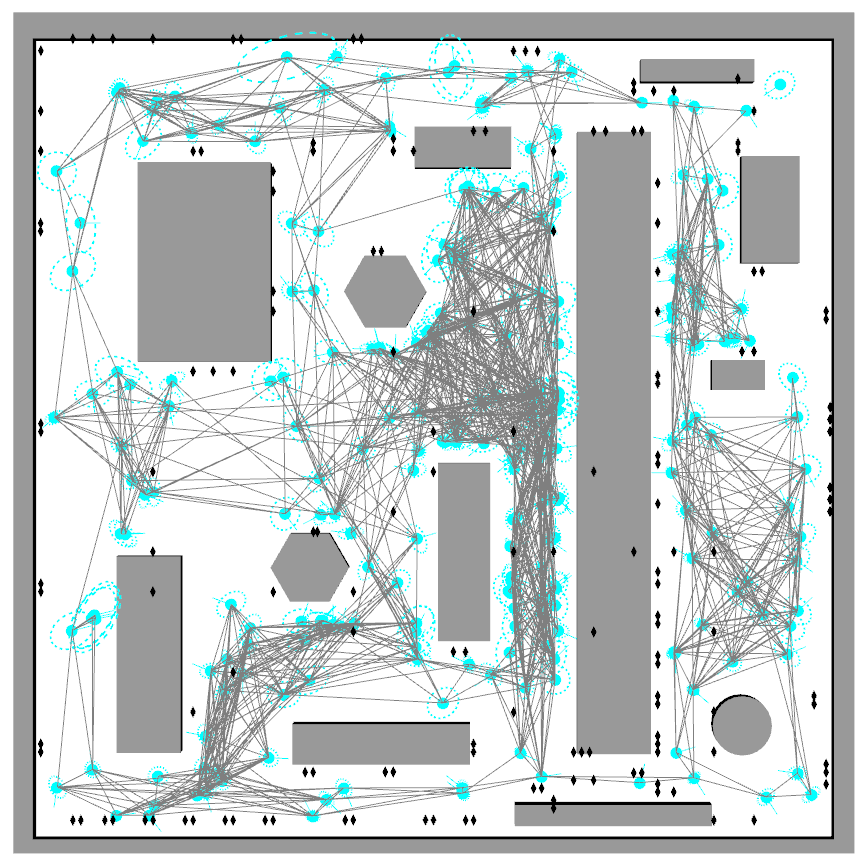}\label{fig:prm}}
\caption{\axx{(a) The simulation environment: landmarks (black diamonds) and obstacles (grey polygons). The locations of interest that the robot is tasked to visit are marked by red crosses. The two narrow passages P1 and P2 represent regions of high collision probability (risky) due to the small clearance. (b) The underlying FIRM roadmap: edges (grey lines), nodes (cyan disks), covariance of the FIRM nodes (dashed ellipses).}}
\label{fig:envAndPRM}
\end{figure}

\subsection{Planning with Standard FIRM}
First, we construct the FIRM roadmap offline (see Fig. \ref{fig:prm}).

\textit{FIRM nodes}: \axx{The roadmap is constructed by uniformly sampling configurations in the free space. Then, corresponding to each configuration node, we create a FIRM node (belief) by 
following the procedure in Section \ref{subsec:FIRM-elements}. In short, we linearize the system dynamics and sensor model around the sampled configuration point. We create the Kalman Filter corresponding to this local linear system and find its reachable belief by solving the corresponding Riccati equation. At each node, there exists a stabilizing controller which locally drives all beliefs to the belief node}. 

\textit{FIRM edges}: The edges of the FIRM roadmap are generated by first finding valid (collision free) straight line connections between neighboring nodes 
and then generating edge controllers which drive the belief from the starting belief of the edge to the vicinity of the target belief of the edge. For each edge in the graph, we run Monte Carlo simulations to compute the expected execution cost and transition probability. The constructed FIRM roadmap is stored for use in the online rollout phase.

\textit{Online-phase}: In the online phase, the planner receives a query (i.e., starting and goal configuration). These configurations are added to the existing roadmap by computing the appropriate stationary belief, stabilizer, and edge controllers. Since this construction preserves the optimal sub-structure property \axx{(i.e., edges are independent of each other; see Fig. \ref{fig:funnel-FIRM} and \ref{fig:tree-with-funnels})}, we can solve dynamic programming on the graph for the given goal location to construct the feedback tree.

\textbf{FIRM Feedback Tree}: The solution of the dynamic programming problem (i.e., $ \pi^{g} $), is visualized with a \textit{feedback tree}. Recall that $ \pi^{g} $ is a mapping (look-up table) that returns the next best edge for any given graph node. The feedback tree is rooted at the goal node. For each node, the feedback tree contains only one outgoing edge ($ \mu=\pi^{g}(B^{i}) $) that pulls the robot towards the goal. 

\textbf{Most-Likely Path (MLP)}: The most likely path is defined as a path followed by the FIRM feedback if there was no noise (i.e., it is a tree branch that connects start to goal.) Note that the actual solution (generated by FIRM) can be arbitrarily different from the MLP due to noise.

In segment 1 ($ A\rightarrow B $), the planner computes the feedback tree (Fig. \ref{fig:firm-segment-1-a}) rooted in $ B $. Fig. \ref{fig:firm-segment-1-b} shows the MLP. In this case, the noise is not high enough to change the homotopy class, and the robot is close to the MLP. To reach the goal, the robot follows edge controllers returned by the feedback policy and stabilizes to all FIRM nodes along the path. Once the robot reaches $ B $, we submit a new goal $ C $. A new feedback tree is computed online rooted in $ C $ (Fig. \ref{fig:firm-segment-2-a}), with MLP shown in Fig. \ref{fig:firm-segment-2-b}. We follow a similar procedure to accomplish segments $ C\rightarrow D $ and $ D\rightarrow E$.

\begin{figure}
\centering
\subfigure[Feedback tree (green) for goal location $ B $, robot (blue disk), and initial belief (red).]{\includegraphics[height=1.65in]{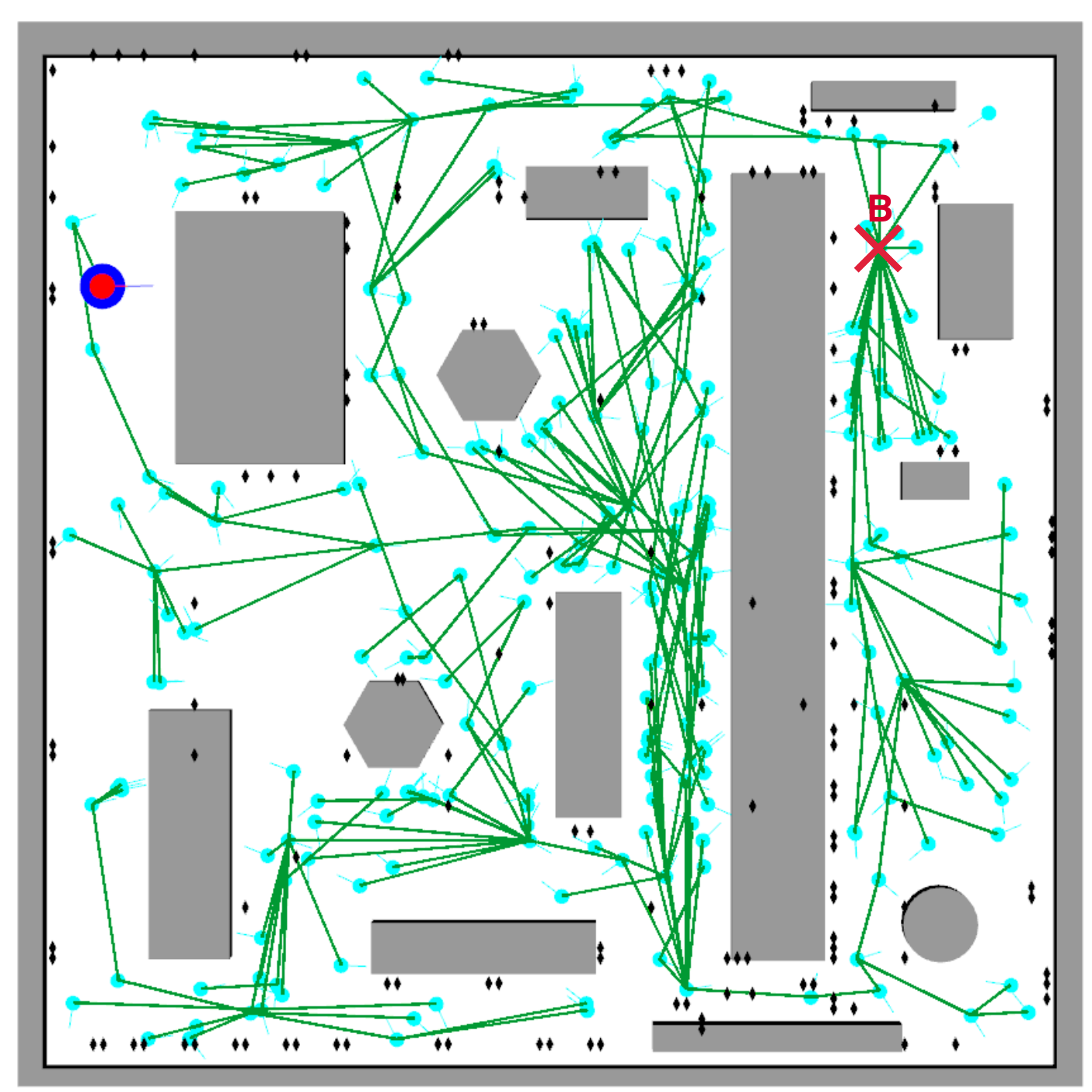}\label{fig:firm-segment-1-a}}
\hspace{0.1in}
\subfigure[The most likely path (purple) under FIRM from A to B.]{\includegraphics[height=1.65in]{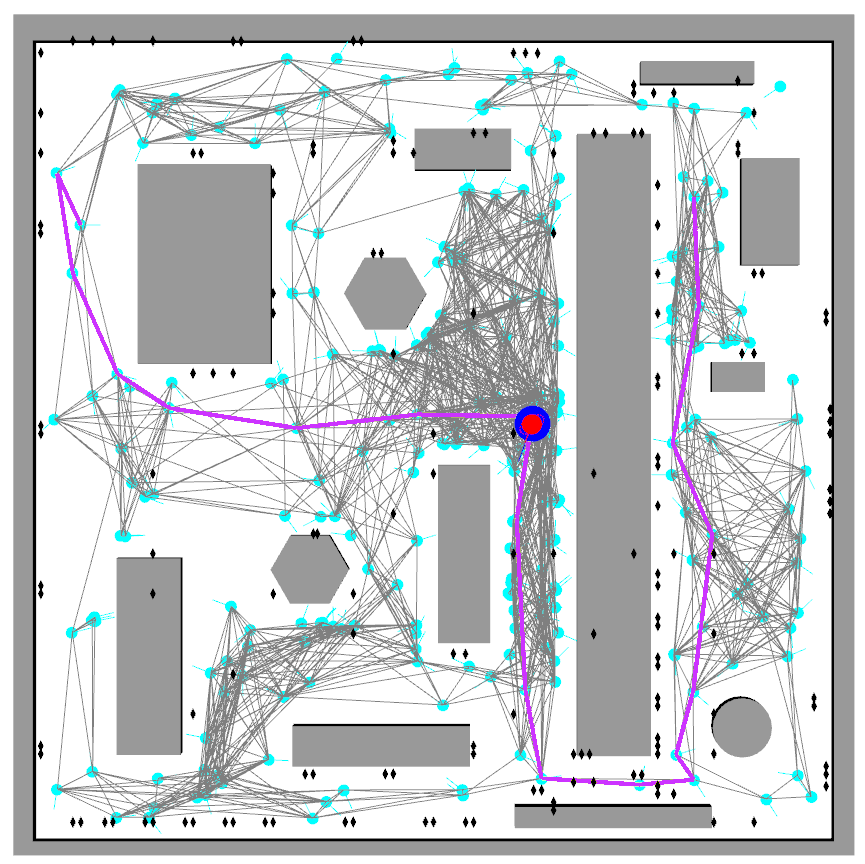}\label{fig:firm-segment-1-b}}
\caption{Segment 1 of policy execution with FIRM, starting at A and going to B. The locations A and B are marked in Fig. \ref{fig:environment}}.
\end{figure}

\begin{figure}
\centering
\subfigure[The FIRM feedback for goal location C.]{\includegraphics[height=1.65in]{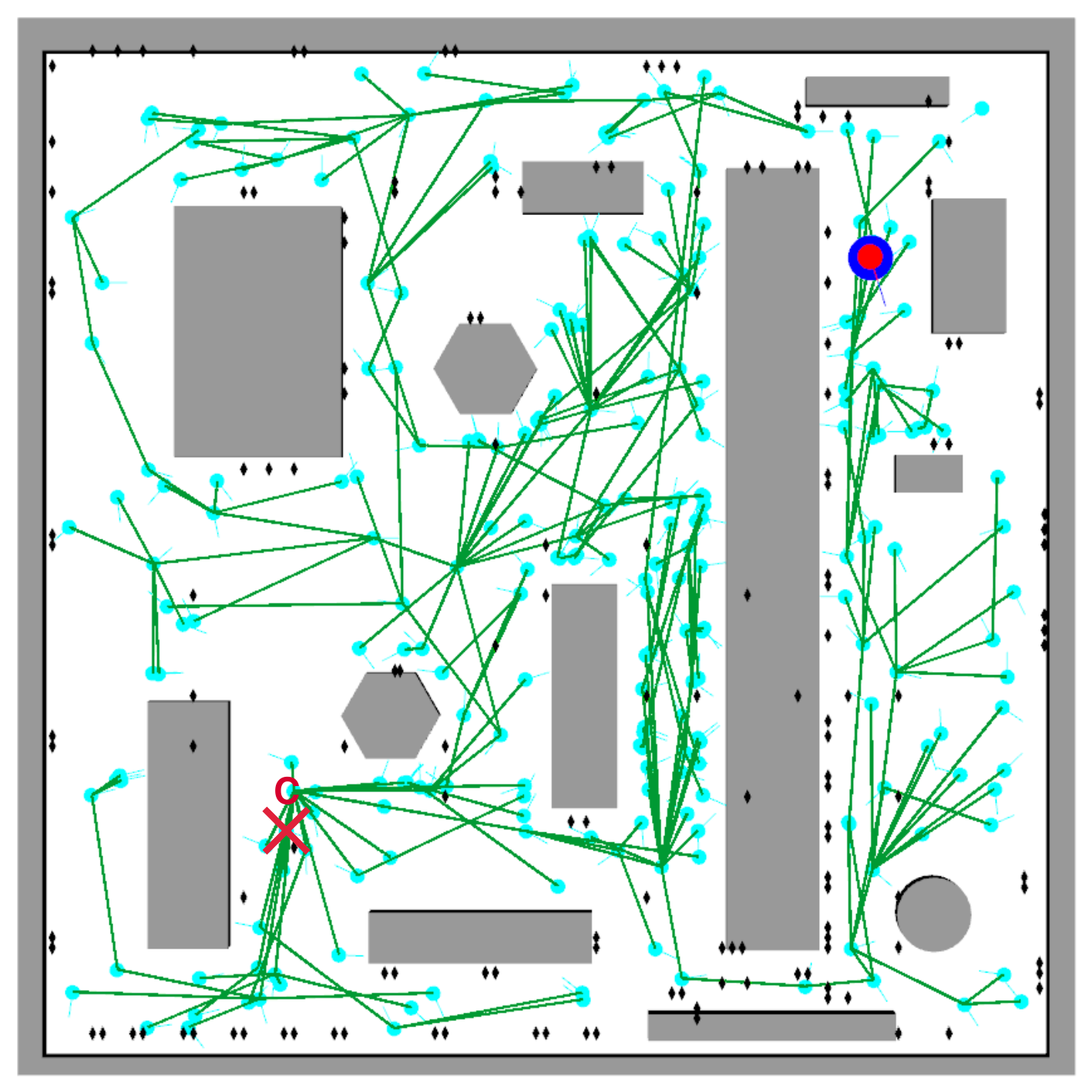}\label{fig:firm-segment-2-a}}
\hspace{0.1in}
\subfigure[The MLP (purple) under FIRM from B to C.]
{\includegraphics[height=1.65in]{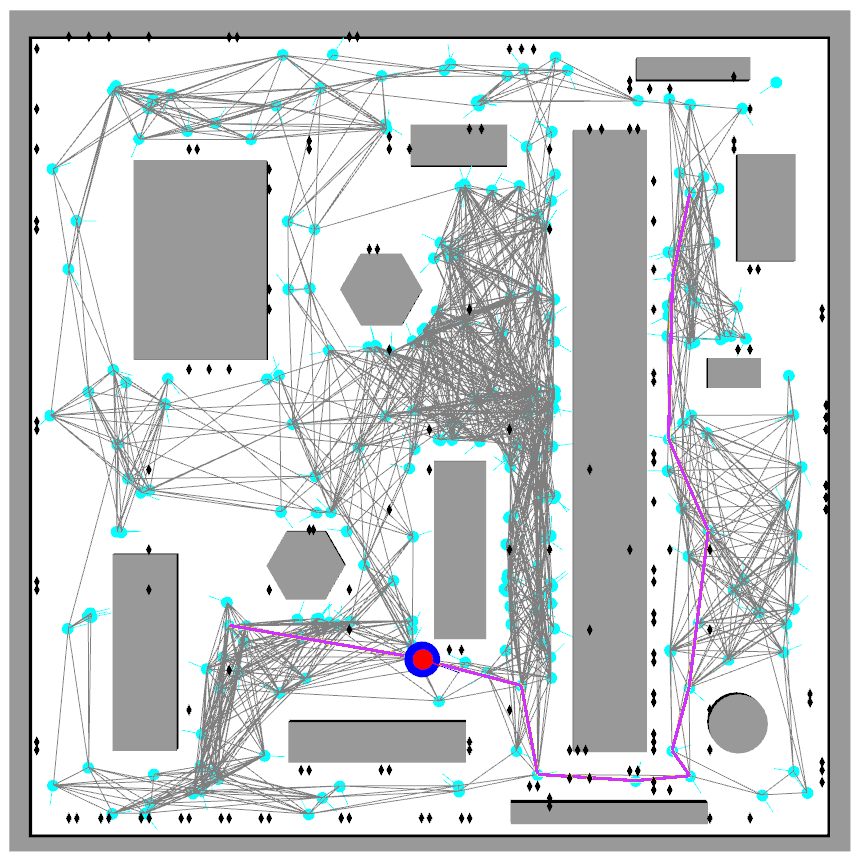}\label{fig:firm-segment-2-b}}
\caption{Segment 2 ($ B\rightarrow C $) of policy execution with FIRM.}
\end{figure}

\subsection{Planning with the proposed method}
For segment 1 ($ A\rightarrow B $), as before, we begin with the underlying FIRM roadmap constructed offline and compute the feedback tree (Fig. \ref{fig:firm-segment-1-a}). However, when rollout planner follows the feedback tree, a different behavior emerges. At each time step (or more generally every  $T_{rollout}$ seconds), the planner connects the current robot belief to neighboring FIRM nodes in radius $R$ (i.e., the planner locally generates edge controllers with their associated cost and transition probability). Then, the planner checks which connection provides the lowest sum of the edge-cost and cost-to-go from its landing node (Eq. \eqref{eq:rollout-minimization}). 
The connection with the lowest sum is chosen as the next edge to follow. 
Fig. \ref{fig:rollout-segment-1-a} shows the planner checking connections (red-edges) locally to neighboring FIRM nodes. 

An important behavior emerges in segment 1. As the robot proceeds, the rollout is able to find a shorter path through the relatively open area by skipping unnecessary stabilizations (Fig. \ref{fig:rollout-segment-1-b} and \ref{fig:rollout-segment-1-c}). As the robot traverses the narrow passage $ P2 $, the rollout realizes ``stabilizing'' to the FIRM node is the best option as it concludes it is better to reduce the uncertainty to a safe level before proceeding through the narrow passage (Fig. \ref{fig:rollout-segment-1-d}). Eventually the robot reaches location B through the path as marked in green in Fig. \ref{fig:rollout-segment-1-f}. Rollout gives the robot a distinct advantage over the nominal FIRM plan as it guides the robot through a shorter and faster route. Furthermore, it should be noted that although the last part of the two paths (after exiting the narrow passage) look similar, they differ significantly in the velocity profiles. Along the purple path, the robot stabilizes to each and every FIRM node. But, along the green path (rollout), the robot maintains a higher average velocity by skipping unnecessary stabilizations. The robot performs full or partial stabilization only when the gained information (reduced uncertainty and risk) is necessary to complete the mission.

\newcommand{\SimFigSize}{1.4in}
\begin{figure}
\centering
\subfigure[The robot checks new connections with neighbors (red). The MLP under the nominal FIRM feedback is in purple.]{\includegraphics[height=\SimFigSize]{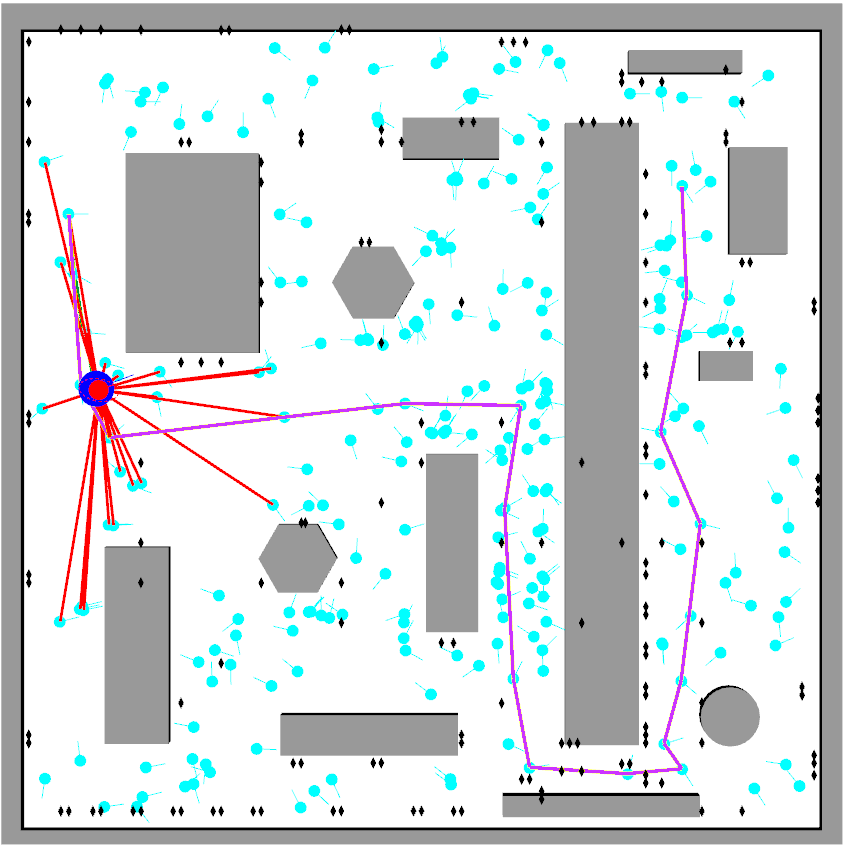}\label{fig:rollout-segment-1-a}}
\hspace{0.1in}
\subfigure[Rollout guides the robot away from the MLP to a shorter and faster path (green). The new path is in a different homotopy class.]{\includegraphics[height=\SimFigSize]{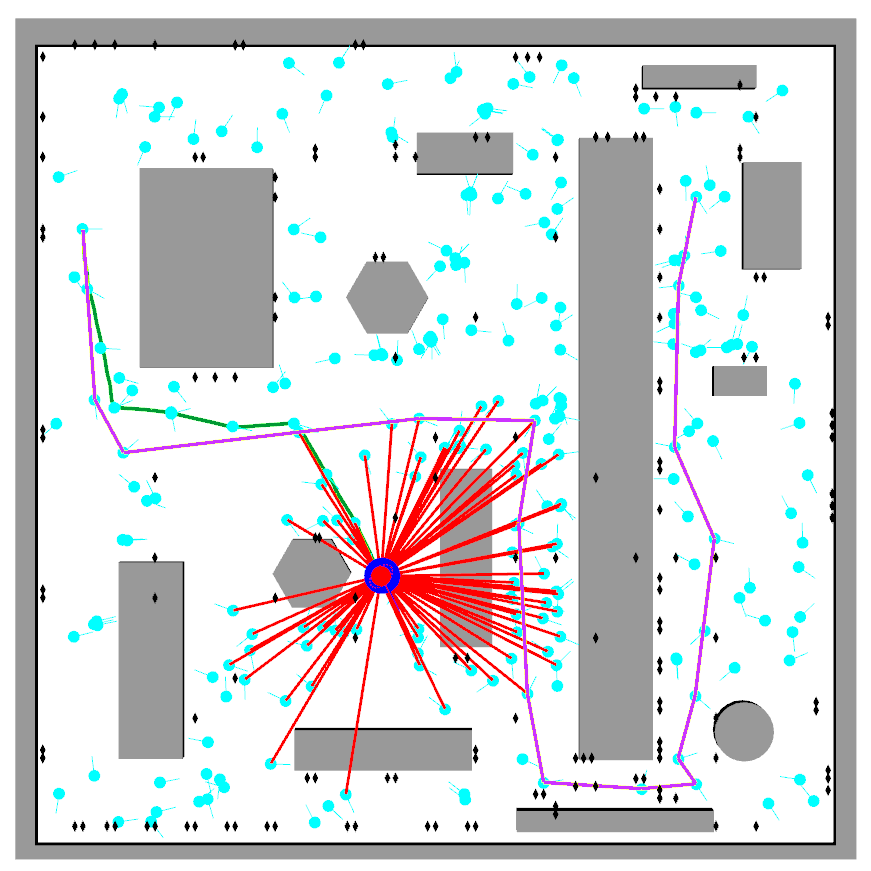}\label{fig:rollout-segment-1-b}}\quad
\subfigure[Robot approaches narrow passage $ P2 $ through a more direct path as compared to the MLP.]{\includegraphics[height=\SimFigSize]{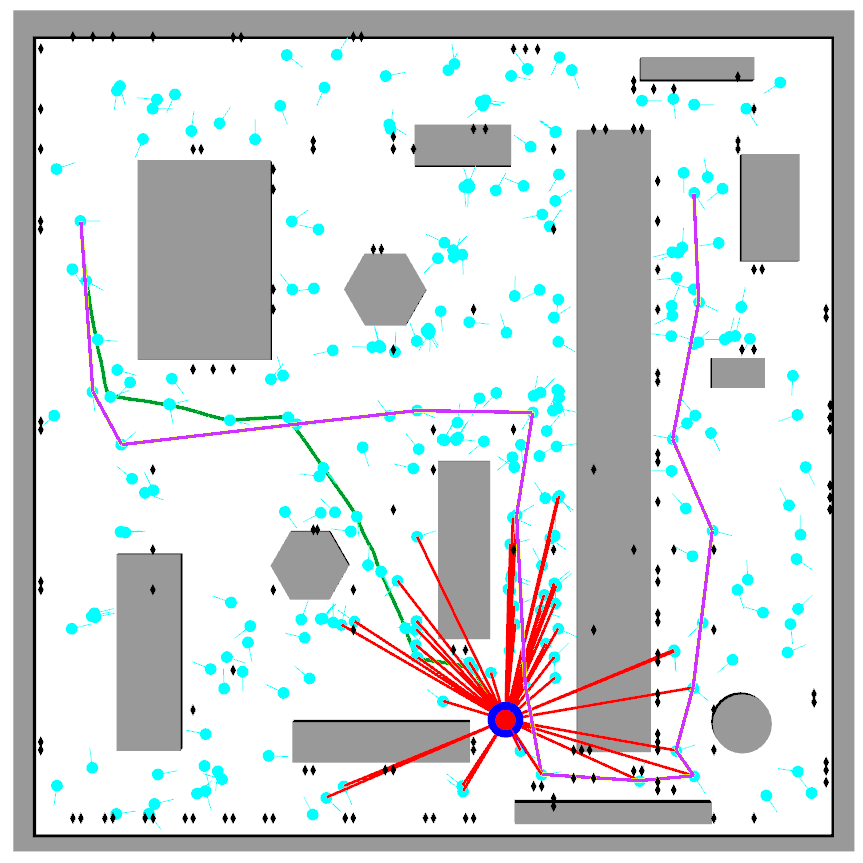}\label{fig:rollout-segment-1-c}}
\hspace{0.1in}
\subfigure[The robot stabilizes at a few FIRM nodes while passing through the narrow passage.]{\includegraphics[height=\SimFigSize]{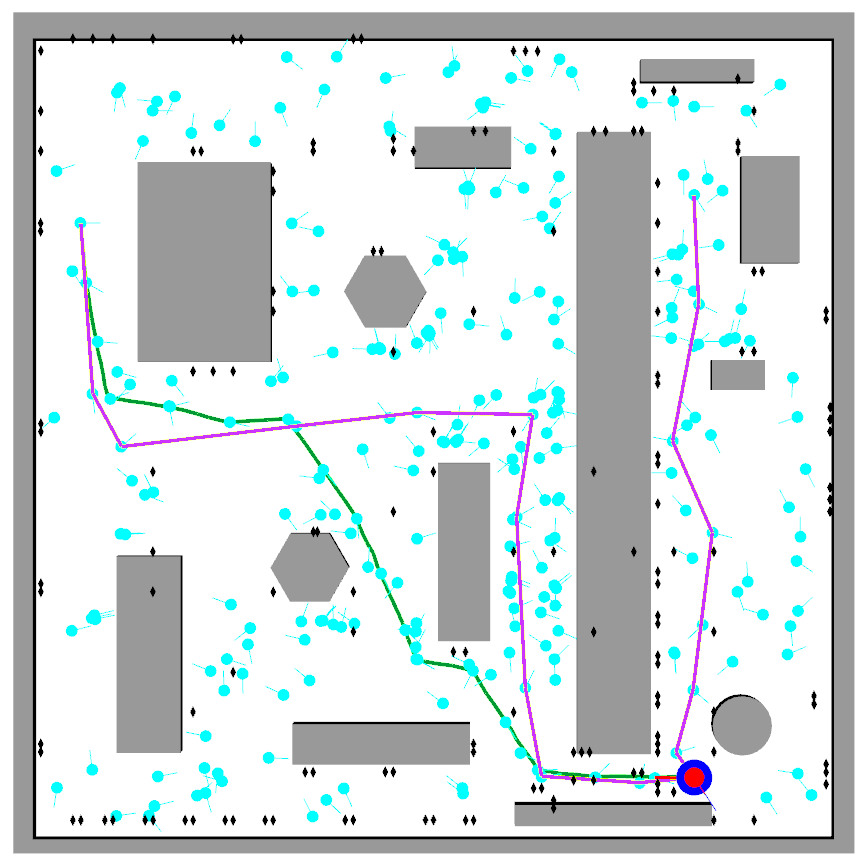}\label{fig:rollout-segment-1-d}}\quad
\subfigure[The robot approaches goal location B.]{\includegraphics[height=\SimFigSize]{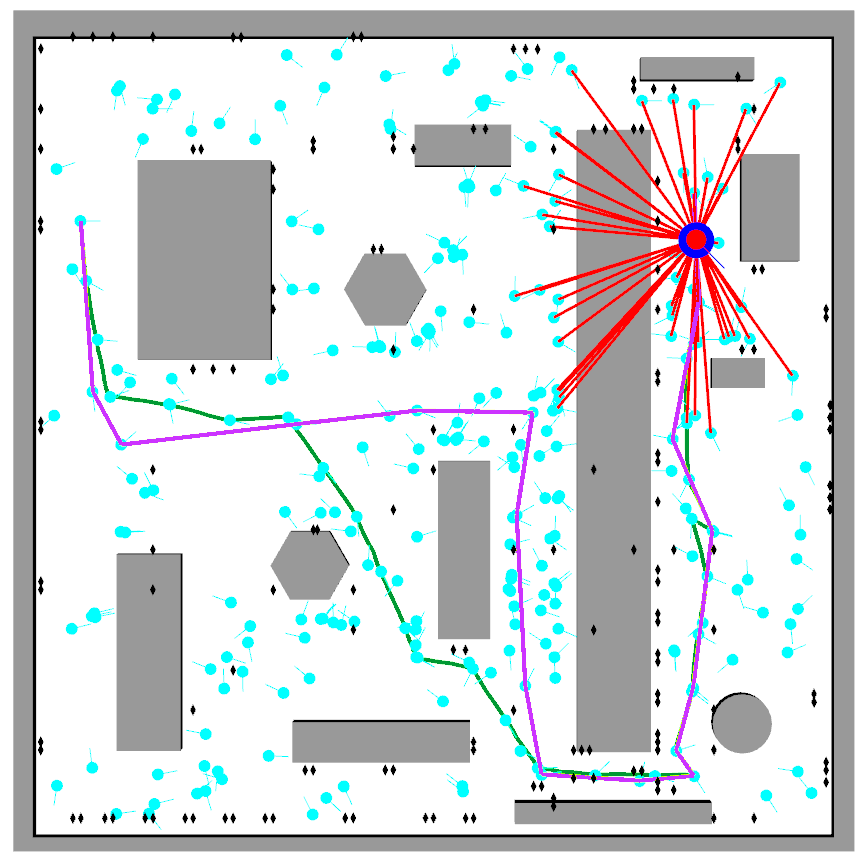}\label{fig:rollout-segment-1-e}}    
\hspace{0.1in}
\subfigure[The rollout path is shorter and faster than the FIRM's MLP. 
] {\includegraphics[height=\SimFigSize]{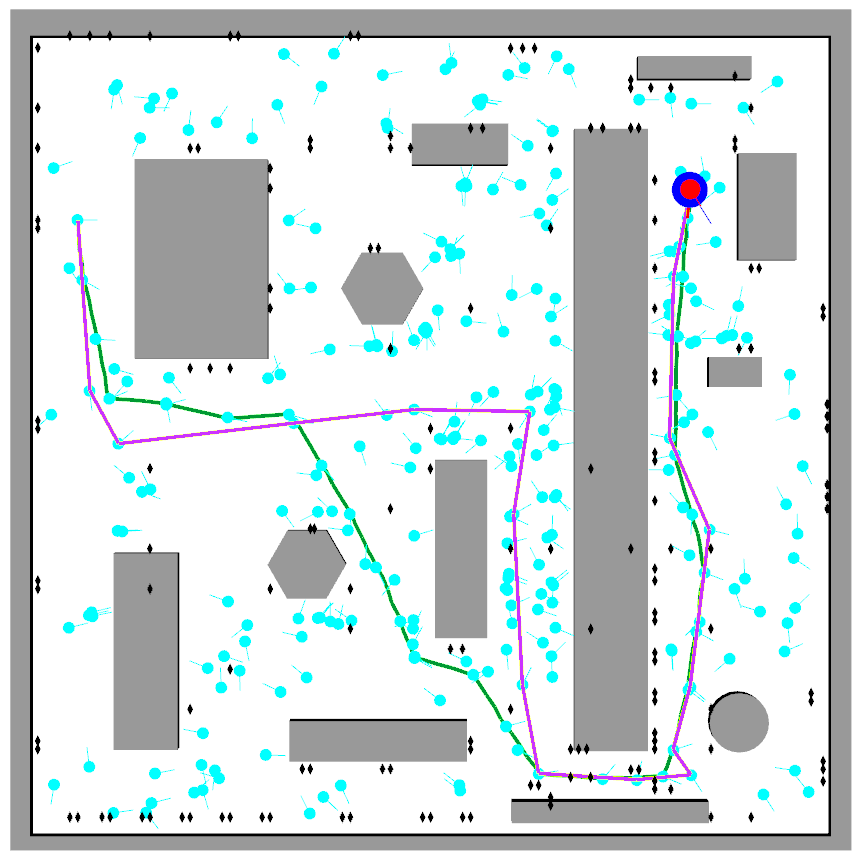}\label{fig:rollout-segment-1-f}}
\caption{ Segment 1 with rollout: Starting at A and going to B.}
\end{figure}

Similar behaviors are observed when completing segments 2 ($ B\rightarrow C $) and 3 ($ C\rightarrow D $) and 4 ($ D\rightarrow E $). Fig. \ref{fig:compare-segment-3-4} shows final paths for segments 3 and 4. We observe different levels of stabilization and different path shapes when passing through passage $ P2 $ in segment 3 and 4. This is due to asymmetric distribution of information and risk in the environment. 

\begin{figure}
	\centering
	\subfigure[Segment 3 ($ C\rightarrow D $).]
	{\includegraphics[height=\SimFigSize]{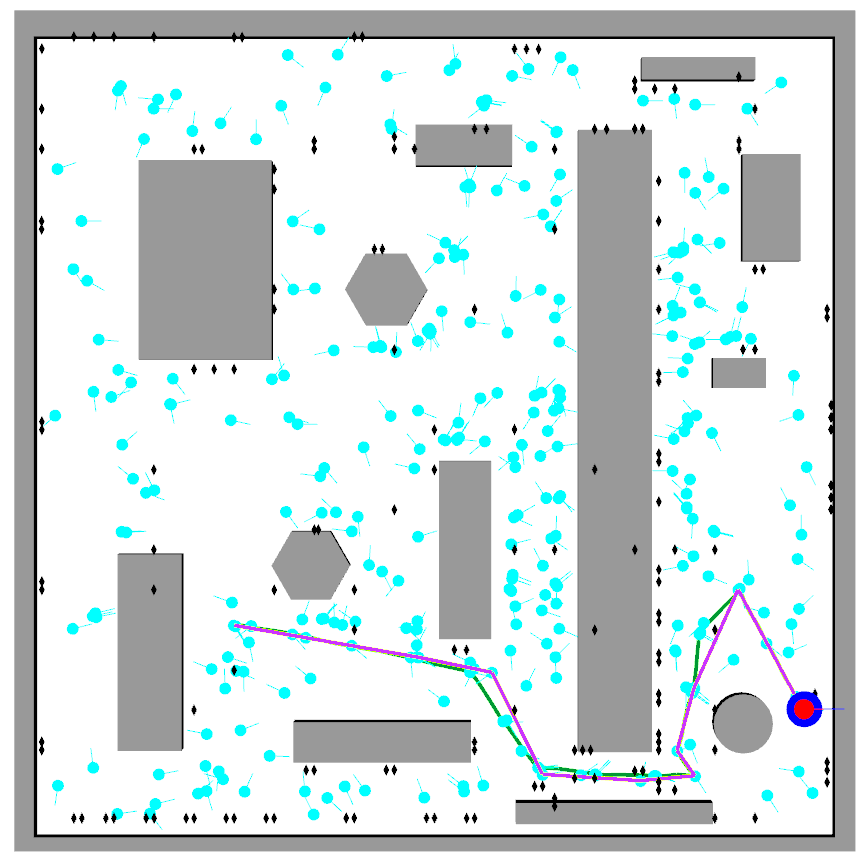}\label{fig:rollout-segment-3-b}}
	\hspace{0.1in}
		\subfigure[Segment 4 ($ D\rightarrow E $).]
		{\includegraphics[height=\SimFigSize]{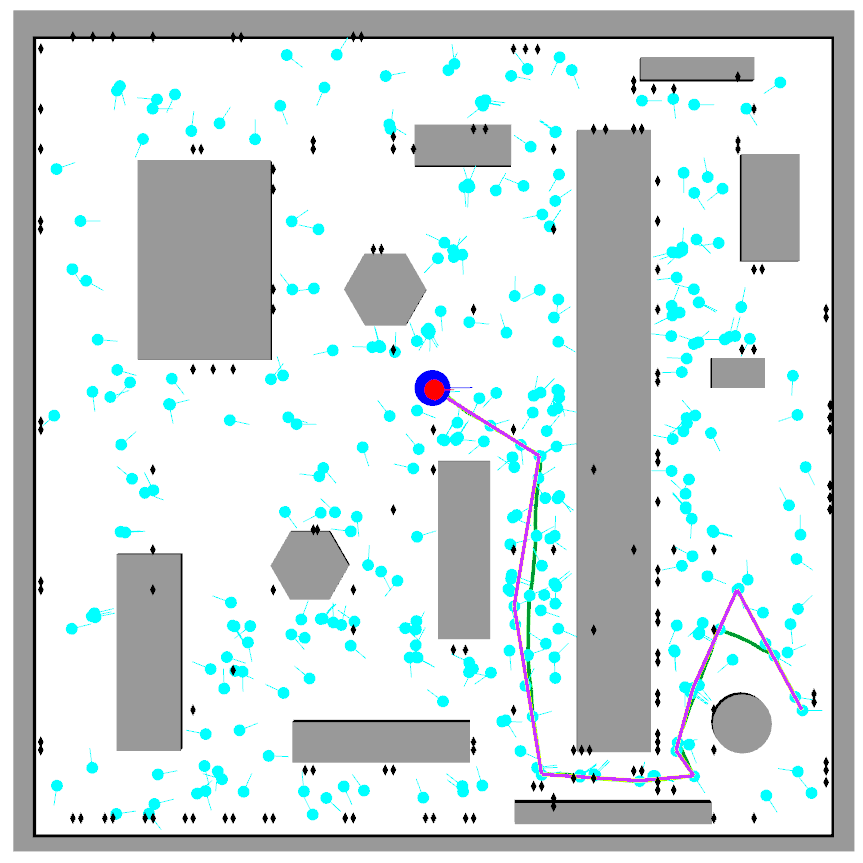}\label{fig:rollout-segment-4-b}}
	\caption{Asymmetric costs and random execution noises.}
	\label{fig:compare-segment-3-4}
\end{figure}

\subsection{Analysis of Simulation Results}\label{subsec:rollout-sim-analysis}
In this section, we discuss the statistical analysis for the presented simulation results by running the planner multiple times. The results show that the proposed method significantly increases the performance of the standard FIRM implementation while preserving its robustness and scalability.

\ph{Cost of Execution} We recorded the amount of localization uncertainty (trace of covariance) along the robot's path. Figure \ref{fig:cost-vs-time} shows the cumulative version of this cost on 50 runs for the same task under the rollout-based planner and standard FIRM. We note that the cost for the rollout based policy rises slower than the cost for FIRM, and as the planning horizon increases, rollout offers increasing returns in performance. 

\ph{Selective stabilization}
Node stabilization makes FIRM robust and scalable while maintaining the optimal sub-structure property on the graph \axx{(via edge independence; see Fig. \ref{fig:funnel-FIRM})}. Although stabilization allows FIRM to provide certain guarantees, it adds stabilization time and cost at each node to the time and cost of the mission.
The rollout-based planner brings a higher level of intelligence to the process of node stabilization. Rollout performs stabilization when required and bypasses it when possible. Bypassing the stabilization allows the robot to complete the task faster and with less cost. Fig.\ref{fig:nodes-reached} shows the number of nodes the robot has stabilized to on 50 different runs. In this example, the robot stabilizes to $ \sim\!\!45 $ nodes under FIRM compared to $ \sim\!\!10 $ nodes under the rollout-based planner ($ \sim\!\!75 $\% reduction), while the difference is growing as the task becomes longer.

\ph{Time of Task completion}
Another quantitative performance measure is the time it takes for a planner to complete the task while guaranteeing a high likelihood of success. From Fig. \ref{fig:cost-vs-time} and \ref{fig:nodes-reached}, the time taken to complete the task with rollout is around $2500$ time-steps ($250$ seconds) compared to $3000$ time-steps ($300$ seconds) for FIRM. There is $\sim\!\!15$\% reduction in the time to complete the task compared to the standard FIRM algorithm. The improvement in execution time makes the rollout-based planner a better candidate than FIRM for time-sensitive applications.

\begin{figure}
\centering
\subfigure[]{\includegraphics[height=1.35in]{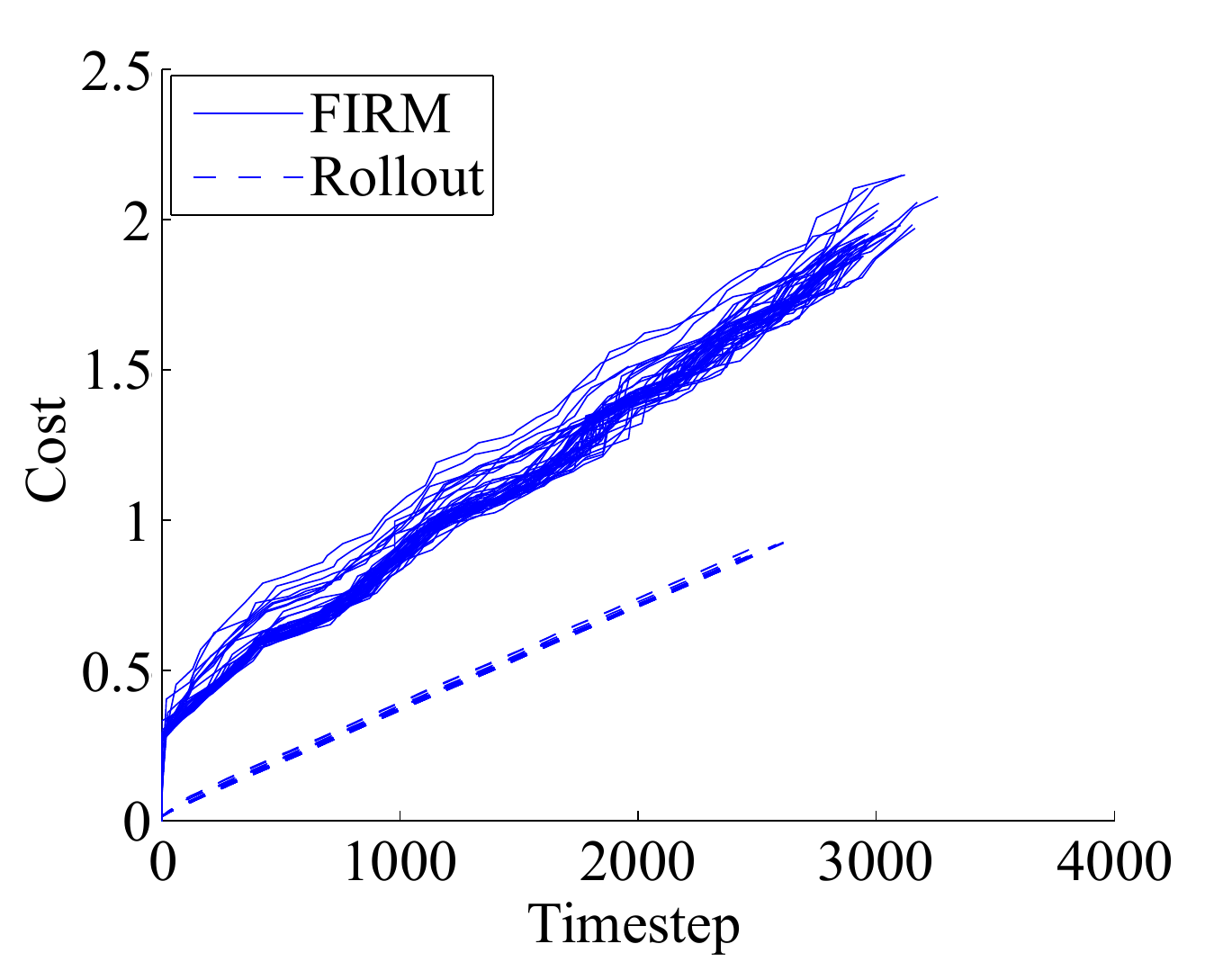}\label{fig:cost-vs-time}}
\hspace{-0.1in}
\subfigure[]{\includegraphics[height=1.35in]{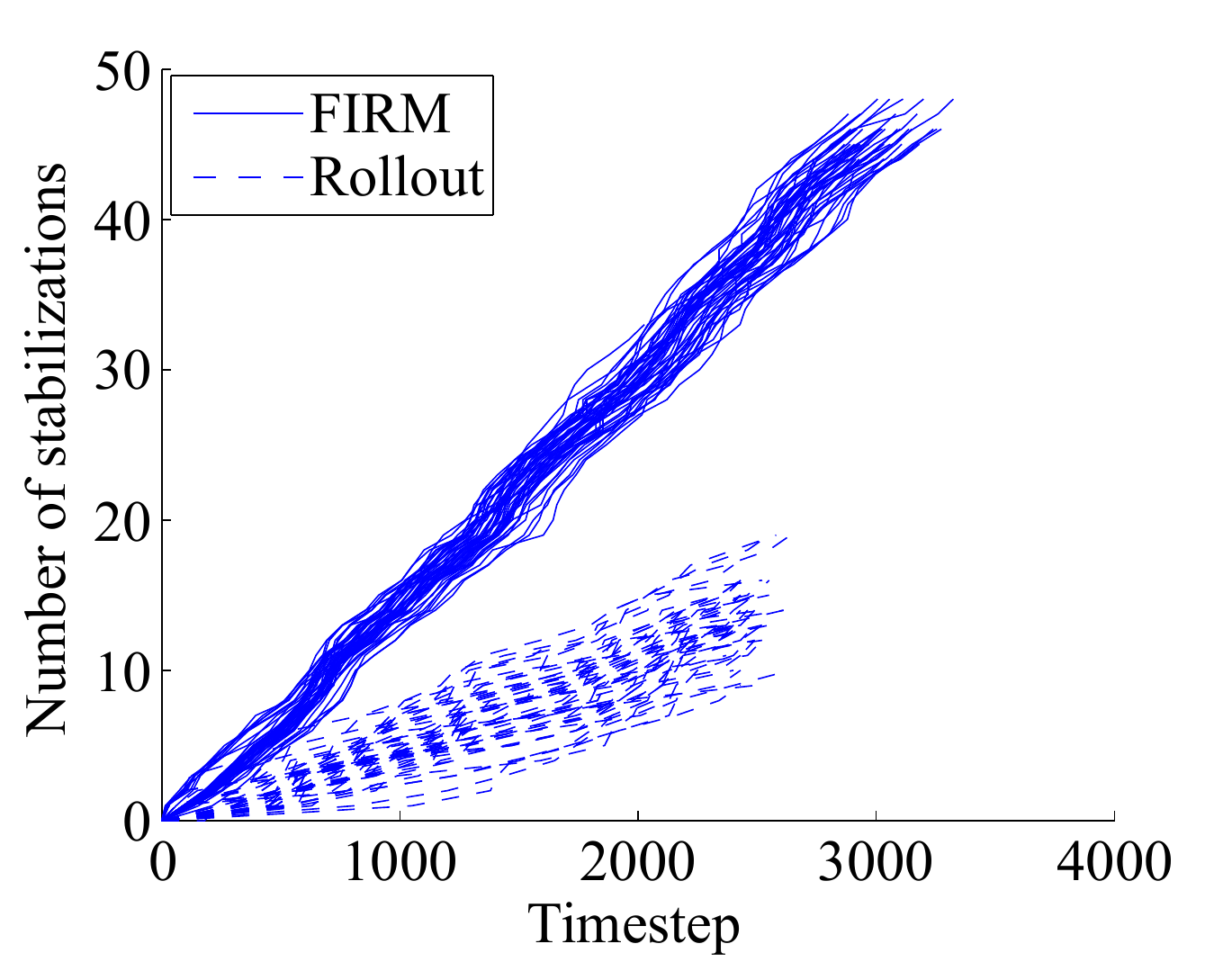}\label{fig:nodes-reached}}
\caption{Performance comparison between the original FIRM and proposed planner on 50 runs. (a) The execution cost for FIRM rises faster than the cost of the rollout-based policy. (b) The number of belief nodes that the robot stabilizes to, during plan execution, is consistently lower for the rollout.}
\end{figure}

\begin{figure*}[ht!]
	\centering
	\hspace{0in}
	\subfigure[\axx{The cost to complete the task reduces as the number of underlying graph nodes increases (due to availability of more options). Sharp dips in the graph correspond to cases where adding a new node captures a new low-cost homotopy class.}]
	{\includegraphics[height=1.7in]{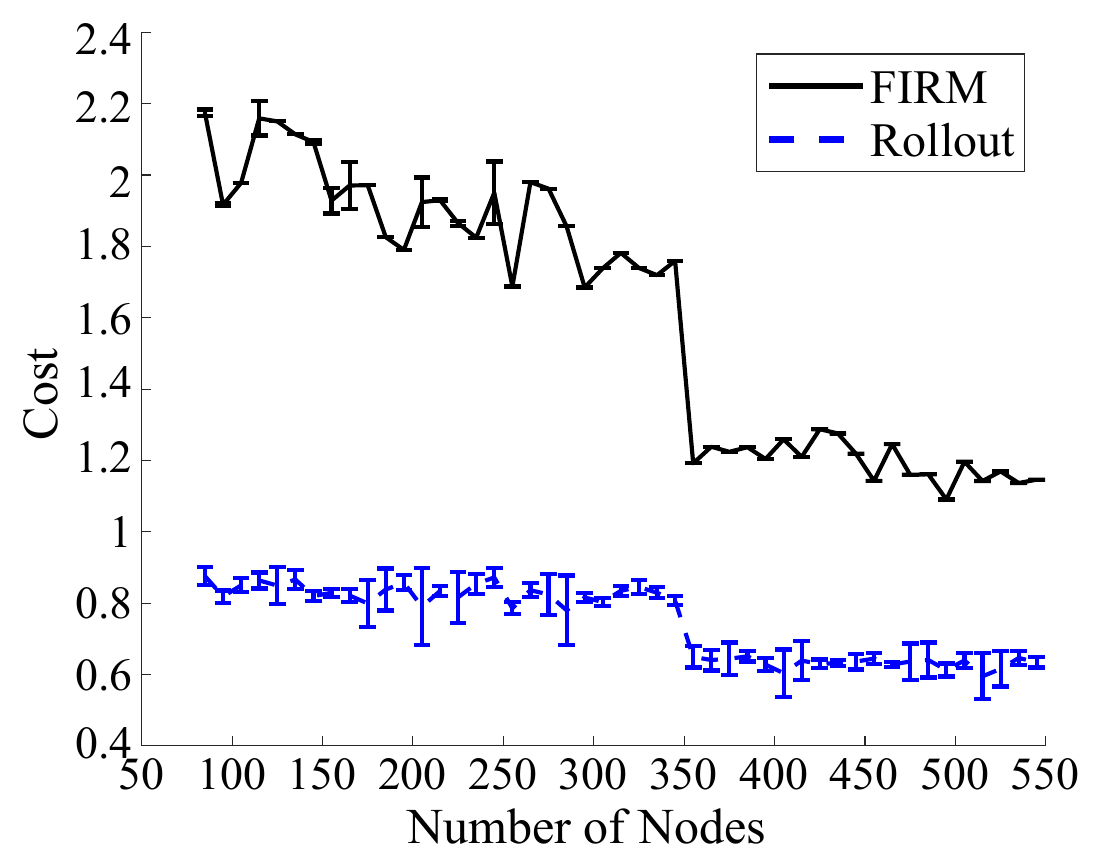}\label{fig:cost-vs-nodes}}
	\hspace{0.07in}
	\subfigure[Time taken by the robot to complete the mission. As graph density increases robot finds more nodes to connect to during the rollout phase and thus can take more shortcuts which reduces its total driving time.]{\includegraphics[height=1.7in]{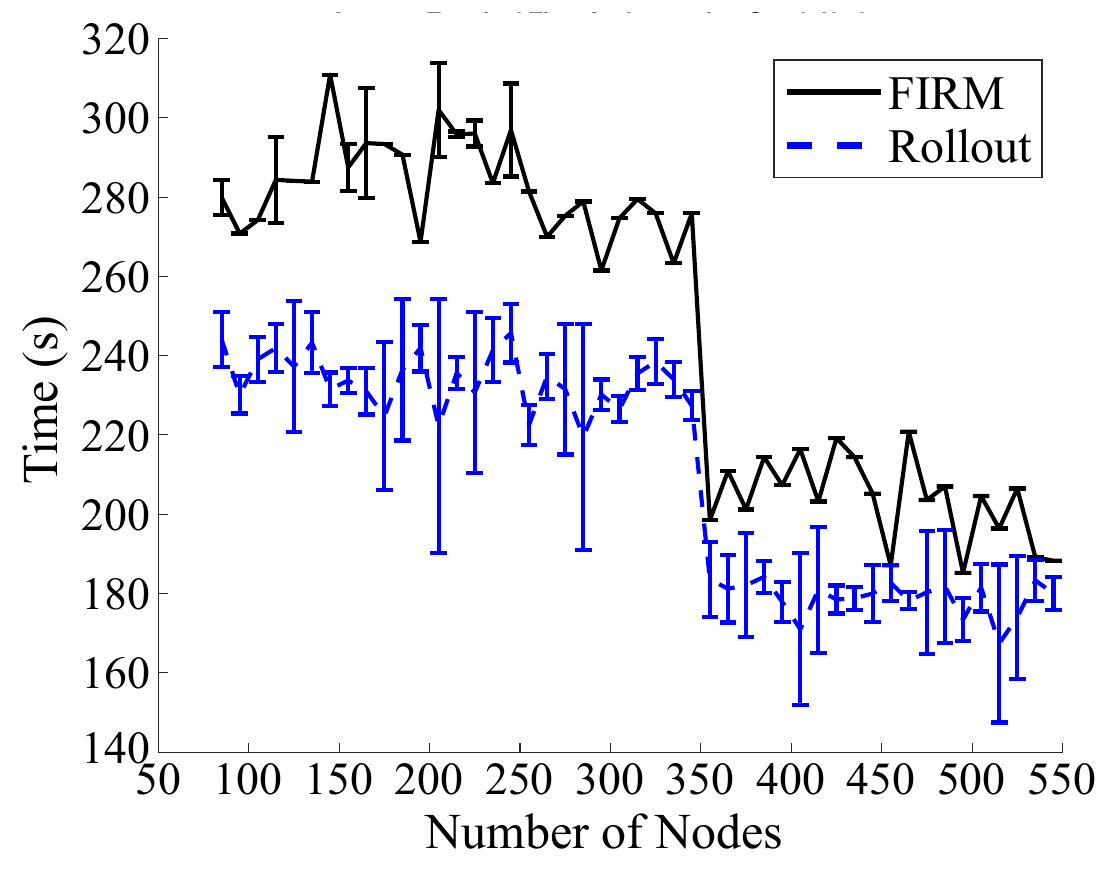}\label{fig:time-vs-nodes}}
	\hspace{0.07in}
	\subfigure[\axx{The number of visited nodes (the number of stabilizations) in rollout is significantly smaller than FIRM.}]{\includegraphics[height=1.7in]{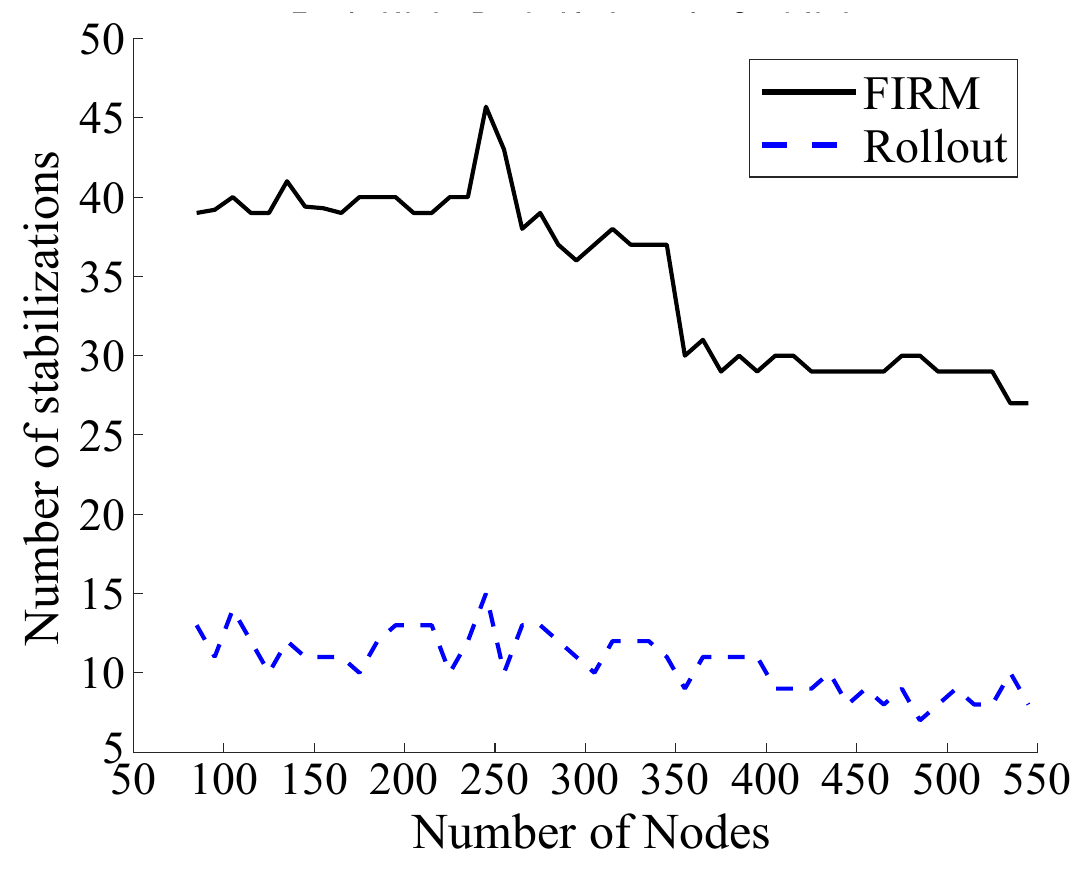}\label{fig:nodes-reached-vs-nodes}}
	\caption{The effect of increasing FIRM graph density on the rollout solution behavior. Each graph node is connected to all nodes within its radius $R=5$. As the number of nodes in the graph crosses 350, a new connection through narrow passage 2 is found, which leads to sharp changes in the graphs.}
	\label{fig:increase-node-density}
\end{figure*}

\begin{figure}
	\centering
	\includegraphics[width=1.8in]{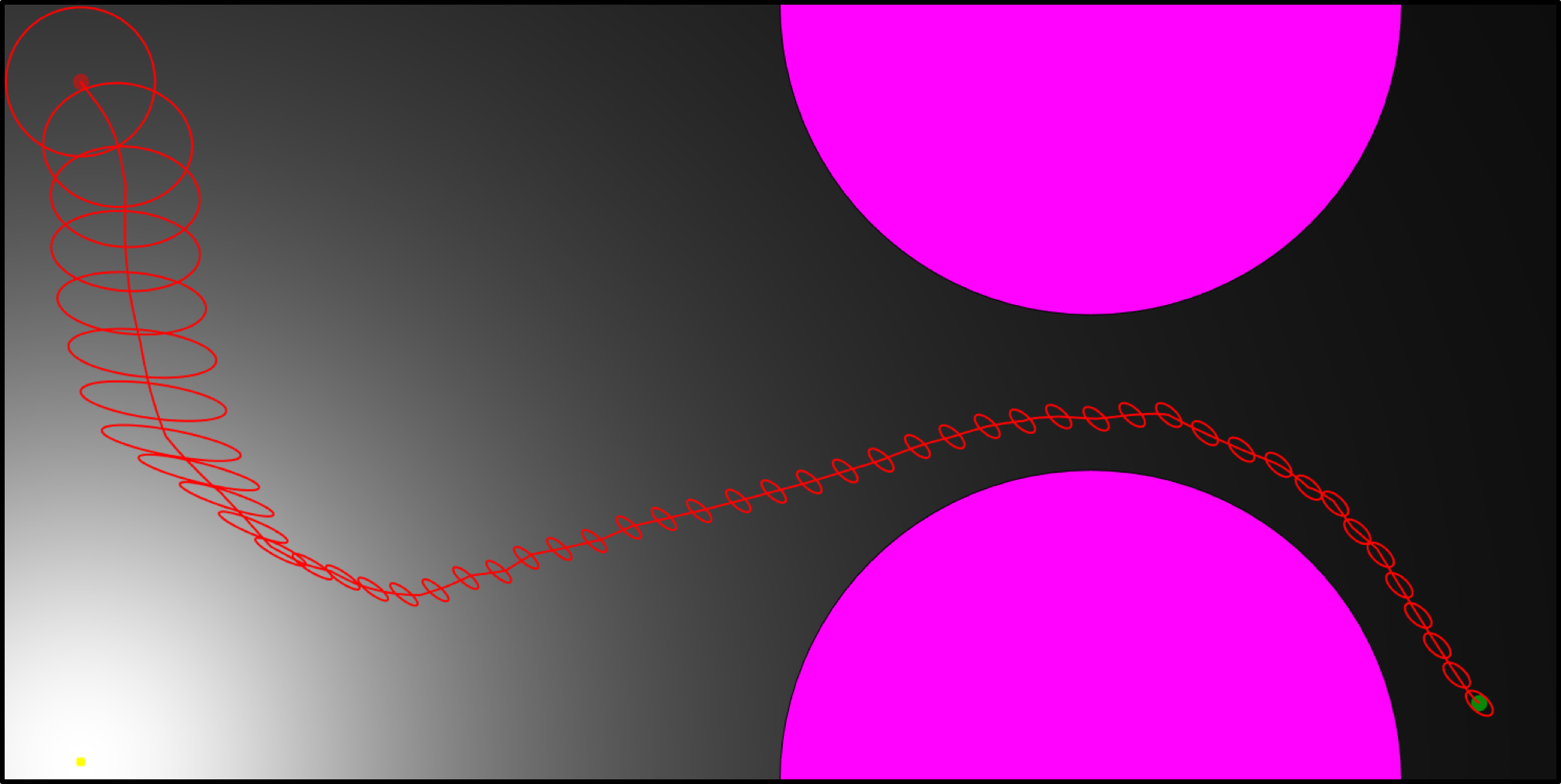}
	\caption{A simple environment with two obstacles (magenta) and one information beacon (light area in the bottom-left corner). The local optimum from start (top-left) to goal (bottom-right) is shown (the red trajectory). In this environment, there is only one local optimum (i.e., the global optimum).}
	\label{fig:simpelEnv}
\end{figure}

\begin{figure}
	\centering
	\subfigure[\axx{The initial guess (in red) for ILO-based methods using deterministic RRT planner (black tree).}]{\includegraphics[height=1.6in]{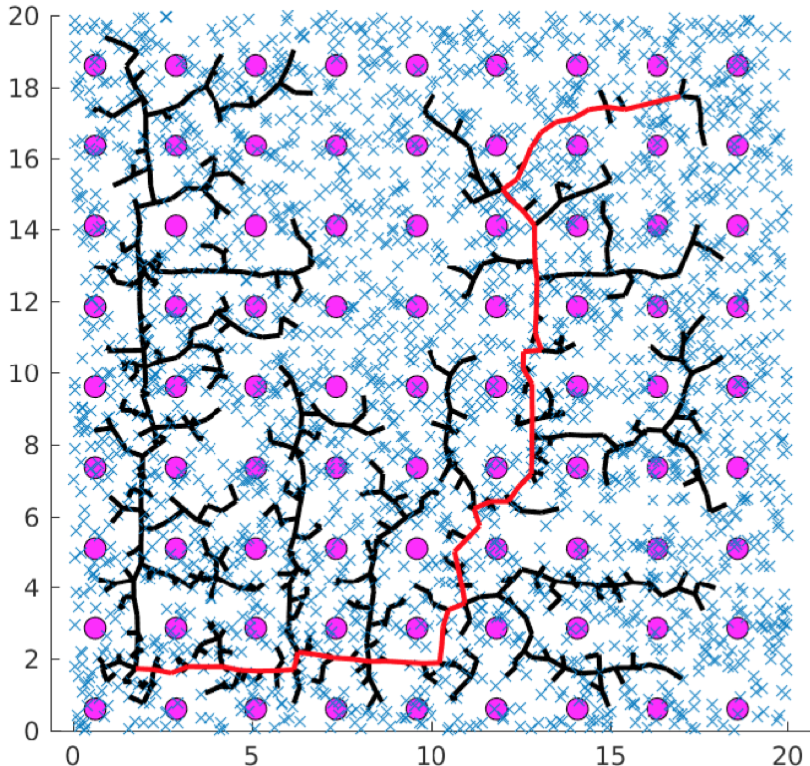}\label{fig:rrt-task3-solution}}
	\subfigure[\axx{The locally optimized solution (red) and the path under the rollout policy (green).}]{\includegraphics[height=1.5in]{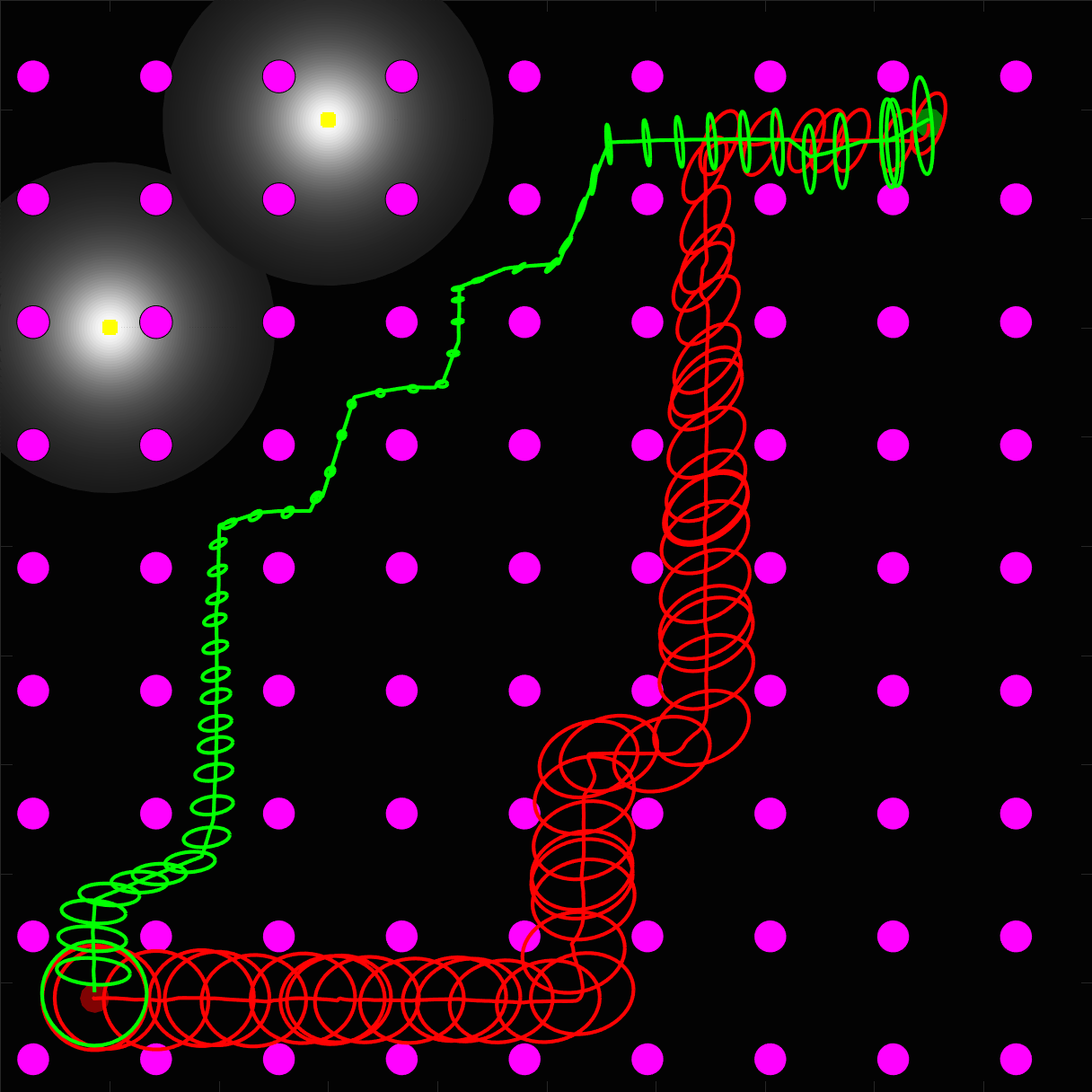}\label{fig:ilqg-rollout-task3-solution}}
	\caption{Comparison of rollout vs local optimization-based methods. Obstacles (magenta) and information beacons (yellow light sources) are shown, whose signal strength declines quadratically with the robot-beacon distance. (a) Local optimization-based methods require an initial solution. A deterministic RRT (black) generates an initial solution (red). (b) The final solution computed by belief space ILO (red) is restricted to the homotopy of the RRT solution. On the other hand, the rollout-based policy guides the robot (green) toward the global optimum by exploiting the underlying global feedback structure.}
	\label{fig:ilqg_solution_task3}
\end{figure}

\ph{Varying node density}
To further analyze the results, we study the performance of the method as a function of offline graph density. Fig. \ref{fig:increase-node-density} shows how the cost, number of stabilizations, and time to complete the task change as the density of underlying graph increases. 

\subsection{Comparison with State-of-the-Art}
We compare our proposed method with iterative local optimization-based (ILO-based) methods. Extended LQR \cite{Berg016extended} for deterministic systems, Iterative Linear Quadratic Gaussian Control \cite{todorov-acc-ilqg}, and its belief space variant \cite{vanderberg-ijrr-ilqg} (referred to as BSP-iLQG, here) are among the pioneering ILO-based methods. Due to their optimization-based nature, ILO methods perform well when the problem has a single local optimum (i.e., the global optimum). When there are multiple local minima, the performance of ILO methods are sensitive to the initial solution. 

In belief space variants of ILO methods, the problem is typically solved in two phases. First, ignoring all uncertainties, a deterministic motion planning problem is solved (e.g., using RRT in \cite{vanderberg-ijrr-ilqg}) to find an initial trajectory from start to goal.  Second, the generated trajectory is utilized as the initial solution for a local optimization process in belief space. In our simulations, we use a holonomic 2D robot and point beacon observation model similar to the one used in \cite{vanderberg-ijrr-ilqg} (Sec. 6.2.2) whose signal strength decreases quadratically with robot's distance from the beacon. We compare ILO-based approaches to the proposed method in two aspects: 1) The sensitivity of the quality of the solution to the initial guess and 2) replanning time.

Figure \ref{fig:simpelEnv} shows an environment where there exists a single optimum (i.e., the local optimum is identical to the global optimum). In this environment, ILO-based methods perform well and can converge to the optimal solution. However, environments with more obstacles and multiple homotopy classes and/or environments with more complex information distribution induce multiple local minima, which degrades the performance of the ILO-based methods. Fig.~\ref{fig:ilqg_solution_task3} shows one such environment. The initial RRT as shown in Fig. \ref{fig:rrt-task3-solution} computes a path that takes the robot towards the goal diagonally. This limits the resulting local optimum solution to a homotopy class, quite different from the global optimum. On the other hand, the proposed rollout-based method does not require any initial solution, and will be able to find the optimal homotopy spanned by the underlying graph. This key difference is shown in Fig.~\ref{fig:ilqg-rollout-task3-solution}, where the generated solution (green) optimally exploits the information distribution (beacons are in the upper left corner) in the environment. In addition, the rollout-based method can update and repair the homotopy class during the execution to compensate for potential deviations due to the noise.

In addition to the solution quality, the replanning time in ILO-based methods grows with the problem horizon. For example, in BSP-iLQG, the complexity of the optimization algorithm is in the order of $ \mathcal{O}(Init)+\mathcal{O}(N_lN_i) $, where $\mathcal{O}(Init)$ refers to the complexity of computing an initial guess (e.g., solving a deterministic motion planning algorithm such as RRT), and $ \mathcal{O}(N_lN_i) $ refers to the complexity of belief optimization. $ N_l $ is the trajectory length and $ N_i $ is the number of iterations for the optimization to converge. Thus, the computational complexity grows unboundedly with the planning horizon (path length). 

Here, we experimentally compare the computational complexity (replanning time) of the proposed method with ILO-based methods. Although computing the initial solution $ \mathcal{O}(Init) $ in ILO-based methods can take significant amount of time, here, to simplify the results, we only report the time ILO spends on belief optimization, i.e., $\mathcal{O}(N_lN_i)$, and do not include the initial solution generation time in the graphs for ILO-based methods. 

We compare the results on the forest environment (like Fig. \ref{fig:ilqg_solution_task3}). Comparison is carried out in C++ on a PC with 3.40GHz Quad-Core Intel i7-3770 CPU and 16GB RAM running Ubuntu 14.04. We grow the environment and planning horizon at each step. As the forest grows, we maintain the same obstacle density as well as the information source density. The starting point is at the bottom-left and the goal is at the top-right for all environment sizes. For each environment size, we run the planners five times to record the statistical variations. Fig.~\ref{fig:timeComp} shows how the planning time of ILO-based methods increases as the problem size grows. On the other hand, the complexity of the proposed rollout-based method is constant regardless of the planning horizon. These results confirm the analyses in Section \ref{sec:complexity}. In this forest environment, the average replanning time is 80.1ms with a variance of 35.1ms for rollout connections (edge length) of 2.25m length (on average). Such rapid replanning capability allows for dynamic replanning in belief space to enable SLAP.
\begin{figure}
	\centering
	\includegraphics[width=2.8in]{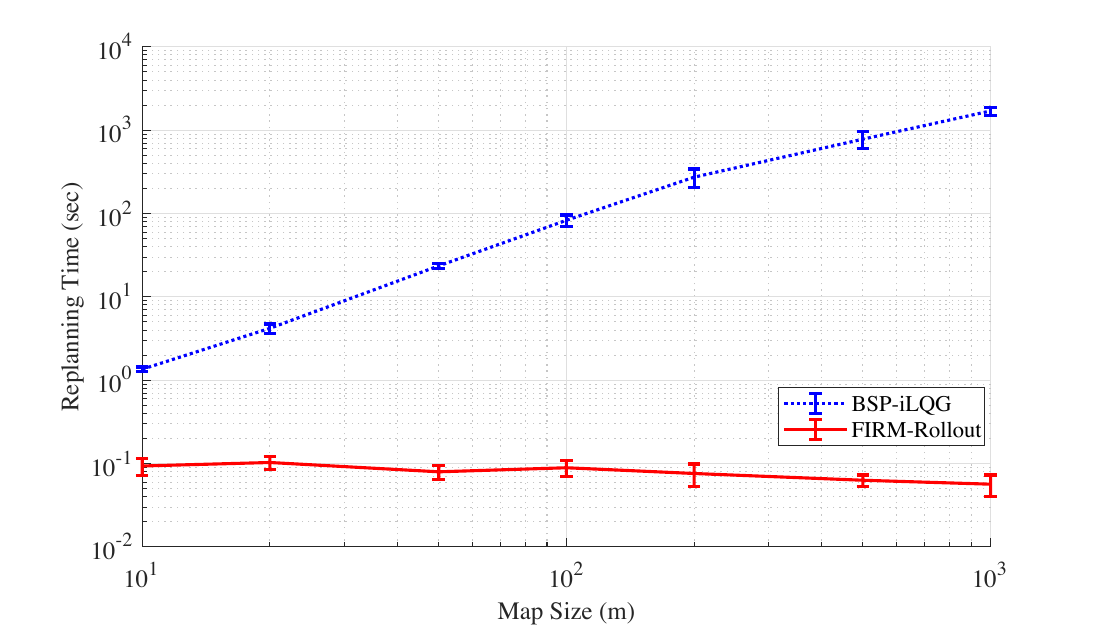}
	\caption{\axx{Planning time: ILO in belief space (blue) versus the proposed method (red). As the planning horizon (distance between start and goal) increases, local optimization-based methods take more time to plan and converge, whereas the planning time in rollout methods is not a function of planning horizon.}}
	\label{fig:timeComp}
\end{figure}

\section{Experimental Results for a Physical System} \label{sec:experiments}
In this section, we demonstrate the proposed online POMDP-based solution for the simultaneous localization and planning problem on a physical robot. We discuss the details of implementation and report the important lessons learned. A video demonstrating the experiments is available at \cite{youtube-video-RHC-FIRM}.

\vspace{\gapBeforeSection}
\subsection{Target Application and System Set Up} \label{subsec:systemSetUp}
Consider a scenario where the robot needs to operate and reach a goal in an office environment. Each time the robot reaches a goal, a new goal is submitted by a higher-level application (e.g., manually by a user or multiple users). We investigate the performance and robustness of the method to (i) changing obstacles, such as doors, and moving people, (ii) changes in the goal location, (iii) deviations due to intermittent sensory failures, and (iv) kidnap situations (significant sudden deviation in the robot's location). In all these cases, an online replanning scheme can help the robot to recover from the off-nominal situation and accomplish its goal. 
In particular, we study the ``kidnapped'' situation, where a person might move the robot to an unknown location during the plan execution, and the robot needs to recover from this catastrophic deviation. 
The main focus of the experiments in this section is to demonstrate SLAP on physical robots by enabling online belief space (re)planning.

\subsubsection{Environment} \label{subsec:environment}
Our experiments are conducted in the fourth floor of the Harvey Bum Bright (HRBB) building at the Texas A\&M University campus. The floor-plan is shown in Fig. \ref{fig:floorPlan}. The floor spans almost 40 meters of hallways whose average width is approximately 2 meters, which is distinguished in yellow and blue in Fig. \ref{fig:floorPlan}. The particular set of experiments reported in this paper is conducted in the region which is highlighted in blue in Fig. \ref{fig:floorPlan}, part of which contains a large cluttered office area (407-area). This area has interesting properties that makes the planning more challenging: (i) 407-area is obstacle-laden (chairs/desks and workstations). (ii) As is seen in Fig. \ref{fig:floorPlan}, there are several doors in this area which may be open or closed. Two of these doors (front-door and back-door) are labeled in Fig. \ref{fig:floorPlan}. (iii) There are objects such as chairs and trash-cans in this environment which usually get displaced. (iv) There are moving people, who are avoided using a reactive behavior, which may displace the robot from its planned path. 
\begin{figure}[ht!]
\centering
\includegraphics[width=2.7in]{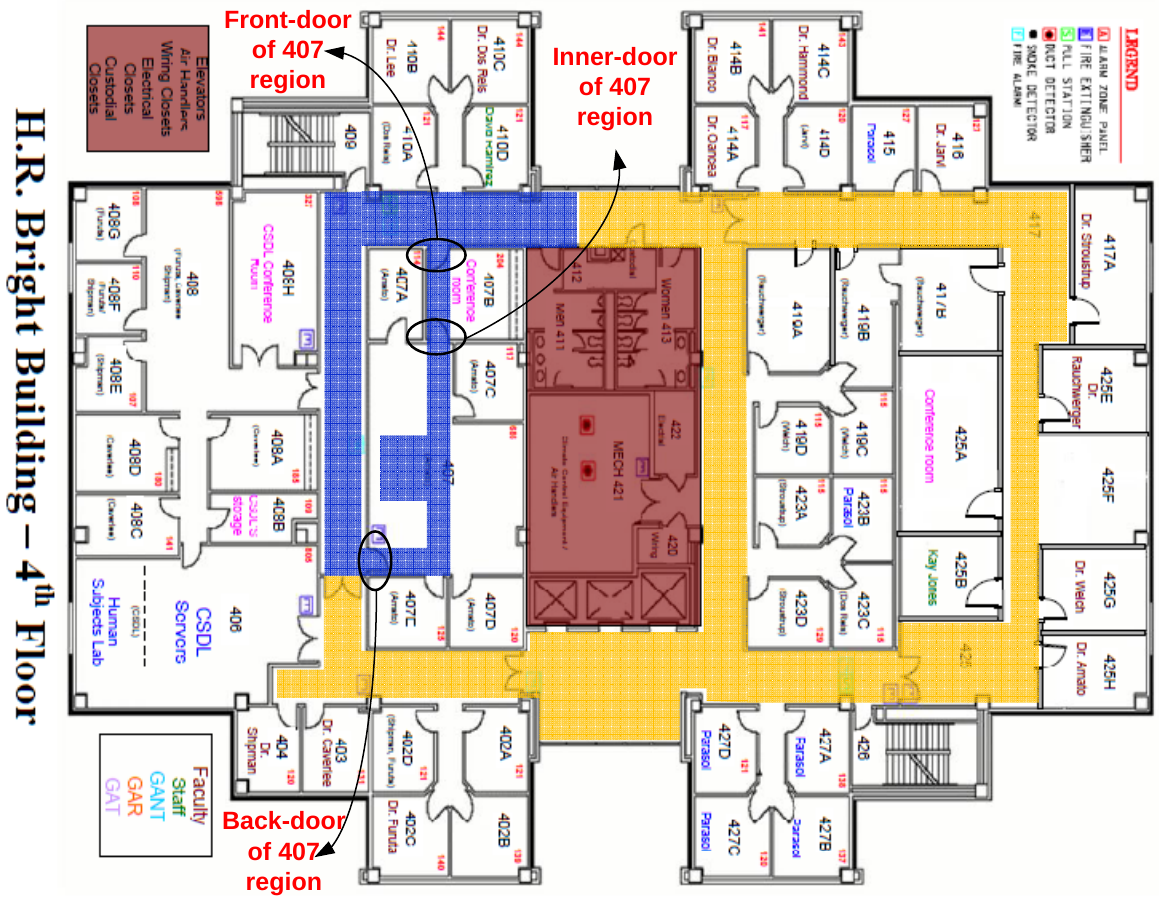}
\caption{Floor-plan of the environment, in which experiments are conducted.}
\label{fig:floorPlan}
\end{figure}

\subsubsection{Robot Model} \label{subsec:robotModel}
The physical platform utilized in our experiments is an iRobot Create mobile robot (See Fig. \ref{fig:robot-landmark}). The robot can be modeled as a unicycle with the following kinematics:
\begin{align}\label{eq:unicycle-motion-model}
\!\!\!\!x_{k+1} \!=\! f(x_k,u_k,w_k) \!=\!
\left(\!\!
\begin{array}{c}
\mathsf{x}_{k}+(V_k\delta t + n_v\sqrt{\delta t})\cos\theta_k \\
\mathsf{y}_{k}+(V_k\delta t + n_v\sqrt{\delta t})\sin\theta_k \\
\mathsf{\theta}_{k}+\omega_k\delta t + n_{\omega}\sqrt{\delta t}
\end{array}\!\right),
\end{align}
where 
$ x_k = (\mathsf{x}_k, \mathsf{y}_k, \mathsf{\theta}_k)^T $ describes the robot state at the $ k $-th time step. $(\mathsf{x}_k,\mathsf{y}_k)^T$ is the 2D position and $ \theta_k $ is the heading angle of the robot. Control commands are the linear and angular velocities $ u_k = (V_k,\omega_k)^T $. We use the Player robot interface \cite{gerkey2003player} to send the control commands to the robot. 

\ph{Motion noise} The motion noise vector is denoted by $w_k=(n_v,n_{\omega})^T\sim\mathcal{N}(0,\mathbf{Q}_k)$, which is mostly resulted from uneven tiles on the floor, wheel slippage, and inaccuracy in the duration of the applied control signals. Experimentally, we observed that in addition to the fixed uncertainty associated with the control commands, there exists a portion of the noise that is proportional to the signal strength. Thus, we model the variance of the process noise at the $ k $-th time step as $ \mathbf{Q}_{k} = \operatorname{diag}((\eta V_{k}+\sigma_b^{V})^{2}, (\eta \omega_{k}+\sigma_b^{\omega})^{2}) $, where in our implementations we have $ \eta = 0.03 $, $\sigma_b^{V} = 0.01\text{m/s}$, $ \sigma_b^{\omega} = 0.001 \text{rad} $.
\begin{figure}[ht!]
\vspace{-10pt}
\centering
\includegraphics[width=2.3in]{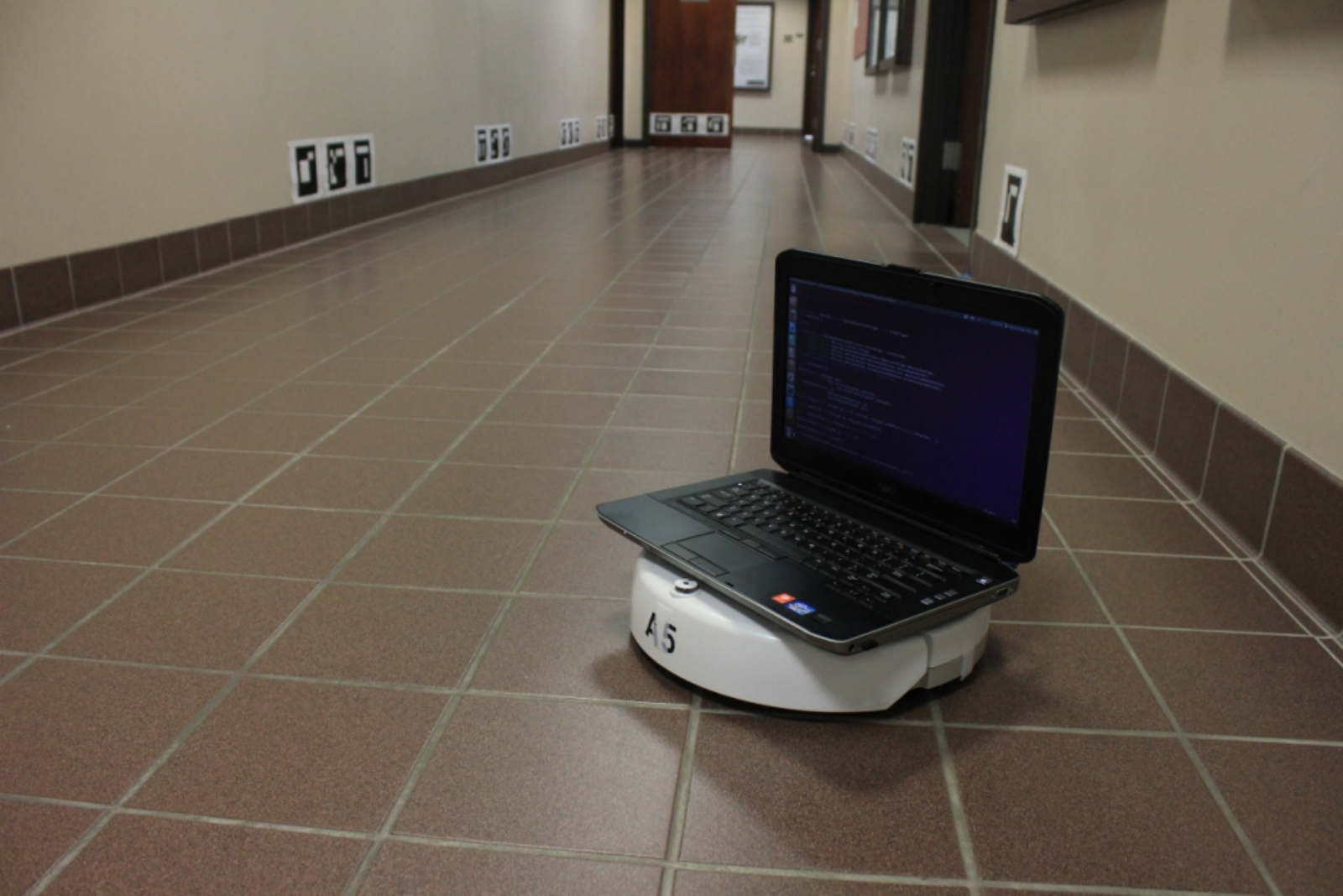}
\caption{A picture of the robot (an iRobot Create) operating in the office environment. Landmarks can be seen on the walls.}
\label{fig:robot-landmark}
\end{figure}

\subsubsection{Sensing Model} \label{subsec:sensorModel}
For sensing purposes, we use the on-board laptop webcam. We perform a vision-based landmark detection based on ArUco (a minimal library for Augmented Reality applications) \cite{Aruco}. Each landmark is a black and white pattern printed on a letter-size paper. The pattern on each landmark follows a slight modification of the Hamming code, and has a unique id, so that it can be detected robustly and uniquely. Landmarks are placed on the walls in the environment (see Fig. \ref{fig:robot-landmark}). 
The absolute position and orientation of each landmark in the environment is known. The ArUco library performs the detection process and presents the relative range and bearing to each visible landmark along with its id. Denoting the $ j $-th landmark position in the global 2D coordinate frame as $ ^{j}L $, we can model the observation as a range-bearing sensing system:
\begin{align}
\nonumber {^j}z_{k}&=[\|{^j}\mathbf{d}_{k}\|,\text{atan2}({^j}d_{2_{k}},{^j}d_{1_{k}})-\theta]^T+{}^jv,~~{^j}v\sim\mathcal {N}(\mathbf{0},{^j}\mathbf{R}),
\end{align}
where ${^j}\mathbf{d}_{k}=[{^j}d_{1_{k}},{^j}d_{2_{k}}]^T:=[\mathtt{x}_{k},\mathtt{y}_{k}]^T-L_j$. 

\ph{Measurement noise} A random vector $ {^j}v $ models the measurement noise associated with the measurement of the $ j $-th landmark. Experimentally, we observed that the intensity of the measurement noise increases by the distance from the landmark and by the incident angle. The incident angle refers to the angle between the line connecting the camera to landmark and the wall, on which the landmark is mounted. Denoting the incident angle by $ \phi\in[-\pi/2,\pi/2] $, we model the sensing noise associated with the $ j $-th landmark as a zero mean Gaussian, whose covariance is
\begin{align}
\nonumber
{^j}\mathbf{R}_{k} = \operatorname{diag}(
(\eta_{r_{d}}\|{^j}\mathbf{d}_{k}\|+\eta_{r_{\phi}}|\phi_{k}|+\sigma^r_b)^2,\\
(\eta_{\theta_{d}}\|{^j}\mathbf{d}_{k}\|+\eta_{\theta_{\phi}}|\phi_{k}|+\sigma^{\theta}_b)^2
),
\end{align}
where, in our implementations we have $ \eta_{r_{d}} = 0.1 $, $\eta_{r_{\phi}} = 0.01$, $ \sigma^r_b = 0.05\text{m} $, $ \eta_{\theta_{d}} = 0.001 $, $ \eta_{\theta_{\phi}} = 0.01 $, and $ \sigma^{\theta}_b = 2.0\text{deg} $.

\ph{Full vector of measurements} At each step, the robot observes the set of landmarks that fall into its field of view. Given that the robot can see $ r $ landmarks $ \{L_{i_1},\cdots,L_{i_r} \} $, the total measurement vector is 
$z=[{^{i_{1}}}z^T,\cdots,{^{i_{r}}}z^T]^T$. Due to the independence of measurements of different landmarks, the full observation model can be written as $z=h(x)+v$, where $ v = [{^{i_{1}}}v^T,\cdots,{^{i_{r}}}v^T]^T \sim\mathcal{N}(\mathbf{0},\mathbf{R})$ and $\mathbf{R}=\text{diag}(^{i_{1}}\mathbf{R},\cdots,{^{i_{r}}}\mathbf{R})$.

\subsection{SLAP versus Decoupled Localization and Planning} \label{subsec:closed-PRM}
In this section, we contrast the results of a traditional decoupled localization and planning with the proposed SLAP solution. Decoupled localization and planning here refers to a method, where the planner first generates a plan (ignoring the localizer) and then, in the execution phase, the localizer estimates the state to aid the controller to follow the planned trajectory. However, in the proposed SLAP solution, the planner takes the localizer into account in the planning phase and replans simultaneously as the localizer updates its estimation.

The test environment is shown in Fig. \ref{fig:prm-env}. Blue regions are obstacles and black regions are free space. Landmarks are shown by small white diamonds. The start and goal locations for the motion planning problem are marked in Fig. \ref{fig:prm-env}. The goal location is inside 407-area (see Fig. \ref{fig:floorPlan}) and the starting location is close to the front door. 

\ph{Decoupled planning and localization} To select a suitable planner, we tried a variety of traditional planners such as PRM, RRT, and their variants. We observed that following the plan generated by most of these methods leads to collisions with obstacles and cannot reach the goal point due to the high motion noise (of our low-cost robot) and due to the sparsity of the information in certain parts of the test environment. The best results are achieved using the MAPRM \axx{(Medial-Axis PRM) method \cite{wilmarth1999maprm}.} This planner is computationally more expensive than the other variants but is more powerful in dealing with collisions by sampling points on the medial axis of the free space and constructing a PRM that has the most clearance from obstacles. An MAPRM graph (in white) for this environment is shown in Fig.~\ref{fig:prm-env}.

In this environment, there are two main homotopy classes of paths between the start and goal nodes: Through the front door of 407-area and through the back door of the 407-area. From Fig.~\ref{fig:prm-env}, it is obvious that the path through the front door is shorter. Moreover, the path through the front door has a larger obstacle clearance (larger minimum distance from obstacles along the path) compared to the path through the back door (since the back door is half-open). Therefore, based on conventional metrics in deterministic settings, such as shortest path or maximum clearance, MAPRM chooses the path through the front door over the path through the back door. The feedback tree that results from solving DP in this case is shown in Fig. \ref{fig:prm-feedback}. As expected, feedback guides the robot toward the goal through the front door. To execute MAPRM's plan, we use time-varying LQG controllers to keep the robot close to the generated path. However, due to the lack of enough information along the nominal path, the success rate of this plan is low, and the robot frequently collides with obstacles. The success probability along the nominal path is computed by multiple (100) runs and is equal to 27\% (27 runs out of 100 runs were collision-free).

\ph{FIRM-based SLAP}
As can be seen in Fig. \ref{fig:prm-env}, the distribution of information is not uniform in the environment. The density of landmarks (information sources) along the path through the back door is higher than that of the path through the front door. FIRM-based SLAP can incorporate this information in the planning phase in a principled way. This leads to a better judgement of how narrow the passages are. For example, in this experiment, although the path through the front door is shorter than the path through the back door, considering the information sources, the success probability of going through the back door is much greater than going through the front door. Such knowledge about the environment is reflected in the FIRM cost-to-go and success probability. As a result, it generates a policy that suits the application, taking into account the uncertainty and available information in the environment. Solving DP on the FIRM graph generates the feedback tree shown in Fig.$~$\ref{fig:firm-feedback}, which results in 88\% success probability.
\newcommand{\comparefigWid}{2.3in}
\begin{figure*}[h!]
	\centering
	\subfigure[]
	{\includegraphics[width=\comparefigWid]
		{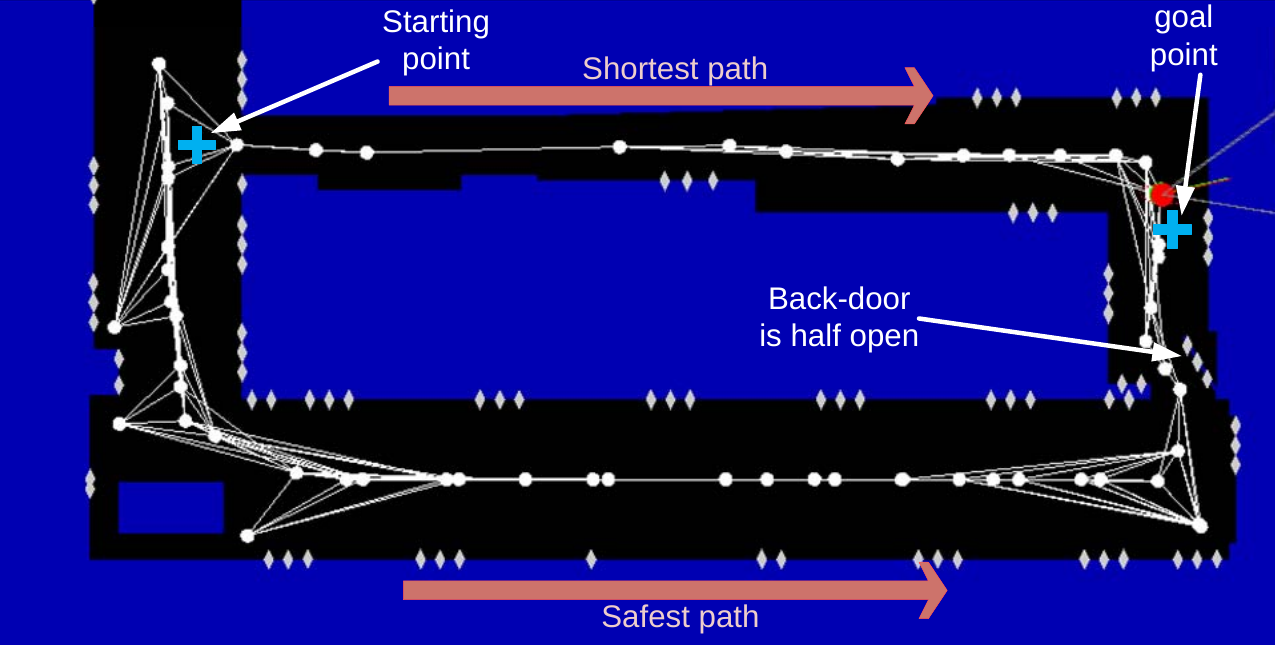}
		\label{fig:prm-env}
	}
	\subfigure[]
	{\includegraphics[width=\comparefigWid]
		{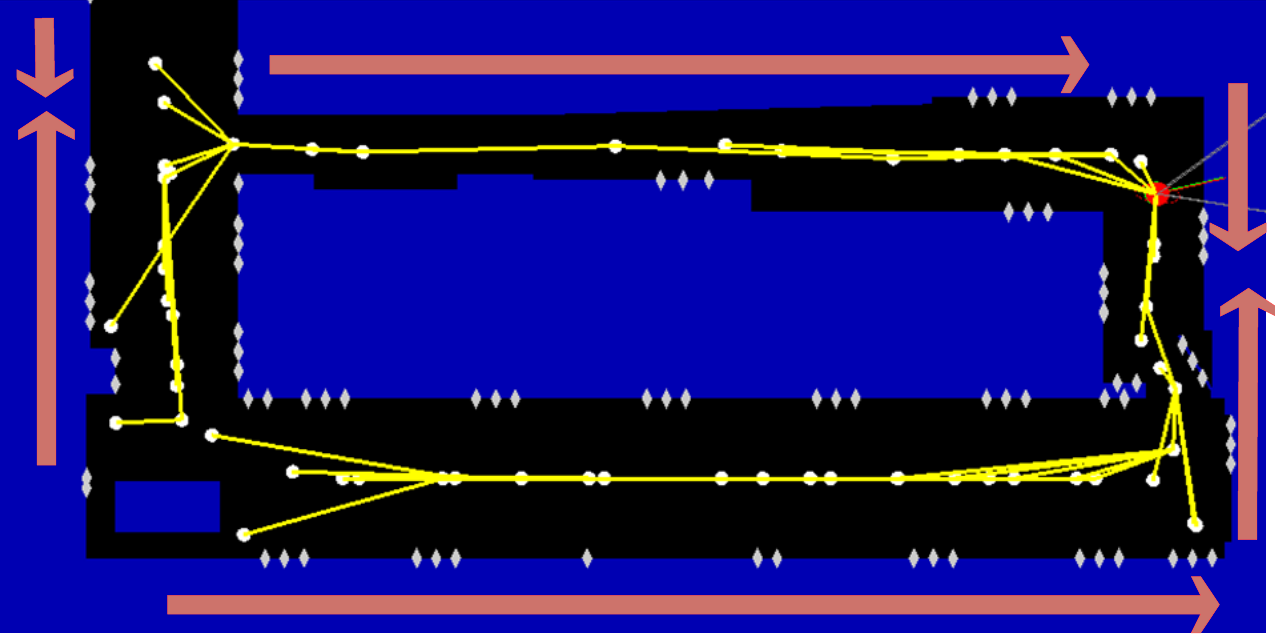}
		\label{fig:prm-feedback}
	}
	\subfigure[]
	{\includegraphics[width=\comparefigWid]
		{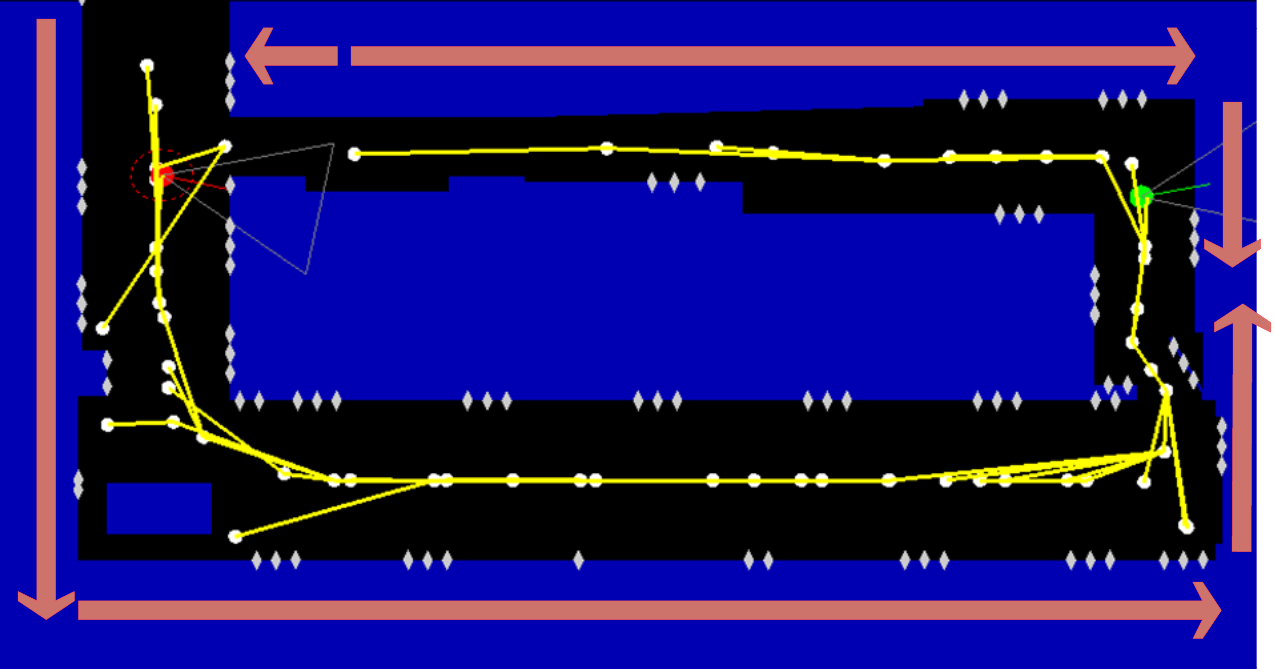}
		\label{fig:firm-feedback}
	}
	\caption{(a) Environment: obstacles (blue), free space (black), and landmarks (white diamonds). An MAPRM graph (white) approximates the connectivity of the free space. (b) The feedback tree (yellow), generated by solving DP on MAPRM. From each node there is only one outgoing edge, guiding the robot toward the goal. Arrows in pink coarsely represent the direction on which the feedback guides the robot. (c) The feedback tree (yellow), generated by solving DP on FIRM. Computed feedback tree guides the robot through the more informative regions, leading to more accurate localization and less collision probabilities. Arrows in pink coarsely represent the direction on which the feedback guides the robot.}
	\vspace{-15pt}
\end{figure*}

\subsection{Online Replanning aspect of SLAP} \label{subsec:Robustness-experiments}
In this section, we focus on the ``simultaneous" part in SLAP, which emphasizes the ability of the robot to replan after every localization update. In other words, in SLAP, the robot dynamically replans based on the new information coming from its sensors.

We study two important cases to illustrate the effect of online replanning. We first look into a challenging case where the obstacle map changes and possibly eliminates a homotopy class of solutions. This means the planner has to switch to a different homotopy class in real-time, which is a challenging situation for the state-of-the-art methods in the belief space planning literature. Second, we look into deviations from the path, where we focus on the kidnapped robot problem as the most severe case of deviation. The solution to this case can be applied to smaller deviations as well. Finally, we demonstrate the performance of the proposed method on a complex scenario that includes changes in the obstacles, deviations in the robot pose, online changes in goal location, etc.

\subsubsection{Changes in the obstacle map} \label{subsec:changingObstacles}
$ ~ $\\
Here, we show how enabling simultaneous planning and localization, online, can handle changes in the obstacle map.

In this paper, we assume no prior knowledge about the environment dynamics. As a result, we have a simple model for obstacle dynamics: All new obstacles will be added to the map with a large forgetting time of 10 minutes (i.e., almost-permanent). The only exception in this model is moving people: if a moving person is detected, a new obstacle will not be added to the map. Instead, we assume there exists a lower-level reactive behavior (e.g., stopping or dodging) in a subsumption-like architecture \cite{Brooks86} that suppresses the belief space planner in the vicinity of the moving person. Once the control is back to the SLAP layer, the robot might have deviated from its nominal plan and thus the SLAP layer has to replan to recover from such deviations.

Therefore, the method is very efficient in dealing with persistent/slow changes in the map (e.g., closed/open doors). An important aspect of the method is that it can deal with severe changes that might eliminate or create homotopy classes of solutions. Doors are an important example of this class. If the robot observes a closed door (which was expected to be open), it might have to \emph{globally} change the plan to get to the goal from a different passage. This is a very challenging problem for today's belief space planners.

As the first experiment, we consider the environment shown in Fig. \ref{fig:floorPlan}. The start and goal locations are shown in Fig. \ref{fig:openDoor}. We construct a PRM in the environment ignoring the changing obstacles (assuming all doors are open and there are no people in the environment). Then, we construct the corresponding FIRM and solve dynamic programming on it. As a result, we get the feedback tree shown in Fig. \ref{fig:openDoor} that guides the robot toward the goal through the back door of 407-area. However, the challenge is that the door may be closed when the robot reaches it, and there may be people moving in the environment. Moreover, for various reasons (such as motion blur in the image or blocked landmarks by people), the robot might misdetect landmarks temporarily during the run.\footnote{Designing perception mechanisms for obstacle detection is not a concern of this research, thus we circumvent the need for this module by sticking small markers with specific IDs on moving objects (doors or people's shoes).}
To handle such a change in the obstacle map and replan accordingly, we use the ``lazy feedback evaluation'' method outlined in Algorithm \ref{alg:lazy-feedback-eval}.

\ph{Results on physical robots}
Figure \ref{fig:closedDoor} shows a snapshot of the system during the operation when the robot detects the change signal, i.e., detects that the door is in a different state than its recorded situation in the map. As a result, the robot updates the obstacle map as can be seen in Fig. \ref{fig:closedDoor} (door is closed). Accordingly, the robot replans; Figure \ref{fig:closedDoor} shows the feedback tree resulting from the replanning. The new feedback guides the robot through the front door since it detects the back door is closed. \axx{The full video of this run provides much more details and is available in \cite{youtube-video-RHC-FIRM}}.

It is important to note that it is the particular structure of the proposed SLAP framework that makes such online replanning feasible. The graph structure of the underlying FIRM allows us to \textit{locally} change the collision probabilities in the environment without affecting the collision probability of the rest of the graph \axx{(i.e., properties of different edges on the graph are independent of each other; see Fig. \ref{fig:funnel-FIRM})}. This independence property is not present in the state-of-the-art belief planners such as BRM (Belief Roadmap Method) \cite{Prentice09} or LQG-MP \cite{Berg11-IJRR}. In those methods, collision probabilities and costs on \textit{all} edges need to be re-computed if a change in the obstacle map is detected. The general purpose planners such as ABT \cite{kurniawati-isrr13} are also not applicable to this setting due to the size of the problem and the need to recompute collision probabilities. In ABT, if the robot detects a change in the obstacle map in the vicinity of the robot, it needs to alter the uncertainty evolution in an ABT tree branch near the tree root (i.e., near the robot pose). This, in turn, will require the whole sub-tree (including collision probabilities) under the affected branch to be updated, which is not a real-time operation for long-horizon planning problems.

\begin{figure}[h!]
\centering
\subfigure[]{\includegraphics[width=3in]{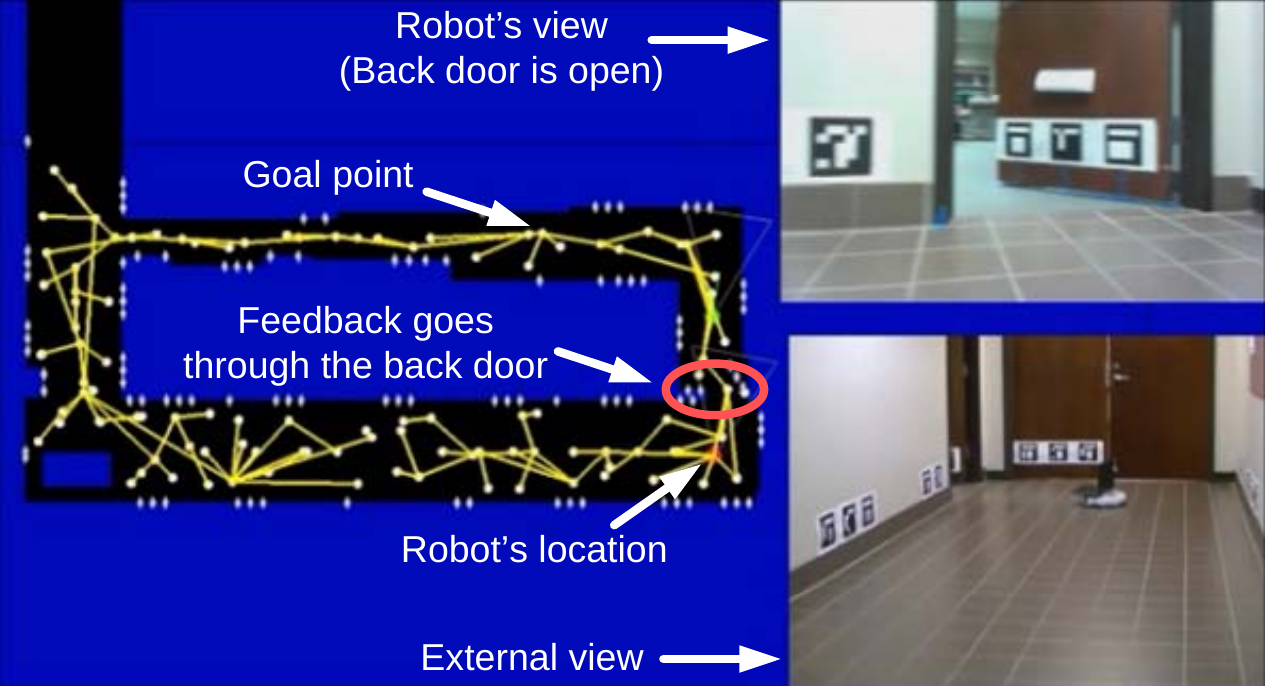}\label{fig:openDoor}}
\subfigure[]{\includegraphics[width=3in]{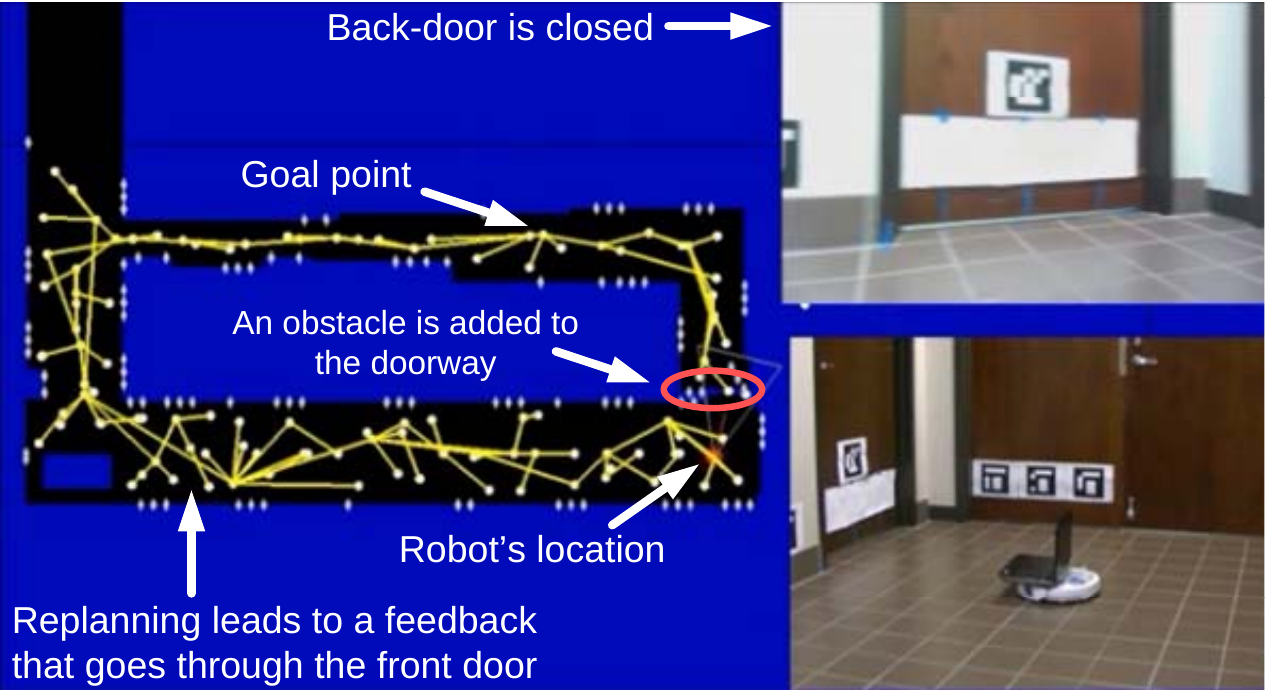}\label{fig:closedDoor}}
\caption{(a) The back door is open at this snapshot. The feedback guides the robot toward the goal through the back door. (b) The back door is closed at this snapshot. Robot detects the door is closed and updates the obstacle map (adds the door to the map). Accordingly robot replans and computes the new feedback. The new feedback guides the robot toward the goal through the front door.}
\label{fig:open-closed-door}
\end{figure}

\subsubsection{Deviations in the robot's pose} \label{subsec:KidnappedRobot}$ ~ $\\
In this subsection, we demonstrate how online replanning enables SLAP in the presence of large deviations in the robot's position. As the most severe form of this problem, we consider the \textit{kidnapped robot problem}. In the following we discuss this problem and challenges it introduces.

\ph{Kidnapped robot problem} An autonomous robot is said to be in the kidnapped situation if it is carried to an unknown location while it is in operation. The problem of recovering from this situation is referred to as the kidnapped robot problem \cite{Choset05}. This problem is commonly used to test the robot's ability to recover from catastrophic localization failures. This problem introduces different challenges such as (i) how to detect kidnapping, (ii) how to re-localize the robot, and (iii) how to control the robot to accomplish its goal. Our main focus, here, is on the third part, i.e., how to replan in belief space from the new belief resulted from the kidnapped situation. This is in particular challenging because large deviations in the robot's pose can globally change the plan and the homotopy class of the optimal solution. Therefore, the planner should be able to change the global plan online.

\ph{Detecting the kidnapped situation} To embed the kidnapped situation into the framework in a principled way, we add a boolean observation $ z^{lost} $ to the observation space. Let us denote the innovation signal as $ \widetilde{z}_{k} = z_{k}-z^{-}_{k}$ (the difference between the actual observations and predicted observation). Recall that in our implementation, the observation at time step $ k $ from the $ j $-th landmark is the relative range and bearing of the robot to the $ j $-th landmark, i.e., $ ^{j}z_{k}=({^j}r_{k},{^j}\theta_{k}) $. The predicted version of this measurement is denoted by $ ^{j}z^{-}_{k}=({^j}r^{-}_{k},{^j}\theta^{-}_{k}) $. We monitor the following measures of the innovation signal:
\begin{align}
\widetilde{r}_{k} = \max_{j}(|{^j}r_{k}-{^j}r_{k}^{-}|),~~~\widetilde{\theta}_{k} = \max_{j}(d^{\theta}({^j}\theta_{k},{^j}\theta_{k}^{-})),
\end{align}
where $ d^{\theta}(\theta,\theta') $ returns the absolute value of the smallest angle that maps $ \theta $ onto $ \theta' $. Passing these signals through a low-pass filter, we filter out the outliers (temporary failures in the sensory reading). Denoting the filtered signals by $ \overline{r}_{k} $ and $ \overline{\theta}_{k} $, if both conditions $ \overline{r}_{k}<r_{max} $ and $ \overline{\theta}_{k}<\theta_{max} $ are satisfied, then $ z^{lost}=0 $, otherwise $ z^{lost}=1 $. When $ z^{lost}=0 $, we follow the current rollout planner.
$ z^{lost}=1 $ means that the robot is constantly observing high innovations, and thus it is not in the location in which it believes to be (i.e., it is kidnapped). Once it is detected that the robot is kidnapped, we replace the estimation covariance with a large covariance (to get an approximately uniform distribution over the state space).

\ph{Replanning from the kidnapped situation}
The rollout-FIRM algorithm can inherently handle such replanning. In other words, the kidnapped situation, i.e., a deviated mean and very large covariance, will just be treated as a new initial belief and a new query. Accordingly, the FIRM rollout creates the best macro-action (i.e, graph edge or funnel) on the fly and execute it. Note that the belief deviation might change the optimal homotopy class and the plan should be updated globally, which makes it challenging for many POMDP planners. Using the proposed rollout planner, the robot just needs to go to a neighboring node from this deviated point. Since the underlying FIRM graph is spread in the belief space, the only required computation is to evaluate the cost of edges that connect the new starting point to the neighboring FIRM nodes.

To get safer plans when replanning from $ z^{lost}=1 $ situation, we update the rollout planning mechanism slightly: In addition to the new initial belief,	 we add one more belief node to the graph, as described below. 
Consider a new kidnapped initial belief $ b_{0}\equiv(\widehat{x}^{+}_{0},P_{0}) $. Let $ \delta $ denote the distance between the mean of this new belief $ \widehat{x}^{+}_{0} $ and the closest mean on the graph nodes. If $ z^{lost}=1 $ and $ \delta $ is not small, the mean belief is far from actual robot position and moving the robot $ \delta $ meters based on a wrong belief might lead to collision. To ensure that the proposed rollout-based planner can take this case into account, we add a FIRM node $ b' $ to the graph at (or very close to) the configuration point $ v = \widehat{x}^{+}_{0} $. 

In such a case starting from a deviated belief $ b_{0} $ with large covariance, the planner will take the robot to $ b' $ first, which is a belief with the same mean but smaller covariance (i.e., turning in-place or taking very small velocities). Planner will make this choice since moving to a farther node when the covariance is very large will lead to high collision probability; and this risk is reflected in the transition probabilities of the rollout edges.

\ph{Results on physical robots} Figure \ref{fig:kidnapping} shows a snapshot of a run that contains two kidnappings and illustrates the robustness of the planning algorithm to the kidnapping situation. 
The feedback tree (shown in yellow) guides the robot toward the goal through the front door. However, before reaching the goal point, the robot is kidnapped in the hallway and placed in an unknown location within 407-area (see Fig. \ref{fig:kidnapping}). In our implementations, we consider $ r_{max}=1 $ (meters) and $\theta_{max}=50 $ (degrees). The first jump in Fig. \ref{fig:innovation} shows this deviation. Once the robot recovers from being kidnapped (i.e., when both innovation signals in Fig. \ref{fig:innovation} fall below their corresponding thresholds), replanning from the new point is performed. This time, the feedback guides the robot toward the goal point from within 407-area. However, before the robot reaches the goal point, it is kidnapped again and placed in an unknown location (Fig. \ref{fig:kidnapping}). The second jump in the innovation signals in Fig. \ref{fig:innovation} corresponds to this kidnapping.
\begin{figure}[h!]
\centering
\includegraphics[width=3in]{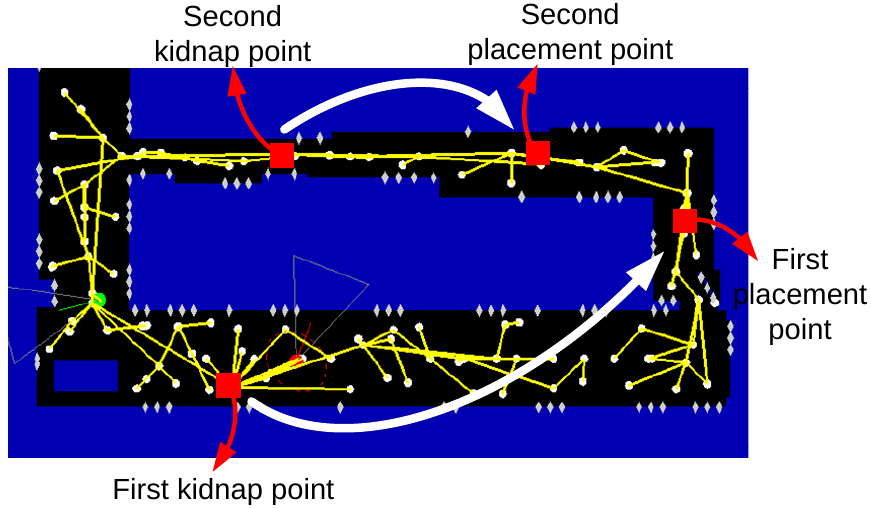}
\vspace{\gapBeforeCaption}
\caption{This figure shows the set up for the experiment containing two kidnapping.}
\label{fig:kidnapping}
\end{figure}
\begin{figure}[h!]
\centering
\includegraphics[width=3.5in]{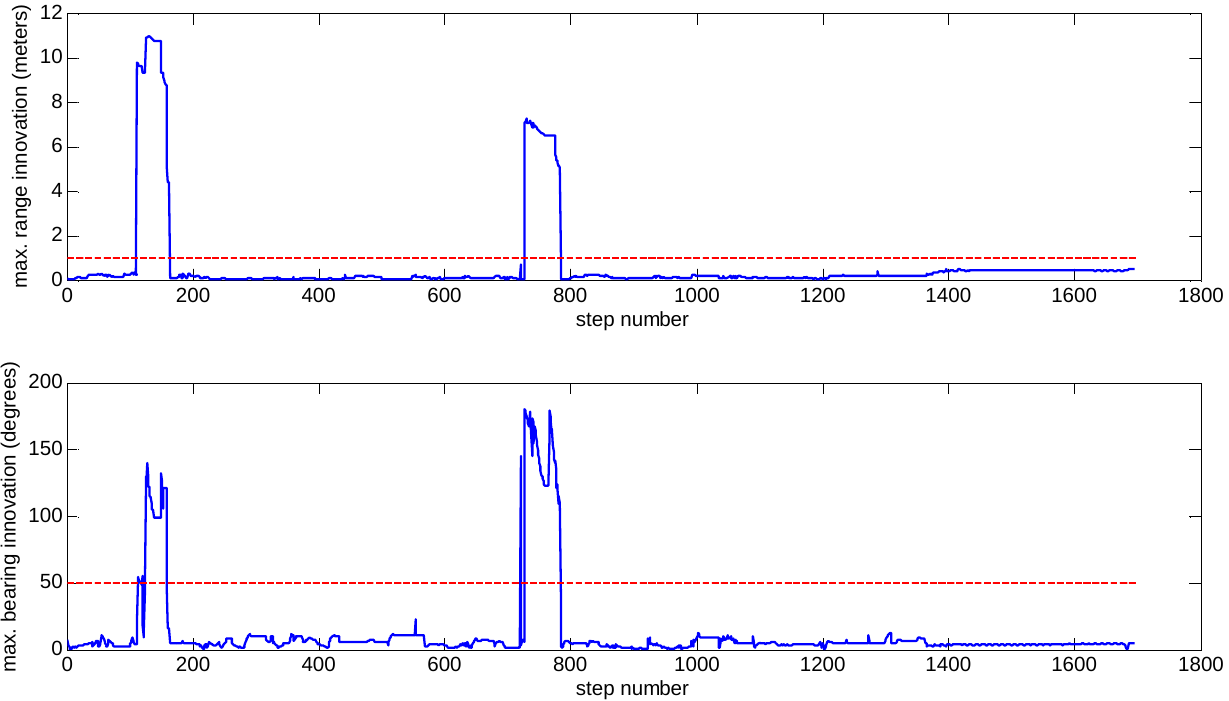}
\vspace{\gapBeforeCaption}
\caption{This figure shows the innovation signals $ \bar{r}_{k} $ and $ \bar{\theta}_{k} $, along with the thresholds $ r_{max} $ and $ \theta_{max} $ (dashed red lines). Large jumps correspond to the kidnapping events.}
\label{fig:innovation}
\end{figure}

\vspace{\gapBeforeSection}
\subsection{Longer and more complex experiments: Robustness to changing goals, obstacles, and to large deviations} \label{subsec:longer experiment}
In this section, we emphasize the ability of the system to perform long-term SLAP that consists of visiting several goals. The user(s) submit a new goal for the robot every time it reaches its current goal. While the robot needs to change the plan each time a new goal is submitted, it frequently encounters changes in the obstacle map (open/closed doors and moving people) as well as intermittent sensor failures and kidnapping situations. Thus, the ability to simultaneously replan online while localizing is necessary to cope with these changes. The video in \cite{youtube-video-RHC-FIRM} shows the robot's performance in this long and complex scenario.
\vspace{\gapBeforeFigure}
\begin{figure}[h!]
\centering
\includegraphics[width=3.4in]{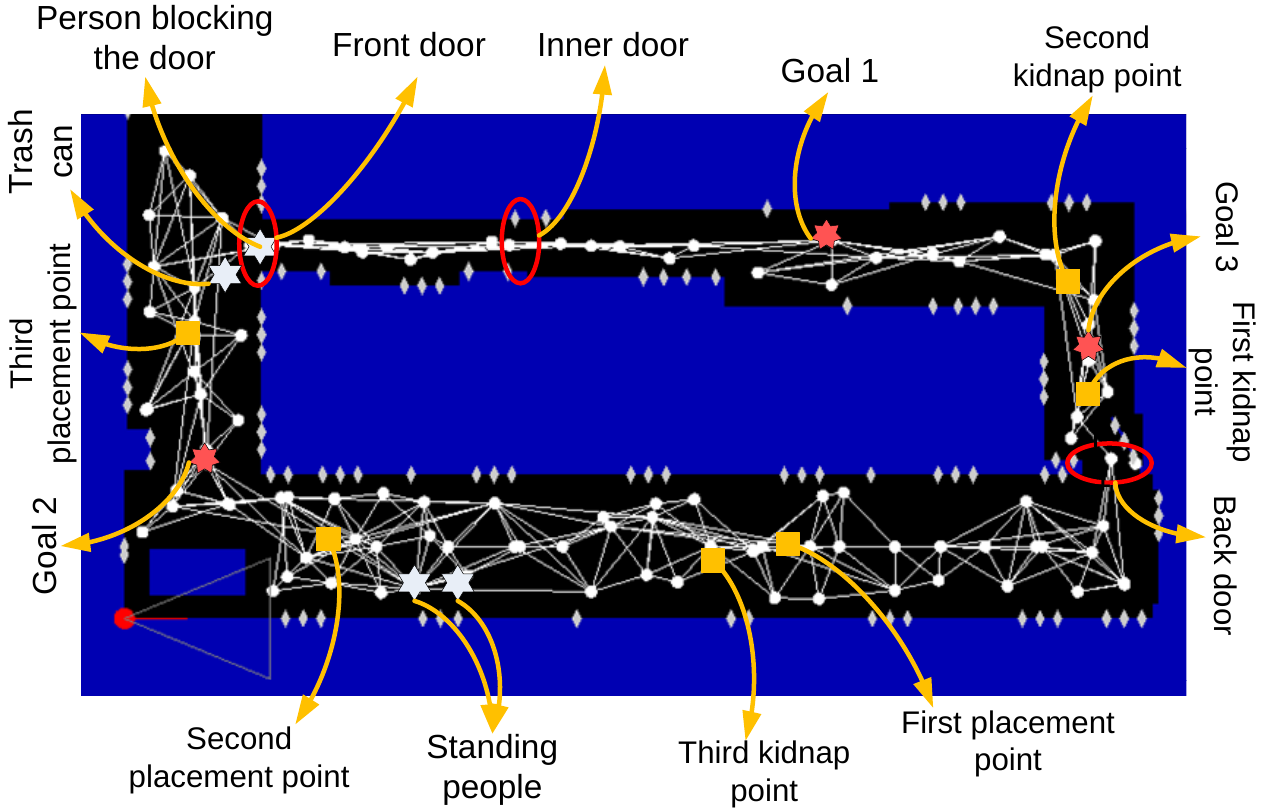}
\caption{The scenario for the long-term autonomous operation with a sequence of goals as well as various intermediate events and changes in the environment map.}
\label{fig:complex-map}
\end{figure}

In the following, we provide an itemized description of the specific steps involved in this run based on Fig. \ref{fig:complex-map} and accompanying video \cite{youtube-video-RHC-FIRM}: \textbf{1)} The robot begins at the starting point in the bottom corridor in Fig. \ref{fig:complex-map} and aims to reach Goal 1 shown in Fig. \ref{fig:complex-map}. 
FIRM returns a feedback tree that guides the robot through the back door of 407-area.
\textbf{2)} The robot passes through the narrow passage created by the half-open back door. However, before reaching the goal, it is kidnapped and is placed in an unknown location (shown in Fig. \ref{fig:complex-map} by the \textit{first placement point}.)
\textbf{3)} Observing new landmarks, the robot detects that it has been kidnapped. Accordingly, it adds a new node to the graph and replans online. The generated feedback guides the robot toward Goal 1 through the back door again.
\textbf{4)} However, in the meantime the back door has been closed. When the robot reaches the vicinity of the back door, it detects the closed door. Therefore, it updates its map by adding an obstacle at the doorway. Note that the robot will open the door (remove the obstacle) in its map after the forgetting time of 10 minutes. Accordingly, the robot replans a feedback tree that guides it through the front door to Goal 1.
\textbf{5)} Along the way, moving people are reactively avoided and the standing people and static obstacles such as a trash-can (see Fig. \ref{fig:complex-map}) are temporarily added to the map as obstacles. Replanning several times to cope with such changes, the robot goes through the front and inner doors and reaches Goal 1.
\textbf{6)} After reaching Goal 1, Goal 2 is submitted. Replanning leads to a new feedback tree that guides the robot through the inner door, and front door, to reach Goal 2.
\textbf{7)} However, as the robot reaches the vicinity of the inner door, it detects the door has been closed. Therefore, it updates its map and replans. The new feedback guides the robot to Goal 2 through the back door.
\textbf{8)} Before reaching the goal, the robot gets kidnapped at the ``second kidnap point''. The robot is placed at a really far-off point (the ``second placement point'' in Fig. \ref{fig:complex-map}). Once the robot detects it is kidnapped, it replans and moves very slowly to gather information. Detecting landmarks, it reduces its uncertainty and continues going toward the goal point.
\textbf{9)} After reaching Goal 2, the next goal is submitted and replanning leads to a feedback through the front door.
However, detecting a person standing in the doorway of the front door leads to a new plan going through the back door.
\textbf{10)} On the way to the back door, it is again displaced at the ``third kidnap point'' and placed at the ``third placement point''.
\textbf{11)} This time, due to the forgetting time, the replanning leads to a path through the front door (the person is not there any more).
\textbf{12)} Again, the robot follows the feedback and achieves Goal 3.

This long and complicated scenario demonstrates how simultaneous planning and localization can lead to robust behaviors in the presence of intermittent model discrepancies, changes in the environment, and large deviations in the robot's location. It is worth noting that online replanning in belief space is a challenge for the state-of-the-art methods in belief space as they mainly rely on structures that depend on the system's initial belief. Hence, when the system's localization pdf encounters a significant deviation, replanning from the new localization pdf requires the structure to be re-built, which is not a feasible operation online. However, constructing a query-independent graph (the underlying FIRM) allows us to embed it in a replanning scheme such as the proposed rollout policy technique and perform online replanning to enable SLAP.

\section{Method limitations and Future work} \label{sec:futureWork}
In this section, we recap the method assumptions and limitations mentioned in previous sections.

\ph{Restricted class of POMDPs}
As discussed in the previous sections, it is worth noting that the proposed method is not a general-purpose POMDP solver. It provides a solution for a class of POMDP problems (including SLAP) where one can design closed-loop controllers with a funneling behavior in belief space. In the proposed instantiation of FIRM in this paper, designing funnels requires knowledge about the closed-form dynamics and sensor model. Also, the system needs to be locally linearizable at belief nodes, and the noise is assumed to be Gaussian. Further, designing a funnel/controller in belief space requires the uncertainty to be over the part of the state space that is controllable (e.g., the ego-vehicle). For example, the proposed SLAP solution is not applicable to two-player games, where there is no direct control on the opponents motion or sensing. 

\ph{Combining FIRM with general-purpose online solvers}
Most of the general-purpose tree-based POMDP solvers can be combined with FIRM, where an online tree-based planner creates and searches the tree and use FIRM as the approximate policy (and cost-to-go) beyond the tree horizon. In particular, when the problem in hand does not satisfy the above-mentioned assumptions, one can approximate the original problem with a problem that does satisfy the above assumptions, create a FIRM graph, and use it as the base policy. Leveraging this base policy, one can use general-purpose online POMDP solvers in the vicinity of the current belief, such as Despot \cite{somani2013despot}, ABT \cite{kurniawati-isrr13}, POMCP \cite{silver2010monte}, AEMS \cite{Ross_2007_AEMS} that act on the original exact problem.

\ph{Dealing with dynamic environments}
In this paper, we assume no prior knowledge about the environment dynamics. As a result, the simple model we use for new obstacles is: they either enter the map with a large forgetting time of 10 minutes (e.g., doors) or avoided reactively (e.g., moving people). A more sophisticated and efficient solution can be obtained by learning and modeling changes over time \cite{Marthi_RSS12_PR2} or using some prior on the motion of moving objects. Incorporating such knowledge in the proposed planning framework is a subject of future work. 

\section{Conclusions} \label{sec:conclusions}
In this paper, we proposed a rollout-policy based algorithm for online replanning in belief space to enable SLAP. The proposed algorithm is able to switch between different homotopy classes of trajectories in real-time. It also bypasses the belief stabilization process of the state-of-the-art FIRM framework. A detailed analysis was presented, which shows that the method can recover the performance and success probability that was traded-off in the stabilization phase of FIRM. Further, by re-using the costs and transition probabilities computed in the offline construction phase, the method is able to enable SLAP, via online replanning, in the presence of changes in the environment and large deviations in the robot's pose. Via extensive simulations we demonstrated performance gains when using the rollout-based belief planner. As a key focus of the work, we also demonstrated the results of the proposed belief space planner on a physical robot in a real-world indoor scenario. Our experiments show that the proposed method is able to perform SLAP and guide the robot to its target locations while dynamically replanning in real-time and reacting to changes in the obstacles and deviations in the robot's state. Such replanning is an important ability for physical systems where stochasticity in the system's dynamics and measurements can often result in failure. Hence, we believe that the proposed SLAP solution takes an important step towards bringing belief space planners to physical applications and advances long-term safe autonomy for mobile robots.

\bibliographystyle{IEEEtran}
\bibliography{AliAgha}

\end{document}